\newtheorem{thm}{Theorem}[section]
\newtheorem{proposition}[thm]{Proposition}
\newtheorem{lemma}[thm]{Lemma}
\theoremstyle{remark}
\newtheorem{rem}[thm]{Remark}
\theoremstyle{definition}
\newtheorem{example}[thm]{Example}
\newtheorem{assumption}[thm]{Assumption}
\newcommand{\N}{\mathbb{N}}
\newcommand{\Z}{\mathbb{Z}}
\newcommand{\R}{\mathbb{R}}
\newcommand{\T}{\mathbb{T}}
\newcommand{\E}{\mathbb{E}}
\newcommand{\Prob}{\mathbb{P}}
\newcommand{\upchi}{{\text{\raisebox{2pt}{$\chi$}}}}
\renewcommand{\epsilon}{\varepsilon} % epsilon
\renewcommand{\Im}{\operatorname{Im}}
\DeclareMathOperator*{\argmin}{arg\,min}
\newcommand{\tr}{\operatorname{tr}} % trace
\def\<{\langle}
\def\>{\rangle}
\newcommand{\lY}{\langle}
\newcommand{\rY}{\rangle_{Y}}
\newcommand{\lK}{\langle}
\newcommand{\rK}{\rangle_{K^* \times K}}
\newcommand{\Se}{\Sigma_{\varepsilon}}
\newcommand{\Sx}{\Sigma_x}
\newcommand{\vH}{H} % the regular subspace of X
\newcommand{\vh}{h} % an element of H
\newcommand{\vz}{z} % an element of Z
\newcommand{\vzv}{{\mathbf{\vz}}} % a vector of elements of Z
\newcommand{\hOp}{\vh^{\star}}
\newcommand{\BOp}{B^\star}
\newcommand{\thetaOp}{\theta^\star}
\newcommand{\xOp}{x^\star}
\newcommand{\WOp}{W^\star}
\newcommand{\hU}{\widehat{\vh}_U}
\newcommand{\BU}{\widehat{B}_U}
\newcommand{\thetaU}{\widehat{\theta}_U}
\newcommand{\xU}{\widehat{x}_U}
\newcommand{\hS}{\widehat{\vh}_S}
\newcommand{\BS}{\widehat{B}_S}
\newcommand{\thetaS}{\widehat{\theta}_S}
\newcommand{\wh}[1]{\widehat{#1}}
\newcommand{\wt}[1]{\widetilde{#1}}
\newcommand{\discr}[1]{\boldsymbol{#1}}
\newcommand{\spec}[1]{\bar{#1}}
\newcommand{\specc}[1]{\bar{\bar{#1}}}
\newcommand{\hh}{H}
\newcommand{\eps}{\varepsilon}
\title{Learning the optimal Tikhonov regularizer for inverse problems}
\author{\hspace{-0.75cm}Giovanni S.~Alberti$^1$, Ernesto De Vito$^1$, Matti Lassas$^2$, Luca Ratti$^1$, Matteo Santacesaria$^1$}
\date{\small
$^1$ MaLGa Center, Department of Mathematics, University of Genoa, \\ Via Dodecaneso 35,  16146 Genova,  Italy \\ \texttt{giovanni.alberti},\texttt{ernesto.devito},\texttt{luca.ratti},\texttt{matteo.santacesaria@unige.it} \\
$^2$ Department of Mathematics and Statistics, University of Helsinki,\\ Gustaf H\"allstr\"omin katu 2,  
00014 Helsinki, Finland \\\texttt{matti.lassas@helsinki.fi}
}
\begin{document}

\maketitle

\begin{abstract}

In this work, we consider the linear inverse problem $y=Ax+\epsilon$, where $A\colon X\to Y$ is a known linear operator between the separable Hilbert spaces $X$ and $Y$, $x$ is a random variable in $X$ and $\epsilon$ is a zero-mean random process in $Y$. This setting covers several inverse problems in imaging including denoising, deblurring and X-ray tomography. Within the classical framework of regularization, we focus on the case where the regularization functional is not given a priori, but learned from data. Our first result is a characterization of the optimal generalized Tikhonov regularizer, with respect to the mean squared error. We find that it is completely independent of the forward operator $A$ and depends only on the mean and covariance of $x$. %\ms{[comment on the fact that this is slightly in contrast with classical regularization?]}. 
Then, we consider the problem of learning the regularizer from a finite training set in two different frameworks: one supervised, based on samples of both $x$ and $y$, and one unsupervised, based only on samples of $x$. In both cases we prove generalization bounds, under some weak assumptions on the distribution of $x$ and $\eps$, including the case of sub-Gaussian variables. Our bounds hold in infinite-dimensional spaces, thereby showing that finer and finer discretizations do not make this learning problem harder. The results are validated through numerical simulations.

\end{abstract}

\section{Introduction}
\label{sec:intro}
The aim of an inverse problem is to recover information about a physical quantity from indirect measurements. Virtually all imaging problems and  modalities fall within this framework, including denoising, deblurring \cite{engl1996}, computed tomography \cite{natterer2001mathematics} and magnetic resonance imaging \cite{epstein2007introduction}. Classical and general approaches to solve inverse problems consist in studying a variational (minimization) problem and can be divided into two classes. 

The first is based on the so-called regularization theory \cite{engl1996}. The aim is to recover a single, deterministic, unknown $x^\dag$ from noisy data $y = F(x^\dag) +\varepsilon$ by solving a minimization problem 
\begin{equation}\label{eq:varmin}
\min_{x} d_Y(F(x),y) + J(x)
\end{equation}
for a fidelity term $d_Y\colon Y \times Y \to \R $ and a regularization functional $J\colon X \to [0,+\infty)$. The latter is chosen in order to mitigate the ill-posedness of the map $F$, and  represent some a-priori knowledge on $x$. For instance, in the classical Tikhonov regularization we have $J(x) = \lambda \|x\|_X^2$ for $\lambda>0$.

The second approach considers the unknown as a random variable and is based on statistical/Bayesian methods \cite{kaipio2006,stuart-2010}. In this case one can recover the unknown using point estimators such as the maximum a posteriori (MAP) estimator, or extract richer information on the probability distribution of the unknown. In practice, the MAP estimator is found by solving a minimization problem of the same form as \eqref{eq:varmin}. The main difference is that the fidelity term and the regularizer are tailored to the statistical properties of the unknown and the noise, which are usually assumed to be known.

In recent years, machine learning techniques, and especially deep learning, have shaken the field of inverse problems by providing the basis for data-driven methods that have outperformed the state-of-the-art in most imaging modalities  \cite{arridge2019,2020-ongie-etal}. The most successful methods take inspiration from regularization theory \cite{2017-calatroni-etal,adler2017solving,2017-kobler-etal,adler2018learned,lunz2018adversarial,hammernik2018learning,hauptmann2020multi,2020-kobler-etal,li2020nett,mukherjee2020learned,bubba2021deep,gilton2021deep,mukherjee2021adversarially}: the physical model given by the forward map $F$ is assumed to be known while the regularizer (or the gradient updates related to it) is learned from a training set. While these approaches have shown impressive results in applications, a solid theory behind their successes is lacking. In view of the many sensitive applications where these methods are already being employed, e.g.\ in medical imaging, it is of utmost importance to fill this theoretical gap to better understand the strengths and limits of data-driven imaging modalities. Moreover, many inverse problems are naturally formulated in infinite-dimensional spaces \cite{engl1996,natterer2001mathematics,kaipio2006,epstein2007introduction,stuart-2010,monard2021consistent},  and their discretization must be carefully treated due to their ill-posedness \cite{kekkonen2014analysis}. Hence, it is of main interest to provide a theoretical analysis in the infinite-dimensional setting. \smallskip

In this work, we consider the problem of learning a regularizer for a linear inverse problem  in the framework of statistical learning theory, which is the natural setting to derive precise theoretical guarantees. This is part of the growing research area of learning an operator between infinite dimensional spaces \cite{tabaghi2019learning,de2019deep,lanthaler2021error,nelsen2021random,kovachki2021neural}. We study the case  where the measurements are modeled by a linear, possibly ill-posed, forward map, and the penalty term is a generalized Tikhonov regularizer \cite{tikhonov1963solution}.

More precisely, let $A\colon X\to Y$ be a bounded linear operator between the separable real Hilbert spaces $X$ and $Y$. We consider the inverse problem
\begin{equation}
y = A x + \epsilon ,
\label{eq:invprob} 
\end{equation}
which consists of the reconstruction of $x$ from the knowledge of $y$, where $\epsilon$ represents noise. We assume that $x$ is a random variable on $X$ with mean $\mu$ and covariance $\Sx$, and the noise $\varepsilon$, independent of the variable $x$, is a zero-mean random process on $Y$ with covariance $\Se$, (see Section~\ref{sec:stage} for more details). The operator $A$ is typically injective but its inverse may be unbounded: typical examples include denoising ($A$ is the identity) and deblurring ($A$ is a convolution operator). 

We aim to recover the unknown via generalized Tikhonov regularization. For a quadratic fidelity term $d_Y \colon Y \times Y \to \R$, the minimization problem
\begin{equation}\label{eq:almostthere0}
%\min_x \|\Se^{-1/2} (Ax-y) \|_Y^2 + \| B^{-1} (x-\vh)\|_X^2,
\min_x d_Y(Ax,y) + \| B^{-1} (x-\vh)\|_X^2,
\end{equation}
has a unique solution $R_{h,B}(y)$, called the generalized Tikhonov reconstruction. Here the pair $(h,B)$, where $h \in X$ and $B \colon X \to X$ is a positive bounded operator, is considered as a free parameter that we call \textit{regularization pair}. For example, the operator $B$ can be a smoothing operator in $L^2$, as a negative power of the Laplacian, so that $B^{-1}$ is a differential operator, yielding a classical regularization in Sobolev spaces.  We want to characterize and learn the optimal pair $(h,B)$ with respect to  the expected, or mean squared, error 
\[
L(h,B)=\mathbb{E}_{x,y}\|R_{h,B}(y)-x\|_X^2.
\]

Our first contribution is a complete characterization of the minimizers of $L$.
In particular we find that $(\mu, \Sx^{1/2})$ is a global minimizer (and is unique if $A$ is injective), which shows that the best regularizer is completely independent of the forward operator $A$ and depends only on the mean and covariance of $x$. This is consistent with the known linearized minimum mean squared error estimator in the finite-dimensional case \cite{kay1993fundamentals}, but it is usually not taken into account in the machine learning  approaches to inverse problems mentioned above. The extension to the infinite-dimensional case is not straightforward due to the presence of unbounded operators in inverse problems.

Since the computation of the expected error requires the full distribution $\rho$ of $x$ and $y$, we study how this can be approximated from a finite training set. We suppose to have access to a sample of $m$ pairs $\vzv= \{(x_j,y_j)\}_{j=1}^m$ drawn independently from $\rho$. In view of the results on the expected error, we consider two alternative ways to learn a regularizer pair $(\wh h_{\vzv},\wh B_{\vzv})$ from a training set $\vzv$: either by minimizing the empirical risk \cite{cucker2002mathematical} (supervised learning), or by using the empirical mean and covariance of $\{x_j\}$ (unsupervised learning).
% \begin{enumerate}
%     \item \textit{supervised learning}: the best parameter $(\wh h_{\vzv},\wh B_{\vzv})$ is determined by minimizing the empirical risk
%  $
%     \wh{L}(\vh,B) = \frac{1}{m} \sum_{j} \| R_{\vh,B} (y_j) - x_j \|_{X}^2;
%   $
%     \item \textit{unsupervised learning}: an approximation of the best parameter $(\mu,\Sx^{1/2})$ can be provided by means of the sample $\{x_j\}$ alone, by using the empirical mean and covariance.
% \end{enumerate}
In both cases, we prove generalization bounds for the sample error $|L(\wh\vh_{\vzv},\wh B_{\vzv}) - L(\mu,\Sx^{1/2})|$. Under some natural compactness assumptions on the class of regularization pairs, we prove that the sample error has the asymptotic behavior
\begin{equation} \label{eq:genasymp}
|L(\wh\vh_{\vzv},\wh B_{\vzv}) - L(\mu,\Sx^{1/2})| \lesssim \frac{1}{\sqrt m},
\end{equation}
with high probability, in both the supervised and the unsupervised approaches. We stress the point that these bounds hold in the infinite-dimensional setting, or, in other words, they do not depend on the discretization of the signal and of the measurements. 

Finally, we complement our theoretical findings with some numerical experiments. For a 1D denoising problem (i.e.\ $A$ is the identity operator) we replicate the asymptotic bound \eqref{eq:genasymp} at different discretization scales. Moreover, we find that the unsupervised approach, despite yielding the same rate \eqref{eq:genasymp}, clearly outperforms the supervised one.

The paper is organized as follows. In Section~\ref{sec:stage} we introduce the main notation and technical assumptions that will be used throughout the paper, including several examples. Section~\ref{sec:optimal} presents the main results for the minimization of the expected error, while Section~\ref{sec:sample} is devoted to the study of the sample error. Numerical experiments are the subject of Section~\ref{sec:numerics}. Concluding remarks and discussions are reserved for Section~\ref{sec:conclusions}.

\section{Setting the stage}
\label{sec:stage}

%In this section, we introduce all the ingredients of our construction precisely. 
%Let $X$ and $Y$ be separable real Hilbert spaces and $A\colon X\to Y$ be a bounded linear operator, and denote its adjoint by $A^*$. We consider the inverse problem \eqref{eq:invprob}.

\subsection{The random objects $x$ and $\epsilon$}

As mentioned in the introduction, we formulate \eqref{eq:invprob} as a statistical inverse problem, where $x$ and $\epsilon$ are not deterministic but random. Let us start with the description of the prior on $x$.

\begin{assumption}
Let $x$ be a random variable on a probability space $(\Omega,\Prob)$ taking values in $X$. More precisely, 
 $x$ is square-integrable, so that its expectation $\mu \in X$  and its covariance $\Sx\colon X\to X$  is a trace-class operator. We assume that $\Sx$ is injective.
\label{ass:x_rand}
\end{assumption}
 Without loss of generality we can always assume that $\Sx$ is injective (i.e., $x$ is non-degenerate), since otherwise it would be enough to consider the inverse problem only in $(\ker \Sx)^\perp\subsetneq X$.

Let us consider some common examples of priors arising in inverse problems.
\begin{example}[Gaussian random variables]\label{ex:gauss}
A general class of priors arises when considering Gaussian random variables. We recall that $x$ is a Gaussian random variable if for all $v\in X$, $\langle x,v\rangle$ is a real Gaussian random variable and, by Fernique's theorem,   $x$ is square-integrable \cite{bogachev1998Gaussian}. Since  $\Sx\colon X\to X$ is self-adjoint, positive and trace-class, we can write its singular value decomposition (SVD) as
\[
\Sx v = \sum_{k} \sigma_k^2 \langle v,e_k\rangle_X e_k,\qquad v\in X,
\]
where $\{e_k\}_k$ is an orthonormal basis of $X$, $\sum_k \sigma_k^2 <+\infty$ and $\langle x,e_k\rangle\sim \mathcal{N}(\mu_k,\sigma_k)$, where $\mu=\sum_k\mu_k e_k$ is the mean of $x$.
 In other words,
 \begin{equation}\label{eq:x}
x=\mu+\sum_k \sigma_k a_k e_k,
 \end{equation}
where $a_k$ are i.i.d.\ standard Gaussian variables. This shows that, in infinite dimension, since $\sigma_k\to 0$, the variations of $x$ along the direction $e_k$ become smaller and smaller as $k\to+\infty$.
\end{example}
This abstract construction reduces to a smoothness prior by suitably choosing the covariance operator.

\begin{example}[Smoothing priors]\label{ex:laplacian}
Let $X=L^2(\mathbb{T}^d)$, where $\mathbb{T}^d=\R^d/\Z^d$ is the $d$-dimensional torus and $d\ge 1$. Let $\Delta$ denote the Laplace-Beltrami operator on $\mathbb{T}^d$, which is simply the classical Laplace operator on $[0,1]^d$ with periodic boundary conditions. For $s>\frac d2$, the operator
\[
(I-\Delta)^{-s}\colon L^2(\mathbb{T}^d)\to L^2(\mathbb{T}^d)
\]
is trace class, and can be used to define the Gaussian distribution $\mathcal{N}(0,(I-\Delta)^{-s})$. In the notation of Example~\ref{ex:gauss}, the SVD of $(I-\Delta)^{-s}$ is given by $\sigma_k^2=(1+4\pi|k|^2)^{-s}$ and $e_k(t)=e^{2\pi i k\cdot t}$ with $k\in\Z^d$. This enforces a smoothness prior on $x$, depending on the parameter $s$, which controls the decay of the Fourier coefficients of $x$ (see \cite[Appendix B]{saksman2009discretization} and \cite{bogachev1998Gaussian}).
\end{example}

Let us now discuss the model for the noise $\epsilon$.
\begin{assumption}\label{ass:e_rand}
Let $\varepsilon=(\eps)_{v\in Y}$ be a  (linear) random process on $Y$ with zero mean and such that its covariance  $\Se \colon Y\to Y$ defined by $\mathbb{E}[\varepsilon_v \varepsilon_w] = \langle \Se v,w\rangle_Y$
% \[ \mathbb{E}[\varepsilon_v \varepsilon_w] = \langle \Se v,w\rangle_Y, \qquad v,w\in Y,\]
is bounded and injective.
\end{assumption}
Notice that the injectivity of $\Se$ implies that noise is present in all directions of $Y$, whereas the boundedness of $\Sigma_\eps$  allows us to regard the random process $\eps$ as a bounded (linear) operator from $Y$ into  $L^2(\Omega,\Prob)$.
The reader is referred to \cite{franklin1970well} and to Appendix \ref{app:processes} for additional details on random processes in Hilbert spaces. We mention here some basic properties that will be needed for the following discussion.
\begin{rem}\label{rem:Kiota}
Even if $\epsilon$ may not belong to $Y$ almost surely (see Example~\ref{ex:white} below), it is always possible to view it as an element of a larger space, as we now discuss.
Let $K$ be a separable Hilbert space and $\iota\colon K\to Y$ be an injective linear map such that $\iota(K)$ is dense in $Y$ and
\begin{equation}\label{eq:iotaSe}
\iota^*\circ \Se \circ\iota \colon K\to K^*\;\text{is trace-class,}\footnote{It is worth observing that $K$ and $\iota$ always exist: it is enough to choose them, independently of $\epsilon$, so that the embedding $\iota\colon K \rightarrow Y$ is  Hilbert-Schmidt, which implies that $\iota^*\circ \Se \circ\iota$ is trace class, since $\Se$ is bounded.}
\end{equation}
where 
we identify $Y^*=Y$, but we do not identify $K^*$ with $K$, and we regard $\iota^*$ as the canonical embedding  $Y\to K^*$.   The restriction of $\varepsilon$ to $K$ is a Hilbert-Schmidt operator from $K$ into $L^2(\Omega,\Prob)$, hence there exists a unique square-integrable random vector $\eps$ taking values in $K^*$ such that
$
\eps_v= \lK \eps, v \rK
$
for $v\in K$.
It is easy to show that the random vector $\eps$ has zero mean and its covariance
operator is  $\iota^*\circ \Se \circ\iota: K \rightarrow K^*$, since
\begin{equation}\label{eq:newcov}
\mathbb{E}[\lK \varepsilon, v \rK \lK \varepsilon, w \rK] =
\mathbb{E}[ \varepsilon_{\iota( v)}  \varepsilon_{\iota(w)} ]=
\langle \Se \iota v, \iota w \rangle_Y, \qquad v,w \in K.
\end{equation}
\end{rem}
A random variable is always a random process, as we now describe.
\begin{example}
A simple example of this abstract construction consists of considering a random variable $\epsilon$. In this case $\Se$ is trace-class itself, so that we can choose $K=Y$ and $\iota=I$ and $\epsilon\in Y$ almost surely. As discussed in Example~\ref{ex:gauss} for Gaussian variables, this means that, since $\sigma_k\to0$, the expected amplitude  of the noise in the direction $e_k$ goes to $0$ as $k\to \infty$. For instance, the choice of $\Se=(I-\Delta)^{-s}$  as in Example~\ref{ex:laplacian} corresponds to smaller noise levels for higher Fourier modes.
\end{example}
A random process allows for considering noise that is uniformly distributed in all directions.

\begin{example}[White noise]\label{ex:white}
The Gaussian white noise $\varepsilon$ is a random process  on $Y$ such that for any $v \in Y$ it holds that $\varepsilon_v$ is a standard Gaussian variable (mean $0$ and variance $1$), so that  $\Se= I$. Heuristically, in the notation of Example~\ref{ex:gauss}, this corresponds to $\sigma_k=1$ for every $k$, and so by \eqref{eq:x}
\[
\epsilon=\sum_k a_k e_k,\qquad a_k\sim \mathcal{N}(0,1),
\]
so that $\epsilon\notin Y$ with probability $1$ whenever $Y$ is infinite dimensional (see, e.g., \cite{franklin1970well}). In view of Remark~\ref{rem:Kiota}, it is possible to consider a larger space $K^*$ so that $\epsilon\in K^*$ almost surely. For concreteness of explanation, we focus on the case when $Y=L^2(\mathbb{T}^d)$, a typical framework in imaging. A possible choice for the space $K$ is the Sobolev space $H^s(\mathbb{T}^d)$ with $s>d/2$ (see \cite{kekkonen2014analysis}), so that the canonical embedding $\iota\colon H^s(\mathbb{T}^d)\to L^2(\mathbb{T}^d)$ is a Hilbert-Schmidt operator, hence \eqref{eq:iotaSe} is satisfied and $\epsilon$ can naturally be seen as an element of $H^{-s}(\mathbb{T}^d)=H^s(\mathbb{T}^d)^*$.
\end{example}

\subsection{The new formulation of the inverse problem and of the regularization}

As a consequence of Assumptions \ref{ass:e_rand}, since $\epsilon$ may not belong to $Y$, the inverse problem \eqref{eq:invprob} must be interpreted from a different perspective, namely considering $y$ as the stochastic process $
y_v = \lY Ax, v\rY + \varepsilon_v 
$ on $Y$
or by formulating the problem as an equation in $K^*$:
\[
y =  \iota^* Ax + \varepsilon,
\]
i.e.\ $\lK y, v \rK = \lY Ax,\iota(v) \rY + \lK \varepsilon, v \rK$ for $v \in K$, where $\iota^*\colon Y\to K^*$ is the natural embedding. We denote  the joint probability distribution of $(x,y)$ on $X\times K^*$ by $\rho$.

We now provide a consistent formulation of the quadratic functional appearing in \eqref{eq:almostthere0}. The goal is to replicate what would be the natural choice in a finite dimensional context, i.e.
\begin{equation}\label{eq:almostthere}
 \min_x \|\Se^{-1/2} (Ax-y) \|_Y^2 + \| B^{-1} (x-\vh)\|_X^2.
\end{equation}
Unfortunately, if $Y$ is infinite dimensional, the first factor is in general not well-defined, since for example for Gaussian processes $\epsilon\in\Im\Se^{1/2}$ with probability $0$ \cite{bogachev1998Gaussian}. Thus, we need to write this minimization problem in a different formulation.
We start by stating the assumptions we make on $B$.
\begin{assumption}\label{ass:SigmaAB}
Let us assume that  $B\colon X \rightarrow X$ is a bounded positive operator such that
\begin{equation}\label{eq:ass23}
  \operatorname{Im} (AB)\subseteq \operatorname{Im}(\Se\iota).
\end{equation}
\end{assumption}
 It is worth observing that, whenever $Y$ is infinite dimensional, then $\operatorname{Im}(\Se\iota)\subsetneq Y$ since $\Se\iota$ is compact. Furthermore, \eqref{eq:ass23} requires, in some sense, the operator $AB\colon X\to Y$ to be at least as ``smoothing'' as the operator $\Se\iota\colon K\to Y$. For instance, in the case when $AB$ and $\Sigma_\varepsilon\iota$ have the same left-singular vectors, this condition means that the singular values of $AB$  should go to $0$ at least as fast as the singular values of $\Sigma_\varepsilon\iota$. 
 
%  For instance, in the particular case when $K=Y$, $\iota=I_Y$ and
% \[
% Ax = \sum_k a_k \langle x,x_k\rangle_X y_k,\qquad 
% Bx = \sum_k b_k \langle x,x_k\rangle_X x_k,\qquad 
% \Se y = \sum_k s_k \langle y,y_k\rangle_Y y_k,\qquad 
% \]
% condition \eqref{eq:ass23} is equivalent to requiring  $a_kb_k\le Cs_k$ for some $C>0$.

%\ga{[Insert Matti's example where this assumption is not verified?]}

We are now ready to rewrite the functional  in \eqref{eq:almostthere}. The penalty term involving $B^{-1}(x-h)$ suggests the change of variables  $x=h+Bx'$. The corresponding minimization problem for $x'$ reads
\begin{equation}\label{eq:almostthere1}
 \min_{x'\in X} \|\Se^{-1/2} (A(h+Bx')-y) \|_Y^2 + \| x'\|_X^2.
\end{equation}
This expression does not require the injectivity of $B$.
By formally expanding the first factor we obtain
\[
   % \|\Se^{-1/2} (ABx'-(y-Ah)) \|_Y^2    \\=
    \|\Se^{-1/2} ABx' \|_Y^2 -2\langle  \Se^{-1/2} (y-Ah),\Se^{-1/2} ABx'\rangle_Y+ \|\Se^{-1/2} (y-Ah) \|_Y^2 .
\]
Let us analyze these three terms separately:
\begin{enumerate}[\hspace{0mm} 1.]
    \item Since $\Se$ is self-adjoint we have $\|\Se^{-1/2} ABx' \|_Y^2=\langle \Se^{-1} ABx',AB'x\rangle_Y$, which is well-defined  because $\operatorname{Im} (AB)\subseteq \operatorname{Im}(\Se)$ thanks to \eqref{eq:ass23}.
    \item The second factor is formally equivalent to $-2\langle   y-Ah,\Se^{-1} ABx'\rangle_Y$, which is not well-defined as scalar product in $Y$ since $y$ may not belong to $Y$. However, since $y\in K^*$ and $\Se^{-1} ABx'\in\iota(K)$ by \eqref{eq:ass23}, this scalar product can be interpreted as the duality pairing
    $
    -2\lK y-\iota^*Ah, \iota^{-1}\Se^{-1}ABx' \rK
    $.
    \item The third factor $\|\Se^{-1/2} (y-Ah) \|_Y^2 $ is independent of $x$, and so it irrelevant for the minimization task: thus, we remove it.  This is a key step in  infinite dimension, since, as mentioned above,  $\|\Se^{-1/2} y \|_Y^2=+\infty$ almost surely. See  \cite[Remark~3.8]{stuart-2010} for additional details on this aspect.
\end{enumerate}
This discussion motivates the introduction of the following functional, formally equivalent to \eqref{eq:almostthere1}.
For  $y \in K^*$, we define the regularized
 solution of the inverse problem as $\hat x=h+B\hat x'$, where
\begin{equation}
\hat x' = \argmin_{x' \in X} \| \Se^{-1/2} ABx' \|_Y^2 - 2 \lK y-\iota^*Ah, (\Se\iota)^{-1}ABx' \rK + \| x'\|_X^2.
\label{eq:hatx}
\end{equation}
The minimizer exists and is unique, and gives the following expression for $\hat x$:
\begin{equation}
R_{h,B}(y):=h+B\hat x' =h+ B (BA^* \Se^{-1} A B + I)^{-1}((\Se\iota)^{-1}AB)^* (y-\iota^*Ah),
    \label{eq:affine}
\end{equation}
where $R_{h,B}\colon K^*\to X$ is a bounded affine map.
See Proposition~\ref{prop:quad_min} for all the details. Note that $((\Se\iota)^{-1}AB)^*\colon K^*\to X$ is well-defined thanks to \eqref{eq:ass23}.

\section{The optimal regularizer}
\label{sec:optimal}

The regularization approach described so far is based on the choice of $h$ and $B$. In classical regularization theory, these are chosen depending on the prior knowledge of the problem under consideration. In the data-driven approach we consider in this work, $h$ and $B$ are learned from training data.
In this section, we let learning come into play and consider the problem of determining the optimal $h$ and $B$, under the assumptions that the distributions of $x$ and $\epsilon$ are fully known. More precisely, this allows for the explicit computation of the expected error
\[
L(h,B)=\mathbb{E}_{(x,y)\sim\rho}\|R_{h,B}(y)-x\|_X^2
=\mathbb{E}_{x,\epsilon}\|R_{h,B}(\iota^*Ax+\epsilon)-x\|_X^2,
\]
which quantifies the mean square error that our regularization functional \eqref{eq:hatx} yields. Optimal choices of $\hOp$ and $\BOp$ are those that minimize $L(h,B)$, and are characterized in the following result.

\begin{thm}\label{prop:quadratic_target_inf2}
Let $X$ and $Y$ and be separable real Hilbert spaces, $A\colon X\to Y$ be a bounded linear operator, $x$ and $\epsilon$ satisfy Assumptions~\ref{ass:x_rand} and \ref{ass:e_rand} and be independent, and $K$ and $\iota$ be as in Remark~\ref{rem:Kiota}.
Suppose  $B = \Sx^{1/2}$ satisfies Assumption~\ref{ass:SigmaAB}.

Consider the minimization problem
\begin{equation}\label{eq:min2}
\min_{h,B} \{ \E_{(x,y)\sim \rho}\left[ \| R_{h,B} (y) - x \|_{X}^2\right] \} ,
\end{equation}
where the minimum is taken over all $B$ satisfying Assumption~\ref{ass:SigmaAB} and over all $h\in X$. Then $(\BOp,\hOp)$ is a global minimizer of \eqref{eq:min2} if and only if
\[
\hOp=\mu\qquad\text{and}\qquad %\gray{B^2=\Sx \; \text{in $(\ker A)^\perp$ }}
B^2|_{(\ker A)^\perp} = \Sx|_{(\ker A)^\perp}.
\]
In particular,  $\BOp=\Sx^{1/2}$ is always a global minimizer, and is unique if $A$ is injective. Furthermore, for every minimizer $(\hOp,\BOp)$, the corresponding reconstruction map is independent of $\BOp$ and, for all  $y\in K^*$, is given by
\begin{align}
R^\star(y) &=    \mu + \Sx^{1/2}(\Sx^{1/2}A^* \Se^{-1} A \Sx^{1/2} + I_X)^{-1}((\Se\iota)^{-1}A\Sx^{1/2})^* (y -\iota^*A \mu)\label{eq:R1}\\
&=\mu + \Sx A^*(\iota^* (A\Sx A^* + \Se)  )^{-1}(y-\iota^*A\mu). \label{eq:R2}
\end{align}
\end{thm}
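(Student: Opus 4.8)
The plan is to exploit the affine structure of $R_{h,B}$ to split $L$ into a bias term and a variance term, and to minimise the two separately. Writing the reconstruction as $R_{h,B}(y)=h+W_B(y-\iota^*Ah)$ for its bounded linear part $W_B\colon K^*\to X$ and inserting $y=\iota^*Ax+\epsilon$, I would expand $R_{h,B}(y)-x=(I-W_B\iota^*A)(h-\mu)+(W_B\iota^*A-I)(x-\mu)+W_B\epsilon$. Since $x$ and $\epsilon$ are independent and centred about $\mu$ and $0$, taking the expectation of the squared norm annihilates the cross terms and gives $L(h,B)=\|(I-W_B\iota^*A)(h-\mu)\|_X^2+\mathcal V(B)$, where $\mathcal V(B)=\tr\big((W_B\iota^*A-I)\Sx(W_B\iota^*A-I)^*\big)+\tr\big(W_B(\iota^*\Se\iota)W_B^*\big)$ is finite because $\Sx$ and $\iota^*\Se\iota$ are trace class. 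Crucially, $\mathcal V$ depends on $(h,B)$ only through $B$, so the minimisation decouples: for every $B$ the nonnegative bias term is killed by the choice $h=\mu$, whence $\min_{h,B}L=\min_B\mathcal V(B)$.

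Next I would show that $\mathcal V$ is a convex quadratic in $W_B$ with a unique minimiser, and identify it. A resolvent (push-through) identity shows that $W_B$ depends on $B$ only through $N=B^2$ and equals, formally, $W_N:=NA^*(ANA^*+\Se)^{-1}$; accordingly $\mathcal V(B)=J(W_N)$ with $J(W)=\tr(WSW^*)-2\tr(W\iota^*A\Sx)+\tr\Sx$ and $S=\iota^*A\Sx A^*\iota+\iota^*\Se\iota$. Completing the square yields $J(W)=\tr\big((W-W^\star)S(W-W^\star)^*\big)+\text{const}$ with $W^\star=\Sx A^*\iota\,S^{-1}$, and since $S$ is positive and injective, $W^\star$ is the unique global minimiser. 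A second push-through identifies $W^\star$ with the linear part of \eqref{eq:R1} and with $\Sx A^*(\iota^*(A\Sx A^*+\Se))^{-1}$, i.e.\ \eqref{eq:R2}. Because $B=\Sx^{1/2}$ is admissible by hypothesis and realises $W_N=W^\star$, the infimum is attained and equals $J(W^\star)$.

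It remains to characterise the minimisers. From $\mathcal V(B)=J(W_{B^2})\ge J(W^\star)$, with equality iff $W_{B^2}=W^\star$, optimality of $B$ reduces to the operator equation $W_N=W_{\Sx}$. To solve it I would post-compose with $A$ and use the identity $AW_N=I-\Se(ANA^*+\Se)^{-1}$: injectivity of $\Se$ forces $ANA^*=A\Sx A^*$, after which $W_N=W_{\Sx}$ collapses to $NA^*=\Sx A^*$, i.e.\ $(N-\Sx)$ vanishes on $\overline{\Im A^*}=(\ker A)^\perp$; this is precisely $B^2=\Sx$ on $(\ker A)^\perp$, and when $A$ is injective it forces $B^2=\Sx$ on all of $X$, hence $B=\Sx^{1/2}$. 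For the offset, at any optimal $B$ one has $W_B=W^\star$, and a short argument shows that $I-W^\star\iota^*A$ is injective: if $(I-W^\star\iota^*A)v=0$, applying $\iota^*A$ and using $\iota^*A\,W^\star=I-(\iota^*\Se\iota)S^{-1}$ together with injectivity of $\iota^*\Se\iota$ gives $\iota^*Av=0$, whence $v=W^\star\iota^*Av=0$; thus the bias vanishes iff $h=\mu$. Since every minimiser has $h=\mu$ and $W_B=W^\star$, the reconstruction $R^\star(y)=\mu+W^\star(y-\iota^*A\mu)$ is independent of the chosen $B^\star$ and is given by \eqref{eq:R1}--\eqref{eq:R2}.

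The algebra above is elementary; the real difficulty is justifying it in infinite dimensions, where $\Se^{-1}$ and $A^*\Se^{-1}A$ are generally unbounded and the expressions $(ANA^*+\Se)^{-1}$, $NA^*(\cdots)^{-1}$ and all push-through identities are only formal until read through the embedding $\iota$ and Assumption~\ref{ass:SigmaAB}. I expect the delicate points to be: checking that $W_N$ is a well-defined bounded operator $K^*\to X$ and that the relevant sum is boundedly invertible after composition with $\iota$ and $\iota^*$; verifying that the push-through and the identity for $AW_N$ survive this regularisation; and reconciling the normal-equation form $W^\star=\Sx A^*\iota\,S^{-1}$ with the Wiener form \eqref{eq:R2}. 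All of these should follow from $\Im(AB)\subseteq\Im(\Se\iota)$, which is exactly the hypothesis making $((\Se\iota)^{-1}AB)^*$ and the resolvents meaningful.
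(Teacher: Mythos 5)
Your plan reproduces the paper's architecture: the bias--variance split in which the bias is killed by $h=\mu$, the normal equation $W\,\iota^*(A\Sx A^*+\Se)=\Sx A^*$ for the optimal linear part (you reach it by completing the square, the paper by a Gateaux-derivative computation and rank-one test directions), and the identification of its unique bounded solution with $W_{\Sx^{1/2}}$ via the push-through identity, which is exactly the paper's Proposition~\ref{prop:alternative}. Those parts are sound, and the well-definedness issues you defer there are indeed handled through Assumption~\ref{ass:SigmaAB} and the closed graph theorem, as you anticipate.

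The genuine gap is in the uniqueness ("only if") step, where you solve $W_N=W_{\Sx}$ by writing $AW_N\iota^*=I-\Se(ANA^*+\Se)^{-1}$ and asserting that injectivity of $\Se$ forces $ANA^*=A\Sx A^*$. In infinite dimensions this cancellation is not licensed by injectivity of $\Se$: the operators $(ANA^*+\Se)^{-1}$ and $(A\Sx A^*+\Se)^{-1}$ are unbounded with \emph{different} dense domains, so the identity you want to cancel only makes sense for bounded extensions. Rigorously, with $P=ANA^*+\Se$, $Q=A\Sx A^*+\Se$ and $T=I_Y-AW\iota^*$, where $W=W_N=W_{\Sx}$ is the common bounded operator on $K^*$, what one gets is $TP=\Se=TQ$ on $Y$, hence $T\,A(N-\Sx)A^*=0$; to conclude you must show that the bounded operator $T$ is injective on all of $Y$, and injectivity of the densely defined restriction $\Se Q^{-1}|_{\Im Q}$ does \emph{not} pass to its bounded extension in general. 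The claim is true under the theorem's hypotheses, but it needs an actual argument rather than a routine regularisation: for instance, taking adjoints in $TQ=\Se$ gives $QT^*=\Se$, hence $\Im\Se\subseteq\Im Q$, while Assumption~\ref{ass:SigmaAB} for $B=\Sx^{1/2}$ gives $\Im(A\Sx A^*)\subseteq\Im(\Se\iota)\subseteq\Im\Se$ and thus $\Im Q=\Im\Se$; then $T^*=Q^{-1}\Se$ is a bijection of $Y$, so $\ker T=(\Im T^*)^\perp=\{0\}$. The paper avoids the issue altogether in Lemma~\ref{lem:injectiveB}, by equating the regularized (bounded, everywhere-defined) representation of $W_{B_1}$ for the injective $B_1=\Sx^{1/2}$ with the Wiener representation of $W_{B_2}$, so that no unbounded inverse is ever cancelled, and $(B_1^2-B_2^2)A^*=0$ drops out algebraically. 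The same defect appears in your proof that $h=\mu$: you apply the formal identity $\iota^*AW^\star=I-(\iota^*\Se\iota)S^{-1}$ to the vector $\iota^*Av$, which need not lie in $\Im S$; what actually acts on it is the bounded extension of $(\iota^*\Se\iota)S^{-1}$, whose injectivity is again precisely the point at issue, whereas the paper deduces from the regularized representation that $v\in\Im\Sx^{1/2}$ and then $\Sx^{-1/2}v=0$, using only identities it has rigorously established.
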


The proof is in Appendix~\ref{sec:app3}. Some comments on this result are in order.
\vspace{-1mm}
\begin{itemize}
\setlength\itemsep{0em}
\item By assumption $\iota^* (A\Sx A^* + \Se)$ is an injective compact operator from $Y$ to $K^*$, so that its inverse is not bounded, however it is possible to prove that $\Sx A^*(\iota^* (A\Sx A^* + \Se)  )^{-1}$ extends to a bounded operator from $K^*$ into $X$. With a slight abuse of notation, we denote this extension in the same way, so that~\eqref{eq:R2} makes sense for all $y\in K^*$.
\item To prove this result, we first consider the minimization in \eqref{eq:min2} over all possibile affine maps, which yields the so-called Linearized Minimum Mean Square Error (LMMSE) estimator of $x$.
%The proof of this result is based on the fact that $R_{h,B}$ is an affine estimator, which allows us to consider the minimization in \eqref{eq:min2} over all possibile affine maps. This yields the so-called Linearized Minimum Mean Square Error (LMMSE) estimator of $x$.
Then, it is possible to show that such optimal affine functional is of the form $R_{\hOp,\BOp}$, for suitable $\BOp$ and $\hOp$. 
In a finite-dimensional context, such a result is a direct consequence of the expression of the LMMSE estimator (see, e.g., \cite[Theorem 12.1]{kay1993fundamentals}). Theorem~\ref{prop:quadratic_target_inf2} generalizes this result to the infinite-dimensional case.
\item In the case of Gaussian random variables, the expression of the optimal regularizer $R^\star$ coincides with the maximum a posteriori (MAP) estimator.
% Indeed, it is well-known that in such context the MAP estimator coincides with the posterior conditional mean and with the Bayesian estimator, i.e., the solution of \eqref{eq:min2} when the minimization is taken over all possible functionals $R$.
Nevertheless, our result does not require any assumptions on $x$ and $\varepsilon$ being Gaussian (see the discussion in \cite{gribonval2011should,gribonval2013reconciling}).

    \item The minimum expected loss can be computed as
     \begin{equation}
L(\hOp,\BOp) = \tr\big( \Sx^{1/2}(\Sx^{1/2}A^*\Se^{-1}A\Sx^{1/2} + I_X)^{-1} \Sx^{1/2} \big),
\label{eq:minimum}        
    \end{equation}
as it is reported in Appendix~\ref{sec:app3}.
% \[
% L(\BOp,\hOp) = \tr\big((A^*\mathcal{R}_K^{-1}\Se^{-1} A + \Sx^{-1})^{-1}\big).
% \]
\item It is worth observing that the optimal regularization parameters $\BOp=\Sx^{1/2}$ and $\hOp=\mu$ are independent of $A$ and $\epsilon$, and depend only on the mean and the covariance of $x$.
\end{itemize}

\section{Finding the optimal regularizer: the sample error}
\label{sec:sample}

The computation of the optimal regularizer proposed in the previous section through the minimization of the expected loss $L$ requires the knowledge of the joint probability distribution $\rho$ of $x$ and $y$. In this section, we suppose that $\rho$ is unknown, \footnote{More precisely, we only assume that $\Sigma_\eps$ is known.}
but we have access to a sample $\vzv = \{(x_j,y_j)\}_{j=1}^{m}$ of $m$ pairs $(x_j,y_j) \in Z = X \times K^*$ drawn independently from the joint probability distribution $\rho$, and we study how to learn an estimator $(\wh h_{\vzv},\wh B_{\vzv})$ of the optimal parameters $(\hOp,\BOp)$. 
We propose two alternative ways to learn an estimator based on a training sample $\vzv$.
For the ease of notation, from now on we omit the dependence  on $\vzv$.
%in particular, for a fixed accuracy $\tau$, we are interested in the decay of the sample %error as $m$ grows.
%Since in Section \ref{sec:optimal} we shows that $\hOp = \mu$ and $\BOp=\Sx^{1/2}$, which are independent of $A$ and $\epsilon$, 
\begin{enumerate}[1.]
    \item \textit{Supervised learning}:  $(\hS,\BS)$ is determined by minimizing the empirical risk $\widehat{L}$, namely 
    \begin{equation}
    (\hS,\BS)=\operatornamewithlimits{arg min}_{(h,B)\in \Theta}\, \widehat{L}(\vh,B), \qquad \widehat{L}(\vh,B) = \frac{1}{m} \sum_{j=1}^m \| R_{\vh,B} (y_j) - x_j \|_{X}^2 ,
        \label{eq:empirical_risk}
    \end{equation}
    where $\Theta$ is a suitable subset of $X \times \mathcal{L}(X,X)$. 
    \item \textit{Unsupervised learning}: since the best parameters are $\hOp = \mu$ and $\BOp=\Sx^{1/2}$, a natural estimator is 
    provided by means of the sample $\{x_j\}$ alone as follows:
    \begin{equation}
    \hU = \widehat{\mu} = \frac{1}{m} \sum_{j = 1}^m x_j, \qquad \BU = \widehat{\Sx}^{1/2}, \quad \widehat{\Sx} = \frac{1}{m} \sum_{j=1}^m (x_j - \hat{\mu})\otimes (x_j - \hat{\mu}).
    \label{eq:samp_avg_cov}    
    \end{equation}
\end{enumerate}
In both cases, we evaluate the  quality of  $(\wh h,\wh B)$  in terms of its \textit{excess error}  $L(\wh h,\wh B) - L(\hOp,\BOp)$. 
% Since the training set $\vzv$ is as a random variable in $Z^m$ with distribution $\rho^m$, such term is a random variable. 
% therefore we look for a bound of the form
% \[
% \mathbb{P}_{\vzv \sim \rho^m}\left( | L(\wh h,\wh B) - L(\hOp,\BOp) | \leq \eta(m, \tau) \right) \geq 1 - e^{-\tau},
% \]
% for $\tau >0$.

\subsection{Supervised learning: empirical risk minimization}\label{sec:supervised}

There exist several techniques to show the convergence of the empirical risk minimizer to the optimal parameter, involving tools such as the VC dimension and the Rademacher complexity (see, e.g., \cite{shalev2014}), which require some compactness assumption on $\Theta$. Here, we fix a Hilbert space $H$ with a compact embedding 
$j\colon H \rightarrow X$ having dense range. For $\varrho_1>0$, set
%The core of our technique is to first restrict the set of admissible parameters for the %minimization of $\widehat{L}$ (and $L$) to a compact subset $\Theta \subset X \times %\mathcal{L}(X,X)$, and then to exploit a combination of concentration inequalities and %covering of compact sets, analogously to \cite{cucker2002mathematical}. 
%\par
%Regarding the compactness assumption, we introduce a Hilbert space $H$ compactly embedded into $X$, and define 
%$\wh\Theta = \Theta_1 \times \Theta_2$, where
\begin{equation}
\Theta_1 = \{j(\spec{h}) \colon \spec{h} \in H, \| \spec{h} \|_H \leq \varrho_1\}, \;  \Theta_2 = \{j\spec{B}j^*\colon \spec{B} \in \operatorname{HS}(H^*,H), \| \spec{B} \|_{\operatorname{HS}(H^*,H)} \leq \varrho_1\},
\label{eq:Theta}
\end{equation} 
and define $\Theta$  as the set of pairs $\{(h,B)\in \Theta_1 \times \Theta_2: \text{$B$ is positive}\}$. Here,  $\operatorname{HS}(H^*,H)$ denotes the space of Hilbert-Schmidt operators from $H^*$ to $H$. We further assume that
\begin{enumerate}[a)]
    \item \label{ass:a} the map $j$ can be decomposed as $j = j_2 \circ j_1$, where $j_1 \colon H \to X$ and $j_2 \colon X \to X$ are compact and satisfy
		\begin{equation}\label{eq:poly}
			s_k(j_1)\lesssim  k^{-s}, \qquad s>0, \qquad \text{being $s_k(j_1)$ the singular values of $j_1$};
		\end{equation}
		whereas $j_2$ is such that
		\begin{equation}\label{eq:2.8new}
			\operatorname{Im}(Aj_2) \subseteq \operatorname{Im}(\Se \iota).
		\end{equation}
\item \label{ass:b} The optimal parameter $(\hOp,\BOp)=(\mu,\Sx^{1/2})$ belongs to  $\Theta$.
\end{enumerate}
Assumption~\ref{ass:a}), and in particular \eqref{eq:poly}, allows us to explicitly compute the covering numbers, whereas \eqref{eq:2.8new} ensures that Assumption~\ref{ass:SigmaAB} holds uniformly for each positive operator $B\in\Theta_2$.
For example, when $H = H^{\sigma_1}(\T^1)$ and $X = H^{\sigma_2}(\T^1)$ are Sobolev spaces on the one-dimensional torus, assumption \ref{ass:a}) is satisfied if $s = \sigma_1 - \sigma_2 > 0$. As a consequence, assumption~\ref{ass:b}) can be interpreted as an \textit{a priori} regularity assumption on the problem. Such hypothesis can be relaxed by introducing the approximation error, namely, the rate at which the space $\Theta$ approximates $X \times \mathcal{L}(X,X)$ as the radius $\varrho_1$ grows to $\infty$. Such an analysis, which easily follows from the range-density property of $j$, is not treated here.

Finally, we assume that both the inputs and the outputs are bounded, {\em i.e.}
%in order to apply the well-known Hoeffding's concentration inequality, we introduce a boundedness assumption on the involved random variables, namely %\ms{[change to sub-Gaussian random variables]} \\
%\lr{[Open Issue 1: the hypothesis of boundedness is not only needed to apply Hoeffding's inequality, but also to verify assumption a) of Lemma %\ref{lem:CuckSmale}, which is harder to relax]}
\begin{equation}
\operatorname{supp}(\rho) \subset B_Z(\varrho_2), \text{ a ball of $Z=X \times K^*$ of radius $\varrho_2$}.
    \label{eq:bound_supp}
\end{equation}
%\lr{[Open Issue 2: in order to prove the following result, we also need a technical hp, which can be seen as a quantitave version of Assumption 2.8. This hp is now reported in \eqref{eq:technical_1}, or equivalently \eqref{eq:technical_2}. Should we anticipate it here?]}
\begin{thm}
\label{thm:gener_sup} Under the above conditions, let $(\hS,\BS)$ be defined by \eqref{eq:empirical_risk} and take $\tau>0$. We have 
%%Let $(\hS,\BS) \in \argmin\{ \widehat{L}(h,B); (h,B) \in \Theta\}$ and suppose $\Theta$ satisfies \eqref{eq:Theta},\eqref{eq:poly}. Suppose that the optimal parameter $(\hOp,\BOp) \in \Theta$. 
%Then,
\begin{equation}
\label{eq:generalization_sup}    
%  |L(\hS,\BS) - L(\hOp,\BOp)| \leq \eta, \qquad   m \geq \frac{1}{c_2 \eta^2} \left(c_1 \eta^{-1/s'} + \tau\right),
 |L(\hS,\BS) - L(\hOp,\BOp)| \leq \left( \frac{ c_1 + c_2 \sqrt{\tau}}{\sqrt{m}} \right)^{1 - \frac{1}{2s'+1}},\qquad m\geq m_0,
\end{equation}
with probability exceeding $1-e^{-\tau}$, being $0 < s'< s$, where $c_1,c_2,m_0$ are independent of $m$ and $\tau$.
\end{thm}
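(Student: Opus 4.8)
The plan is the standard empirical-risk-minimization route: reduce the excess error to a uniform deviation, then control the latter by a covering-number/concentration argument adapted to the spectral decay \eqref{eq:poly}. Since $(\hOp,\BOp)\in\Theta$ by assumption \ref{ass:b}) and $(\hS,\BS)$ minimizes $\widehat L$ over $\Theta$, the chain $L(\hS,\BS)-L(\hOp,\BOp)\le[L(\hS,\BS)-\widehat L(\hS,\BS)]+[\widehat L(\hOp,\BOp)-L(\hOp,\BOp)]$, together with $\widehat L(\hS,\BS)\le\widehat L(\hOp,\BOp)$, shows it suffices to bound $2\sup_{(h,B)\in\Theta}|L(h,B)-\widehat L(h,B)|$ (the excess error is itself nonnegative, as $(\hOp,\BOp)$ globally minimizes $L$ by Theorem~\ref{prop:quadratic_target_inf2}, so the modulus in \eqref{eq:generalization_sup} is harmless). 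The whole argument thus reduces to the uniform concentration of the empirical risk over $\Theta$.

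First I would establish two analytic properties of the per-sample loss $\ell(h,B;x,y)=\norm{R_{h,B}(y)-x}_X^2$, with respect to a metric on $\Theta$ in which the compactness of $j_1$ makes $\Theta$ totally bounded. \emph{Boundedness}: on $\Theta$ the operators $B=j\spec{B}j^*$ and the vectors $h$ are uniformly bounded via \eqref{eq:Theta}, and $R_{h,B}$ is well defined there by \eqref{eq:2.8new}; combined with the bounded support \eqref{eq:bound_supp}, the affine map \eqref{eq:affine} is uniformly bounded on $\mathrm{supp}(\rho)$, hence $\ell\le M=M(\varrho_1,\varrho_2)$. \emph{Lipschitz continuity}: $\ell$ depends Lipschitz-continuously on the parameters $(\spec{h},\spec{B})$ (equivalently, on $(h,B)$ in the topology induced by $j_1$), uniformly over $\mathrm{supp}(\rho)$.

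The Lipschitz estimate is the crux and I expect it to be the main obstacle, since $R_{h,B}$ depends on $B$ through the resolvent $(BA^*\Se^{-1}AB+I)^{-1}$ and the factor $((\Se\iota)^{-1}AB)^*$, both involving the unbounded operators $\Se^{-1}$ and $(\Se\iota)^{-1}$. The factorization $j=j_2\circ j_1$ of assumption \ref{ass:a}) is decisive here: by \eqref{eq:2.8new} the operator $T:=(\Se\iota)^{-1}Aj_2$ extends to a bounded map $X\to K$, so writing $B=j_2(j_1\spec{B}j^*)$ we get $(\Se\iota)^{-1}AB=T\,j_1\spec{B}j^*$, which depends boundedly and linearly on $\spec{B}$; consequently $(\Se\iota)^{-1}A(B-B')=T\,j_1(\spec{B}-\spec{B}')j^*$ is controlled without ever inverting $\Se\iota$. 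The resolvent is $1$-bounded and Lipschitz in $B$ via the identity $M_1^{-1}-M_2^{-1}=M_1^{-1}(M_2-M_1)M_2^{-1}$. Expanding $R_{h_1,B_1}-R_{h_2,B_2}$ and using that products of uniformly bounded Lipschitz factors are Lipschitz then yields a uniform constant $C_{\mathrm{Lip}}=C_{\mathrm{Lip}}(\varrho_1,\varrho_2)$.

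With these two properties the remainder is routine. Since $s_k(j_1)\lesssim k^{-s}$ by \eqref{eq:poly}, the compactness of $j_1$ makes $\Theta$ totally bounded and, through the relation between singular values and covering numbers, its metric entropy satisfies $\log\mathcal N(\Theta,\eta)\lesssim\eta^{-1/s'}$ for every $s'<s$ (the gap $s-s'$ absorbs the logarithmic correction at the critical exponent). Taking an $\eta$-net of $\Theta$, Hoeffding's inequality for the bounded variables $\ell(\cdot;x_j,y_j)$ and a union bound control $|L-\widehat L|$ at each net point by $M\sqrt{(\log\mathcal N(\Theta,\eta)+\tau)/m}$ with probability $1-e^{-\tau}$; bounding $\log\mathcal N(\Theta,\eta)+\tau\lesssim(1+\tau)\,\eta^{-1/s'}$ and extending to all of $\Theta$ at cost $C_{\mathrm{Lip}}\eta$ gives, with probability at least $1-e^{-\tau}$,
\[
\sup_{(h,B)\in\Theta}|L(h,B)-\widehat L(h,B)|\ \lesssim\ \eta+\frac{(c_1+c_2\sqrt\tau)\,\eta^{-1/(2s')}}{\sqrt m}.
\]
Balancing the two terms at $\eta\sim\bigl((c_1+c_2\sqrt\tau)/\sqrt m\bigr)^{2s'/(2s'+1)}$ produces precisely the rate $\bigl((c_1+c_2\sqrt\tau)/\sqrt m\bigr)^{1-1/(2s'+1)}$ of \eqref{eq:generalization_sup}; the condition $m\ge m_0$ keeps this optimal $\eta$ within the diameter of $\Theta$, while $c_1,c_2,m_0$ collect $M$, $C_{\mathrm{Lip}}$, $\varrho_1$, $\varrho_2$ and the entropy constant.
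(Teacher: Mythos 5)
Your proposal is correct and follows essentially the same route as the paper's proof: the same reduction of the excess risk to the uniform deviation $\sup_{\theta\in\Theta}|L(\theta)-\widehat L(\theta)|$, the same boundedness and Lipschitz estimates for $R_{h,B}$ built on the factorization $j=j_2\circ j_1$, the boundedness of $(\Se\iota)^{-1}Aj_2$ via \eqref{eq:2.8new} and the resolvent identity, the same entropy bound $\log\mathcal N(\Theta,r)\lesssim r^{-1/s'}$ (with the loss from $s$ to $s'$ coming from the covariance part of $\Theta$), and the same Hoeffding-plus-union-bound concentration, which the paper packages as a restatement of Cucker--Smale's Proposition~4. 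The only difference is organizational: you keep the net radius and the deviation as separate parameters and balance them at the end, whereas the paper ties the net radius to the deviation $\eta$ inside Lemma~\ref{lem:CuckSmale} and then inverts the resulting probability bound in $\eta$ --- the two computations are equivalent and yield the same exponent $1-\frac{1}{2s'+1}$.
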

% The bound proposed in \eqref{eq:generalization_sup} can be reformulated, expressing the accuracy $\eta$ in terms of $m, \tau$ in the following (suboptimal) way:
% \[ 
% \eta(m,\tau) \lesssim \max\left\{ \frac{1}{\tau^{s'}}, \frac{\sqrt{\tau}}{\sqrt{m}}\right\}.
% \]
The proof of Theorem \ref{thm:gener_sup} is  reported in the Appendices~\ref{app:proof_supervised} and \ref{app:entropy}. The approach  is inspired by \cite[Proposition 4]{cucker2002mathematical} and is suited for a much broader class of learning problems: by adapting Lemma \ref{lem:quadratic_reg}, it is possible to extend the current approach to non-quadratic regularization functionals.
\par
A prominent example of $H$ satisfying \eqref{eq:poly} comes from Sobolev spaces. Consider, e.g., $X=L^2(\mathbb{T}^d)$, where $\mathbb{T}^d$ is the $d$-dimensional torus, and $H = H^\sigma(\mathbb{T}^d)$. If $\sigma >0$, the embedding of $H$ in $X$ is compact, and its singular values show a polynomial decay \eqref{eq:poly} with $s = \sigma/d$.

\subsection{Unsupervised learning: empirical mean and covariance}\label{sec:unsup}

As pointed out in \eqref{eq:samp_avg_cov}, it is possible to recover an approximation of the optimal parameter $(\hOp$, $\BOp)$ only by taking advantage of a sample of the output variable $\{x_j\}_{j=1}^m$. 
%In particular, we consider the regularizer $R_{\hU,\BU}$, where $\hU$ is the sample average and $\BU$ the sample covariance of $\{x_j\}$, defined in \eqref{eq:samp_avg_cov}.
Since this technique does not require matched couples of inputs and outputs, we refer to it as an unsupervised learning approach.
In order to prove a bound in probability for the sample error, in this section we assume that $x$ is a $\kappa$-sub-Gaussian random vector, i.e.,
\begin{equation}
\forall v \in X, \langle x,v\rangle_X \text{ is a real sub-Gaussian r.v., i.e. } \| \langle x,v\rangle_X\|_p \leq \kappa \sqrt{p}\| \langle x,v\rangle_X\|_2, \ \forall p>1,
\label{eq:subGaussian}
\end{equation}
where $\|\langle x,v\rangle\|_p^p=\mathbb E[ |\langle x,v\rangle|^p]$. It its known \cite{vershynin2018high} that $\mathbb E[\|x\|^p]$ is finite for all $p>0$, so that $x$ has finite mean and its covariance operator $\Sigma_x$ is trace-class.  Gaussian random variables are a particular instance of sub-Gaussian random variables by Fernique's theorem \cite{bogachev1998Gaussian}. Note that, in infinite-dimensional  spaces, bounded random vectors in general are not sub-Gaussian. 

We further assume that 
% there is a constant $\kappa>0$ such that
% \begin{equation}\label{eq:bounded_inverse}
% \langle \Sigma_\eps v,v\rangle  \geq \kappa \|v\|^2 \qquad v\in Y .   
% \end{equation}
% Condition~\eqref{eq:bounded_inverse} is equivalent to assuming that
the injective operator $\Sigma_\eps$ has a bounded inverse, thus $\Sigma_\eps + A\Sigma_xA^*$ is invertible. This is satisfied for example if $\eps$ is the white-noise,  since $\Sigma_\eps=\operatorname{I}$.
%we need to introduce an additional assumption on the random variable $x$, namely that $x = \mu + \xi$, being $\xi$ a zero-mean 
%Notice that this assumption is much more general than the one we assumed for the supervised case, \eqref{eq:bound_supp}, since we are not imposing any %restriction on $\varepsilon$ and since bounded random variables are subGaussian.
%\lr{Open Issue 3: the proof of the following theorem comes from Lemma %\ref{lem:final_unsup}, which actually requires an additional assumption. %Should we state it here? Also, we should check if that theorem is %compliant with the infinite-dim details.}
We also require that $A^* (\iota^*(\Sigma_\eps +
  A\Sigma_x A^*))^{-1}$, defined on $\iota^*(Y)\subset K^*$, extends to a bounded operator from $K^*$ into $X$.
\begin{thm}
\label{thm:gener_unsup}
Under the above conditions, let $(\hU,\BU)$ be defined by \eqref{eq:samp_avg_cov} and take  $\tau>0$. Then,
\begin{equation}
\label{eq:generalization_sup2}    
 |L(\hU,\BU) - L(\hOp,\BOp)| \leq \frac{c_3 +
   c_4\sqrt{\tau}}{\sqrt{m}},\qquad m\geq m_0,
\end{equation}
with probability exceeding  $1-e^{-\tau}$, where $m_0$,  $c_3$ and $c_4$ depend only on $\Sx$, $\Se$ and $A$.
\end{thm}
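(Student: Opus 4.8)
\section*{Proof proposal for Theorem~\ref{thm:gener_unsup}}

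The plan is to exploit the explicit affine form of the reconstruction to reduce the excess error to a \emph{deterministic, locally Lipschitz} function of the empirical mean and covariance, and then to invoke sub-Gaussian concentration. The first observation is that $R_{h,B}$ depends on $B$ only through $B^2$: writing the gain $W_\Sigma = \Sx A^*(\iota^*(A\Sx A^* + \Se))^{-1}$ (as in~\eqref{eq:R2}) with $\Sx$ replaced by a generic covariance $\Sigma = B^2$, and using $y = \iota^* A x + \eps$ with $x,\eps$ independent and centered, a direct bias--variance computation yields
\begin{equation*}
L(h,B) = \|(I - W_\Sigma \iota^* A)(h-\mu)\|_X^2 + G(\Sigma), \qquad G(\Sigma) = \tr\!\big((I-W_\Sigma\iota^*A)\Sx(I-W_\Sigma\iota^*A)^*\big) + \tr\!\big(W_\Sigma(\iota^*\Se\iota)W_\Sigma^*\big),
\end{equation*}
so that $G$ depends on $B$ only through $\Sigma = B^2$. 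Since $\hOp=\mu$ kills the first term, and by Theorem~\ref{prop:quadratic_target_inf2} the choice $\Sigma = \Sx$ minimizes $G$, the excess error is nonnegative and splits as
\begin{equation*}
L(\hU,\BU) - L(\hOp,\BOp) = \|(I - W_{\wh{\Sx}}\iota^*A)(\hU - \mu)\|_X^2 + \big(G(\wh{\Sx}) - G(\Sx)\big), \qquad \wh{\Sx} = \BU^2 .
\end{equation*}
This already removes the absolute value and, crucially, eliminates any need to control the operator square root $\wh{\Sx}^{1/2}-\Sx^{1/2}$.

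Second, I would prove that $\Sigma \mapsto W_\Sigma$, and hence $G$, are locally Lipschitz near $\Sx$. The key is that, because $\Se$ has bounded inverse, the gain can be recast through the resolvent of a bounded, boundedly invertible operator: setting $M_\Sigma = \Se^{-1/2}A\Sigma A^*\Se^{-1/2}$, which is bounded since $\|\Se^{-1}\|<\infty$, one has formally $W_\Sigma = \Sigma A^*\Se^{-1/2}(M_\Sigma + I)^{-1}\Se^{-1/2}$. The map $\Sigma \mapsto M_\Sigma$ is Lipschitz in Hilbert--Schmidt norm, and $M \mapsto (M+I)^{-1}$ is $1$-Lipschitz on positive operators, so $\Sigma \mapsto W_\Sigma$ is Lipschitz on a neighborhood of $\Sx$; consequently $\|I - W_{\wh{\Sx}}\iota^*A\|$ is uniformly bounded there and $|G(\wh{\Sx})-G(\Sx)| \lesssim \|\wh{\Sx} - \Sx\|_{\operatorname{HS}}$.

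Third, I would establish the concentration bounds. Using the $\kappa$-sub-Gaussian hypothesis~\eqref{eq:subGaussian}, standard Hilbert-space estimates (e.g.\ \cite{vershynin2018high}) give, each with probability at least $1-e^{-\tau}$, bounds $\|\hU - \mu\|_X \le (a_1 + a_2\sqrt\tau)/\sqrt m$ and $\|\wh{\Sx} - \Sx\|_{\operatorname{HS}} \le (a_3 + a_4\sqrt\tau)/\sqrt m$, with constants depending only on $\kappa$ and $\Sx$. Combining with the Lipschitz estimate via a union bound yields
\begin{equation*}
L(\hU,\BU)-L(\hOp,\BOp) \;\lesssim\; \|\hU-\mu\|_X^2 + \|\wh{\Sx}-\Sx\|_{\operatorname{HS}} \;\lesssim\; \frac{1}{m} + \frac{c_3+c_4\sqrt\tau}{\sqrt m} \;\lesssim\; \frac{c_3+c_4\sqrt\tau}{\sqrt m},
\end{equation*}
the mean contributing only at the faster rate $1/m$. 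The main obstacle is the second step: making the formal manipulations rigorous in infinite dimensions, where $\iota^*(A\Sigma A^*+\Se)$ is compact with unbounded inverse, so all perturbation arguments must be carried out on the \emph{bounded extensions} $W_\Sigma$ rather than on the resolvents themselves. The bounded-inverse assumption on $\Se$ and the extension hypothesis in the statement are precisely what allow the gain to be rewritten through the resolvent of the bounded operator $M_\Sigma+I$; one must additionally verify that the finite-rank empirical $\wh{\Sx}$ satisfies the relevant image condition, so that $W_{\wh{\Sx}}$ is well defined and the Lipschitz bound applies uniformly on the high-probability event.
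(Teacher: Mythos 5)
Your strategy is genuinely different from the paper's proof, and its core decomposition is correct. The paper never writes the excess risk exactly: it compares reconstructions pointwise, bounding $L(\hU,\BU)-L(\hOp,\BOp)\le \E\big[\|\xU-\xOp\|^2\big]+2\sqrt{\E\big[\|\xU-\xOp\|^2\big]}\sqrt{\E\big[\|\xOp-x\|^2\big]}$, then controls $\|\wh{W}-\wt{W}\|_{\mathcal{L}(K^*,X)}$ through a Neumann-series factorization (under the smallness condition \eqref{eq:31}) and uses the operator-norm covariance bound of \cite{kolo17}. Your exact split $L(h,B)=\|(I-W_{B^2}\iota^*A)(h-\mu)\|_X^2+G(B^2)$, with both excess terms nonnegative because $\hOp=\mu$ and $\Sx$ minimizes $G$, follows from Step 1 of the proof of Theorem~\ref{prop:quadratic_target_inf2} together with \eqref{eq:Wandb} and \eqref{eq:WB2}; it removes the absolute value and the cross term, and shows the mean contributes only at rate $1/m$ --- a genuine (if minor) sharpening. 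Moreover, since $\Se^{-1}$ is bounded, $\|(A\Sigma A^*+\Se)^{-1}\|\le\|\Se^{-1}\|$ uniformly over positive $\Sigma$, so your Lipschitz estimates need no analogue of \eqref{eq:31}. Your insistence on the Hilbert--Schmidt norm is also the right call for the variance term: writing $\tr[W\iota^*\Se\iota W^*]=\|(W\iota^*)\Se^{1/2}\|_{\operatorname{HS}}^2$, one gets $\|(W_{\Sigma_1}\iota^*-W_{\Sigma_2}\iota^*)\Se^{1/2}\|_{\operatorname{HS}}\lesssim\|\Sigma_1-\Sigma_2\|_{\operatorname{HS}}$, whereas the operator norm of $\Sigma_1-\Sigma_2$ alone would not suffice there (your citation for the HS-norm concentration is loose, though: in infinite dimension this needs a Hilbert-space Bernstein inequality for the sub-exponential summands $\xi_j\otimes\xi_j-\Sx$, not the finite-dimensional statements of \cite{vershynin2018high}).

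The genuine gap is the step you defer as ``the main obstacle'', and the device you propose does not close it. Rewriting the gain through the resolvent of $M_\Sigma+I$ only produces $W_\Sigma\iota^*=\Sigma A^*(A\Sigma A^*+\Se)^{-1}$, an operator on $Y$; Lipschitz continuity of $\Sigma\mapsto W_\Sigma\iota^*$ in $\mathcal{L}(Y,X)$ gives no control of an extension in $\mathcal{L}(K^*,X)$, because $\iota^*$ is compact and the $K^*$-norm on $\iota^*(Y)$ is strictly weaker than the $Y$-norm. This matters twice in your argument: (i) $L(\hU,\BU)$ is only defined if $R_{\hU,\BU}$ is defined $\rho$-a.s., and $y\notin\iota^*(Y)$ almost surely (e.g.\ for white noise), so you must first show that $W_{\wh{\Sigma}_x}$ extends boundedly to $K^*$; (ii) your inequality $G(\wh{\Sigma}_x)\ge G(\Sx)$ uses that $W_{\wh{\Sigma}_x}$ is an admissible competitor in the class of bounded operators $K^*\to X$ over which $J$ is minimized. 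The mechanism that closes this --- the heart of the paper's Lemma~\ref{lem:final_unsup} --- is a factorization that places the assumed bounded extension on the $K^*$ side: on $\iota^*(Y)$,
\begin{equation*}
\big(\iota^*(A\wh{\Sigma}_x A^*+\Se)\big)^{-1}-\big(\iota^*(A\Sx A^*+\Se)\big)^{-1}
=-\big(A\wh{\Sigma}_x A^*+\Se\big)^{-1}A\,(\wh{\Sigma}_x-\Sx)\;A^*\big(\iota^*(A\Sx A^*+\Se)\big)^{-1},
\end{equation*}
where the rightmost factor extends boundedly to $K^*$ by hypothesis and everything to its left is bounded on $Y$ and $X$. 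Hence $W_{\wh{\Sigma}_x}-W_{\Sx}=(\wh{\Sigma}_x-\Sx)A^*\big(\iota^*(A\Sx A^*+\Se)\big)^{-1}+\wh{\Sigma}_x A^*\big[\cdots\big]$ extends boundedly to $K^*$ with norm $\lesssim\|\wh{\Sigma}_x-\Sx\|$, which simultaneously defines $L(\hU,\BU)$ and legitimizes your variational comparison. Once you import this factorization (which, with $\Se^{-1}$ bounded, does not even require the paper's condition \eqref{eq:31}), the remainder of your plan goes through and yields the claimed bound.
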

The proof of Theorem \ref{thm:gener_unsup} is based on several concentration estimates reported in Appendix~\ref{app:proof_unsupervised}. %\lr{Note that the theoretical decay predicted by the two theorems \ref{thm:gener_sup} and \ref{thm:gener_unsup} are very close, and are equivalent if $s \rightarrow \infty$. This makes the two strategies almost equivalent in terms of generalization guarantees.} 
The rates we obtain can be meaningfully compared with recent results in supervised learning: see \cite{blanchard2018optimal, lin2020optimal}.

\section{Numerical simulations}
\label{sec:numerics}

%We report the results of some numerical experiments related to the supervised and unsupervised learning of the optimal quadratic regularizer for a denoising problem, using synthetic data.
We report some numerical results obtained from the supervised and unsupervised strategies for a denoising problem, using synthetic data.
The goal of these experiments is twofold: on one hand, we want to study the asymptotic properties of the regularizers learned with  the techniques proposed in Section \ref{sec:sample} as the sample size $m$ grows, verifying Theorems~\ref{thm:gener_sup} and \ref{thm:gener_unsup}. On the other hand, we want to assess that those properties, obtained in an infinite-dimensional setting, do not suffer from the curse of dimensionality. We do so by introducing finer and finer discretizations, and showing that the theoretical bounds do not degrade as the dimension of the problem increases.

\subsection{Problem formulation and discretization}
We consider a denoising problem on $X = Y = L^2(\mathbb{T}^1)$, being $\mathbb{T}^1 = \R/\Z$ the one-dimensional torus, which consists in determining a signal $x$ from the noisy measurement $y = x + \varepsilon$ and thus corresponds to the case $A = I$. We define a statistical model both for $\varepsilon$ and for $x$, which we use for the generation of the training data. In the learning process, though, we do not take advantage of the knowledge of the introduced probability distributions, apart from the covariance operator of the noise $\Se$.
%, as depicted in Section \ref{sec:optimal}. 
In accordance with Assumption \ref{ass:e_rand}, we assume that $\varepsilon$ is a random process on $Y$, and in particular we consider a white noise process, i.e. with zero mean and $\Se = \sigma^2 I$. We consider a noise level of 5\%, namely, the standard deviation $\sigma$ is set to the $5\%$ of the peak value of the average signal. In different tests, we employ different white noise processes with different distributions, including the Gaussian (cfr.\ Example~\ref{ex:white}) and the uniform distributions. Regarding  $x$, we assume  a Gaussian distribution with fixed mean $\mu$ and covariance $\Sx$, where $\mu = 1 - |2x-1|$ and $\Sx^{1/2}$ is  a convolution operator.% with kernel \ga{$k_{\Sx}(x) =\left( 1-\operatorname{exp}(-(c/x)^4)\right)\chi_{(-c,c)}(x)$ and $c=??$}.

In order to discretize the described problem, we fix $N>0$ and approximate the space $X$ by means of the $N-$dimensional space  generated by a 1D-pixel basis. As a consequence, the functions in $X$ and $Y$ are approximated by vectors in $\R^N$, and the linear operators by matrices in $\R^{N \times N}$. More details on the discretizations and on the random process generation are reported in Appendix \ref{app:numerics}.

\subsection{Implementation and results}
We denote $\theta=(h,B)$. The workflow of the numerical experiments is described as follows:
\vspace{-1mm}
\begin{enumerate}
	\item Fix the discretization size $N$, define the optimal regularizer $R_{\thetaOp}$. Compute $L(\thetaOp)$.
	\item For each selected value of the sample size $m$,
	\begin{itemize}\setlength\itemsep{0em}
		\item generate the samples $\{x_j\}_{j=1}^m$, $\{\varepsilon_j\}_{j=1}^m$;
		\item minimize the empirical risk $\wh{L}(\theta)$ to find $\thetaS$;
		\item compute the empirical mean and covariance $\wh{\mu}$, $\wh{\Sx}$ to find $\thetaU$;
		\item compute the excess risks $|L(\thetaS) - L(\thetaOp)|$ and $|L(\thetaU) - L(\thetaOp)|$.
	\end{itemize}
	\item Show the decay of both computed quantities as $m$ increases.
\end{enumerate}
\vspace{-1mm}

We compute the mean squared errors $L(\thetaOp)$, $L(\thetaS)$ and $L(\thetaU)$ according to the definition of $L$, thus avoiding the use of a test set. Moreover, we perform the minimization of the empirical risk analytically, thanks to the explicit expression of the regularization functional provided by \eqref{eq:affine} (see Appendix \ref{app:numerics}). As a final remark, the generalization bounds in Theorems \ref{thm:gener_sup} and \ref{thm:gener_unsup} can be reformulated in expectation.
% \[
% \E\left[|L(\thetaS) - L(\thetaOp)|\right] \lesssim \left(\frac{1}{\sqrt{m}}\right)^{1-\frac{1}{2s'+1}},
% \qquad \E\left[|L(\thetaU) - L(\thetaOp)|\right] \lesssim \frac{1}{\sqrt{m}}.
% \]
Thus, to verify the expected decay, we repeat the same experiment $30$ times, with different training samples for each size $m$ and taking the average in each repetition.\footnote{All computations were implemented with Matlab R2019a, running on a laptop with 16GB of RAM and 2.2 GHz Intel Core i7 CPU. All the codes are available at \url{https://github.com/LearnTikhonov/Code}}
\par
% \lr{In Figure \ref{fig:1}, we present the outcome of the numerical experiments, in the case where $x$ is a Gaussian random variable and $\varepsilon$ a uniform white noise. Additional examples are reported in Appendix \ref{app:numerics}. The sample size ranges between $3 \cdot 10^3$ and $3\cdot 10^5$. In all the presented scenarios, the decay of the excess risk both in the supervised and in the unsupervised cases agrees with the theoretical estimates, showing a decay of the order $1/\sqrt{m}$. 
% \begin{figure}
% \centering
% \begin{tabular}{ccc}
% \includegraphics[width=0.32\columnwidth,trim={1cm 0cm 1cm 0cm}]{}
% & \includegraphics[width=0.32\columnwidth,trim={1cm 0cm 1cm 0cm}]{} 
% & \includegraphics[width=0.32\columnwidth,trim={1cm 0cm 1cm 0cm}]{} \\
% (a) & (b) & (c)
% \end{tabular}
% \caption{Decay of the excess risks \textcolor{red}{$|L(\thetaS)-L(\thetaOp)|$} and \textcolor{blue}{$|L(\thetaU)-L(\thetaOp)|$} (with standard deviation error bars) with two different discretization sizes, $N=64$ (a) and $N=256$ (b), and comparison (c).}%
% \label{fig:1}%
% \end{figure}
% }

In Figure \ref{fig:1}, we present the outcome of the numerical experiments, conducted under different statistical models for $x$ and $\varepsilon$. The sample size ranges between $3 \cdot 10^3$ and $3\cdot 10^5$. In all the presented scenarios, the decay of the excess risk both in the supervised and in the unsupervised cases agrees with the theoretical estimates, showing a decay of the order $1/\sqrt{m}$. 
\begin{figure}
\centering
\begin{tabular}{ccc}
\includegraphics[width=0.32\columnwidth,trim={1cm 0cm 1cm 0cm}]{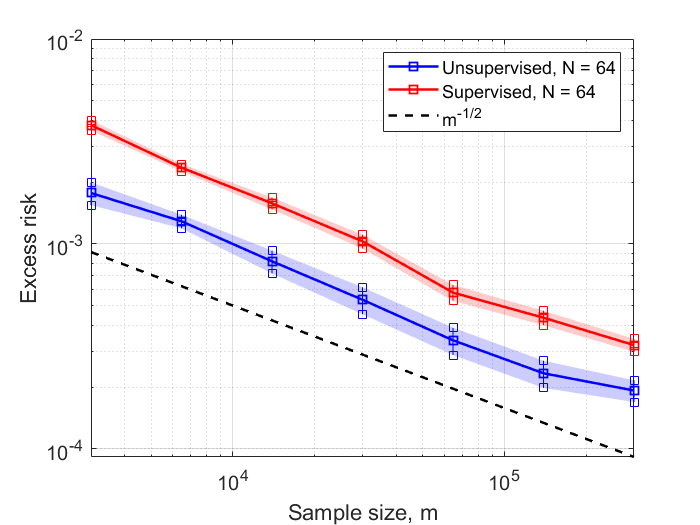}
& \includegraphics[width=0.32\columnwidth,trim={1cm 0cm 1cm 0cm}]{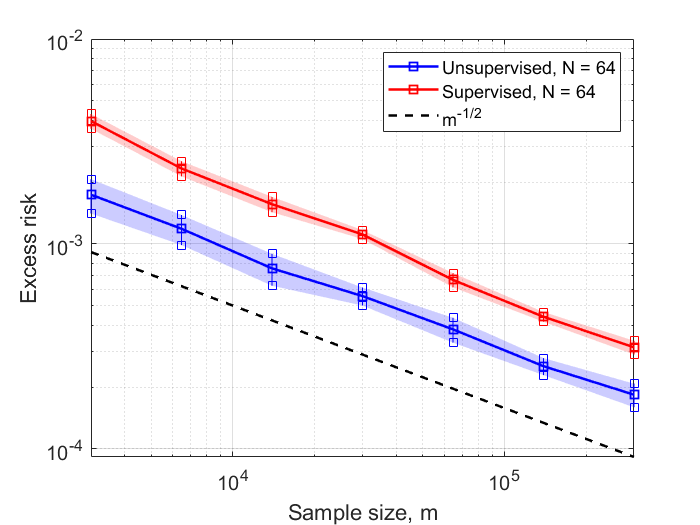} 
& \includegraphics[width=0.32\columnwidth,trim={1cm 0cm 1cm 0cm}]{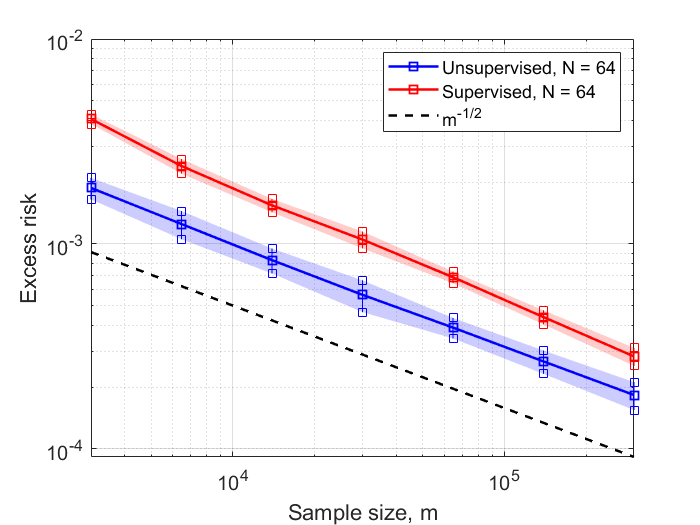} \\
(a) & (b) & (c)
\end{tabular}
\caption{Decay of the excess risks \textcolor{red}{$|L(\thetaS)-L(\thetaOp)|$} and \textcolor{blue}{$|L(\thetaU)-L(\thetaOp)|$} in three different cases: Gaussian variable $x$ and (a) Gaussian white noise $\varepsilon$, (b) uniform white noise $\varepsilon$, and (c) white noise $\varepsilon$ uniformly distributed w.r.t. the Haar wavelet transform. We also report standard deviation error bars.}%
\label{fig:1}%
\end{figure}
Finally, in Figure \ref{fig:2}, we show that the theoretical results are equivalently matched by numerics when the discretization size is increased. 
\begin{figure}
\centering
\begin{tabular}{ccc}
\includegraphics[width=0.32\columnwidth,trim={1cm 0cm 1cm 0cm}]{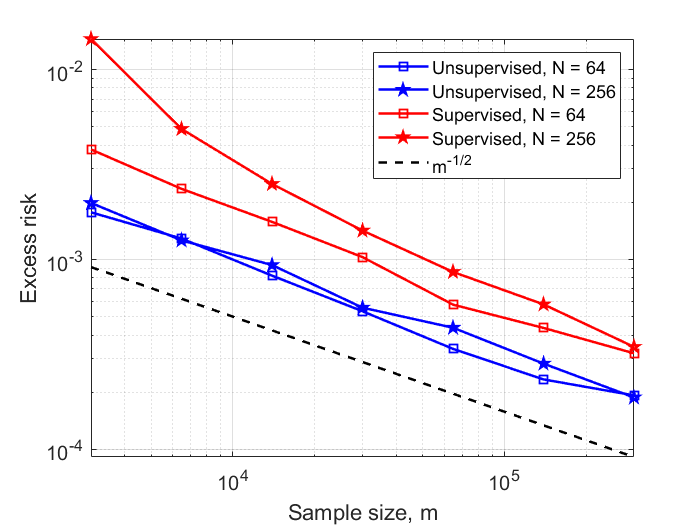}
& \includegraphics[width=0.32\columnwidth,trim={1cm 0cm 1cm 0cm}]{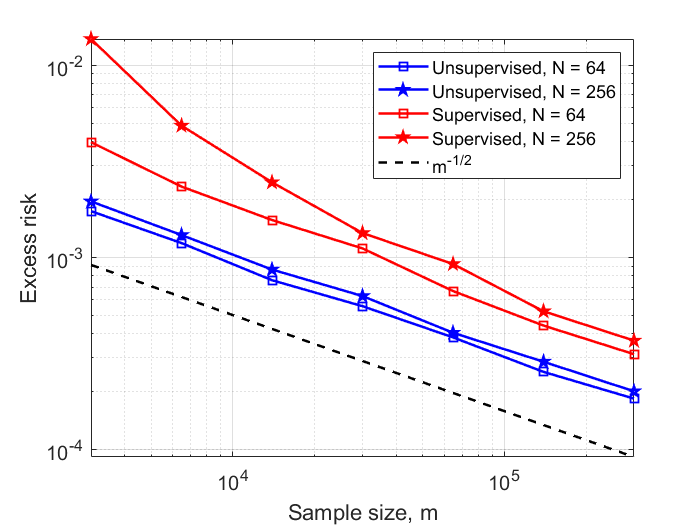} 
& \includegraphics[width=0.32\columnwidth,trim={1cm 0cm 1cm 0cm}]{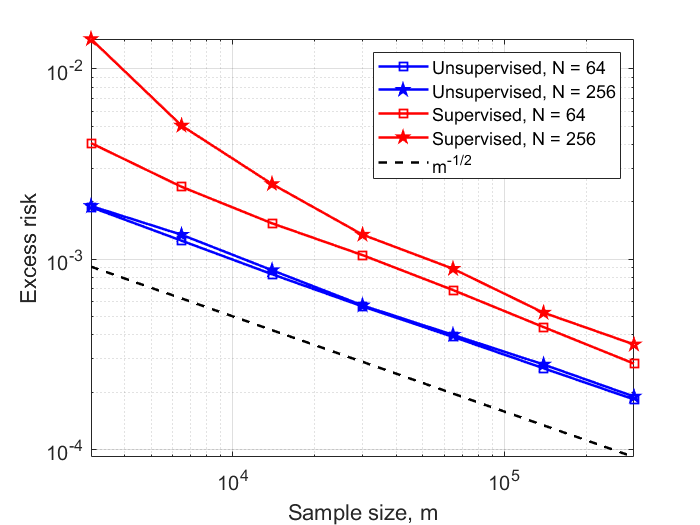} \\
(a) & (b) & (c)
\end{tabular}
\caption{Comparison of decay the excess risks \textcolor{red}{$|L(\thetaS)-L(\thetaOp)|$} and \textcolor{blue}{$|L(\thetaU)-L(\thetaOp)|$} with $N=64$ and $N=256$, in same three statistical models as in Figure \ref{fig:1}.}
\label{fig:2}%
\end{figure}

Additional details regarding the results of the numerical experiments are reported in Appendix \ref{app:numerics}. Moreover, in Appendix \ref{app:deconv} we replicate the presented numerical study for a different example, namely, a deconvolution problem for 1D signals. In this case, $A$ is a convolution operator with respect to a continuous kernel, whose inverse is in general unbounded.

\section{Conclusions and limitations}
\label{sec:conclusions}
We studied the problem of learning a regularization functional for an inverse problem between infinite dimensional spaces. This problem has received huge interest in recent years due to the successes in several imaging modalities. Our work provides theoretical support to machine learning approaches in sensitive applications such as medical imaging.  % based on statistical learning theory. 

We have considered the case of a linear inverse problem that is solved via generalized Tikhonov regularization. This involves an unknown operator $B$ and a signal $h$, both to be learned from data. We proposed two learning strategies, one supervised and one unsupervised. Surprisingly, we found that the regularizer learned with the unsupervised strategy has the same (or slightly better) generalization bounds than the supervised one. Furthermore, the unsupervised approach does not need the knowledge of the forward operator $A$ nor that of the distribution of the noise $\varepsilon$. This motivates the development of more advanced unsupervised approaches to the problem, e.g.\ with deep learning methods (see \cite{2017-kobler-etal,2020-kobler-etal,li2020nett,lunz2018adversarial,mukherjee2020learned}).

The analysis presented here was possible thanks to the simple form of the regularizer. Our work does not cover, for instance, the case of sparsity promoting regularization functionals \cite{grasmair2008sparse} or more general convex or non-convex penalty terms arising from deep learning methods. Some results regarding optimal
(non-quadratic) regularizers associated with different priors can be found e.g. in \cite{gribonval2011should,gribonval2013reconciling,burger2014maximum}, which nevertheless deal with a finite-dimensional setting.
Extensions to more general regularizers will be the subject of future studies.

\section*{Acknowledgments}
%\begin{ack} % use this environment to authomatically hide in submission version
This material is based upon work supported by the Air Force Office of Scientific Research under award number FA8655-20-1-7027. GSA, EDV, LR and MS are members of the ``Gruppo Nazionale per l'Analisi Matematica, la Probabilit\`a e le loro Applicazioni'' (GNAMPA), of the ``Istituto Nazionale per l'Alta Matematica'' (INdAM). GSA is supported by a UniGe starting grant ``curiosity driven''. ML is supported by Academy of Finland, grants 273979 and 284715.

\bibliography{references}

\vfill
\pagebreak

\appendix

\section{Appendix}

\subsection{Random processes}
\label{app:processes}
The notion of random vector in infinite-dimensional vector spaces is not general enough to describe many models of noise, as for example the white noise described in Example~\ref{ex:white}. To overcome this problem, a possibility is to consider the noise as a \textit{random process} on $Y$ (see the approach  in \cite{franklin1970well}).
A random process is a collection  $\{\varepsilon_v\}_{v \in Y}$ of real random variables $\eps_v$, each of them defined on the same probability space $(\Omega,
\mathcal F,\Prob)$ and labelled by vectors $v\in Y$. Here we assume that the random process is linear, with zero mean and bounded. This means that 
\[
\varepsilon_{\alpha v+ \beta w} = \alpha \varepsilon_v + \beta \varepsilon_w, \qquad v,w \in Y, 
\]
and for each $v\in Y$, $\eps_v$ has zero-mean with finite bounded variance 
\begin{equation}\label{eq:boundeness}
 \mathbb E[ \eps_v^2]\leq C_\eps \|v\|^2 ,    
\end{equation}
where $C_\eps>0$ is a suitable constant independent of $v$. The existence of $C_\eps$ is equivalent to assuming that 
the covariance operator $\Se\colon Y \rightarrow Y$, given by
\[
\mathbb{E}[\varepsilon_v \varepsilon_w] = \langle \Se v,w \rangle_Y, \qquad v,w \in Y,
\]
is bounded from $Y$ to $Y$. It is easy to show that if $\eps\colon\Omega\to Y$ is a square-integrable random vector, then the collection of real random  variables $\{\varepsilon_v\}_{v \in Y}$
\begin{equation}\label{eq:weak-random}
    \eps_v(\omega)= \langle \eps(\omega),v\rangle_Y
\end{equation}
is a linear bounded random process. However, the converse is not true as shown by the following example. 
\begin{example}
The Gaussian white noise $\varepsilon$ on $Y$ is a random process  such that for any $v \in Y$ it holds that $\varepsilon(v)$ is a zero mean Gaussian variable, and  $\Se= I$, i.e., $\mathbb{E}[\varepsilon(v_i) \varepsilon(v_j)] = \langle v_i,v_j\rangle_Y$. Suppose now that $\varepsilon \in Y^*$: then, by Riesz representation theorem there should exist $\hat{\varepsilon} \in Y$ s.t. $\varepsilon(v) = \langle \hat{\varepsilon},v \rangle_Y$. Nevertheless, this leads to a contradiction, since, letting $\{\phi_i\}_i$ be an orthonormal basis of $Y$,
\[
\mathbb{E}[\| \varepsilon \|_{Y^*}^2] = \mathbb{E}[\| \hat{\varepsilon} \|_{Y}^2] = \sum_{i,j \in \N} \mathbb{E}[\langle \hat{\varepsilon}, \phi_i\rangle_Y \langle \hat{\varepsilon},\phi_j \rangle_Y] = \sum_{i,j \in \N} \mathbb{E}[\varepsilon(\phi_i) \varepsilon(\phi_j)] = \sum_{i,j \in \N} \langle \phi_i, \phi_j \rangle_Y,
\]
 which is a divergent sum. It is moreover easy to show that $\mathbb{P}(\| \varepsilon\|_{Y^*} < \infty) = 0$ (see, e.g. \cite{franklin1970well}).
\end{example}
However, given a random process $\{\varepsilon_v\}_{v \in Y}$, it is always possible to define a Gelfand triple $K\subseteq Y\subseteq K^*$ and random vector $\eps$ taking value in $K^*$ such that~\eqref{eq:weak-random} holds true for all $v\in K$. 

Indeed, let $K$ be a Hilbert space with a continuous embedding $\iota\colon K \to Y$ such that
$\iota(K)$ is dense in $Y$ and the linear map 
\[
  K\ni v \mapsto \eps_{\iota(v)} \in L^2(\Omega,\Prob)
\]
is a Hilbert-Schmidt operator. Observe that~\eqref{eq:boundeness} implies that the linear map
\[
Y\ni v \mapsto \eps_{v} \in L^2(\Omega,\Prob)
\]
is always bounded, so that it is enough to assume that $\iota$ is itself a Hilbert-Schmidt operator.  
To construct the Gelfand triple, we identify $Y^*$ with $Y$, but we do not identify $K^*$ with $K$. Hence, since $\iota(Y)$ is dense in $Y$, then $\iota^*\colon Y\to K^*$ is injective and $Y$ can be regarded as a (dense) subspace of $K^*$, so that $K\subseteq Y\subseteq K^*$. 

For a fixed $\iota$, the canonical identification 
\[ \operatorname{HS}(K,L^2(\Omega,\Prob))\simeq L^2(\Omega,\Prob) \otimes K^*\simeq L^2(\Omega,\Prob,K^*)
\]
implies that there exists $\eps\in L^2(\Omega,\Prob,K^*)$, {\em i.e.} a square-integrable random vector $\eps$ in $K^*$, such that
\[
\eps_v(\omega)= \langle \eps(\omega),v\rangle_{K^*\times K}
\]
almost surely.  
It is easy to show that the random vector $\eps\in K^*$ has zero mean and its covariance operator $\wt{\Sigma}_\eps$, defined as 
\[ \wt{\Sigma}_\eps\colon K\to K^* \qquad 
\langle \wt{\Sigma}_\eps v,  w \rangle_{K^*\times K} = \mathbb{E}[\lK \varepsilon, v \rK \lK \varepsilon, w \rK]  \qquad v,w \in K,
\] 
is given by $\wt{\Sigma}_\eps=\iota^*\circ \Se \circ\iota$, as shown by~\eqref{eq:newcov}.

For example, for the white noise on the Hilbert space $Y=L^2(D)$, $D \subset \R^d$, a possible choice is $K = H^s(D)$ with $s>d/2$, so that the embedding is a Hilbert-Schmidt operator (see \cite{kekkonen2014analysis}). Furthermore, it is possible to show that $\eps$ is a Gaussian random vector in $K^*$ and the space $Y\subseteq K^*$ is the corresponding Cameron-Martin space, so that $\Prob[\eps\in Y]=0$ (see \cite{bogachev1998Gaussian}).

Finally, observe that if the random process is already a  square-integrable random vector of $Y$,  then $\Sigma_\eps$ is a trace-class operator and we can simply choose $K = Y$. Hence the random process setting extends the usual formalism of random variables. Clearly, if $Y$ is  finite dimensional, the two approaches are equivalent.

\subsection{The solution of the regularization problem with fixed $h$ and $B$}
Throughout this section and the following ones, we denote the adjoint of an operator between Hilbert spaces $F \colon \mathcal{H}_1 \to \mathcal{H}_2$ as $F^*\colon \mathcal{H}_2^* \to \mathcal{H}_1^*$ such that $\langle F^* u,v \rangle_{\mathcal{H}_1^* \times \mathcal{H}_1} = \langle u,Fv\rangle_{\mathcal{H}_2^*\times\mathcal{H}_2}$ for all $v \in \mathcal{H}_1$, $u \in \mathcal{H}_2^*$. Notice that we identify the spaces $X$ and $Y$ with their dual spaces, so that, e.g., $A^*$ is intended as an operator from $Y$ to $X$. We do not identify $K$ with $K^*$ nor $H$ with $H^*$, so that, e.g., $\iota^*\colon Y \to K^*$.
\begin{proposition}\label{prop:quad_min}
Let
\begin{itemize}
\setlength\itemsep{0em}
    \item $X$, $Y$ and $K$ be separable real Hilbert spaces;
    \item $A\colon X\to Y$ be a bounded map;
    \item $\Se\colon Y\to Y$ satisfy Assumption~\ref{ass:e_rand};
    \item $\iota\colon K\to Y$ be an injective linear map satisfying \eqref{eq:iotaSe};
    \item  $B$ satisfy Assumption~\ref{ass:SigmaAB} and $h\in X$.
\end{itemize}
For $y\in K^*$, the problem 
\begin{equation}
\hat x' = \argmin_{x' \in X} \| \Se^{-1/2} ABx' \|_Y^2 - 2 \lK y-\iota^*Ah, (\Se\iota)^{-1}ABx' \rK + \| x'\|_X^2
\label{eq:quadratic_pb}    
\end{equation}
admits a unique solution, which is
 given by the  bounded affine function $R'_{h,B}\colon K^* \rightarrow X$ defined as
\begin{equation}
R'_{h,B}(y) = (BA^* \Se^{-1} A B + I)^{-1}((\Se\iota)^{-1}AB)^* (y-\iota^*Ah).
    \label{eq:affine2}
\end{equation}
\end{proposition}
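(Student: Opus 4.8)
The plan is to solve the quadratic minimization problem \eqref{eq:quadratic_pb} directly by computing its gradient and setting it to zero, which is valid because the objective is a strictly convex quadratic functional in $x'$. Let me denote the objective by $\Phi(x')$. The first term $\|\Se^{-1/2}ABx'\|_Y^2$ equals $\langle \Se^{-1}ABx', ABx'\rangle_Y = \langle BA^*\Se^{-1}AB\, x', x'\rangle_X$, which is well-defined thanks to Assumption~\ref{ass:SigmaAB} (via \eqref{eq:ass23}) because $\operatorname{Im}(AB)\subseteq\operatorname{Im}(\Se)$, and defines a bounded self-adjoint positive operator $M := BA^*\Se^{-1}AB$ on $X$. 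The linear term $-2\lK y-\iota^*Ah, (\Se\iota)^{-1}ABx'\rK$ is linear in $x'$ and, via the adjoint $((\Se\iota)^{-1}AB)^*\colon K^*\to X$ (well-defined by the same assumption), equals $-2\langle ((\Se\iota)^{-1}AB)^*(y-\iota^*Ah), x'\rangle_X$. The third term $\|x'\|_X^2$ contributes the identity.

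First I would establish that $\Phi$ is strictly convex and coercive: its quadratic part is $\langle (M+I)x',x'\rangle_X \geq \|x'\|_X^2$, so it is bounded below and strictly convex, guaranteeing existence and uniqueness of the minimizer by the direct method (or by the Lax--Milgram-type argument for quadratic functionals on Hilbert spaces). Next I would compute the Fr\'echet derivative: for any direction $v\in X$,
\[
\tfrac{1}{2}\langle \nabla\Phi(x'), v\rangle_X = \langle (M+I)x', v\rangle_X - \langle ((\Se\iota)^{-1}AB)^*(y-\iota^*Ah), v\rangle_X.
\]
Setting this to zero for all $v$ yields the Euler equation $(M+I)x' = ((\Se\iota)^{-1}AB)^*(y-\iota^*Ah)$, i.e. $(BA^*\Se^{-1}AB + I)\,x' = ((\Se\iota)^{-1}AB)^*(y-\iota^*Ah)$. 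Since $M+I$ is boundedly invertible (as $M\geq 0$ implies $M+I\geq I$), I can invert it to obtain exactly formula \eqref{eq:affine2}.

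Finally I would verify the claimed boundedness and affine structure of $R'_{h,B}\colon K^*\to X$. The map $y\mapsto ((\Se\iota)^{-1}AB)^*(y-\iota^*Ah)$ is affine and bounded from $K^*$ to $X$ since $((\Se\iota)^{-1}AB)^*$ is a bounded operator from $K^*$ into $X$ and $\iota^*Ah$ is a fixed element; composing with the bounded operator $(M+I)^{-1}$ preserves this. The main obstacle I anticipate is the careful justification that all the formal manipulations are legitimate in infinite dimensions, specifically confirming that $(\Se\iota)^{-1}AB$ is well-defined as a bounded operator from $X$ into $K$ (so that its adjoint lands in $X$), which hinges crucially on the inclusion $\operatorname{Im}(AB)\subseteq\operatorname{Im}(\Se\iota)$ from Assumption~\ref{ass:SigmaAB} together with the injectivity of $\Se\iota$; one must check that the composition is not merely densely defined but extends boundedly, which is where the structure imposed by \eqref{eq:ass23} does the essential work.
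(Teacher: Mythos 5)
Your overall route is the same as the paper's: observe that the functional is strictly convex, compute the derivative, set it to zero to get the Euler equation $(BA^*\Se^{-1}AB+I_X)\hat x' = ((\Se\iota)^{-1}AB)^*(y-\iota^*Ah)$, and invert $BA^*\Se^{-1}AB+I_X$ using that it is the identity plus a non-negative self-adjoint operator. That part is complete and correct. However, there is one genuine gap: the boundedness of $R'_{h,B}\colon K^*\to X$, which is part of the statement, rests entirely on the claim that $((\Se\iota)^{-1}AB)^*\colon K^*\to X$ is a bounded operator, and you never prove this. You invoke it at the start (``well-defined by the same assumption'') and then, at the end, explicitly defer it (``one must check that the composition \ldots extends boundedly, which is where the structure imposed by \eqref{eq:ass23} does the essential work''). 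Flagging where the work lies is not the same as doing it: as written, the adjoint notation $((\Se\iota)^{-1}AB)^*$ is only meaningful once $(\Se\iota)^{-1}AB\colon X\to K$ is known to be bounded, so both the well-posedness of the linear term in \eqref{eq:quadratic_pb} as a bounded functional of $x'$ and the boundedness claim in \eqref{eq:affine2} hang on this unproven step.

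The missing argument is short but essential, and it is exactly what the paper supplies via the closed graph theorem. First, note that \eqref{eq:ass23} together with the injectivity of $\Se\iota$ makes $(\Se\iota)^{-1}AB$ defined on \emph{all} of $X$ --- so, contrary to your phrasing, there is no ``extension'' from a dense domain at stake; the only issue is continuity of an everywhere-defined operator. Then: $\Se\iota\colon K\to Y$ is bounded, hence closed, hence its inverse $(\Se\iota)^{-1}\colon \iota(K)\subseteq Y\to K$ is a closed operator; $AB\colon X\to Y$ is bounded, hence closed; therefore the composition $(\Se\iota)^{-1}AB\colon X\to K$ is closed, and a closed everywhere-defined operator between Hilbert spaces is bounded by the closed graph theorem. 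With this lemma in place, its adjoint $((\Se\iota)^{-1}AB)^*\colon K^*\to X$ is bounded, and the rest of your argument goes through verbatim. Note that naive approaches (e.g.\ trying to bound $\|(\Se\iota)^{-1}ABx'\|_K$ directly by factoring through $\Se^{-1/2}$) would fail here, since $\Se^{-1}$ and $(\Se\iota)^{-1}$ are unbounded; the closed graph theorem is the device that converts the purely algebraic inclusion \eqref{eq:ass23} into a quantitative bound.
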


\begin{proof}
We have
\[
\hat{x}' = \argmin_{x' \in X} G(x'), \quad G(x') = \| \Se^{-1/2} ABx' \|_Y^2 - 2 \lK y-\iota^* A\vh,(\Se\iota)^{-1}ABx' \rK + \| x'  \|_X^2.
\]
    Since the functional $G$ is strictly convex and differentiable,  we can find its unique minimum by computing the first-order optimality condition. The Gateaux derivative of $G$ in $x'$ along the direction $w$ reads as
\[
\begin{split}
G'(x')[w] &= 2 \langle BA^*\Se^{-1}AB x', w \rangle_X - 2 \lK y-\iota^*A\vh, (\Se\iota)^{-1}AB w \rK + 2 \langle x',w \rangle_X\\
& = 2 \langle BA^*\Se^{-1}AB x', w \rangle_X - 2 \langle ((\Se\iota)^{-1}AB)^* (y-\iota^*A\vh), w \rangle_X + 2 \langle x',w \rangle_X.
\end{split}
\]
 Imposing that $G'(\hat x')[w] = 0$ for all $w \in X$ therefore implies that
\[
(BA^*\Se^{-1}AB + I_X)\hat x' = ((\Se\iota)^{-1}AB)^* (y-\iota^*A\vh).
\]
The operator $BA^*\Se^{-1}AB + I_X$ is invertible since it is a perturbation of the identity by a self-adjoint, non-negative operator, which leads to the expression of the minimizer
\[
\hat{x}' = (BA^*\Se^{-1}AB + I_X)^{-1}(((\Se\iota)^{-1}AB)^* (y-\iota^*A\vh),
\]
 as  in \eqref{eq:affine2}, where we also use that $B$ is self-adjoint since it is positive. 

We now show that $R'_{h,B}\colon K^* \rightarrow X$ is bounded. We need to show that
\begin{equation*}
(BA^*\Se^{-1}AB + I_X)^{-1}(((\Se\iota)^{-1}AB)^*\colon K^*\to X
\end{equation*}
is bounded. As observed above, since $BA^*\Se^{-1}AB$ is self-adjoint and non-negative, we have that $\|(BA^*\Se^{-1}AB + I_X)^{-1}\|_{X\to X}\le 1$. Thus,  it remains to show that
$
((\Se\iota)^{-1}AB)^*\colon K^*\to X
$
is bounded. Recall that this composition is well defined thanks to Assumption~\ref{ass:SigmaAB} (see \eqref{eq:ass23}). We prove that
\begin{equation}\label{eq:comp}
(\Se\iota)^{-1}AB\colon X\to K
\end{equation}
is bounded. By assumption, the map $\Se\iota\colon K\to Y$ is bounded, hence closed. Therefore, $(\Se\iota)^{-1}\colon \iota(K)\subseteq Y\to K$ is closed too. By assumption,  $AB\colon X\to Y$ is bounded, hence closed. Thus, the composition \eqref{eq:comp} is closed, hence bounded thanks to the closed graph theorem.
\end{proof}

In view of this result, the regularized solution $\hat x=h+B\hat x'$ to the inverse problem may be written as in \eqref{eq:affine}:
\begin{equation}
\hat x=R_{h,B}(y)=W_B y + b_{h,B},
    \label{eq:affine3}
\end{equation}
where
\begin{equation}\label{eq:Wandb}
\begin{aligned}
    W_B &=  B(BA^* \Se^{-1} A B + I_X)^{-1} ((\Se\iota)^{-1}AB)^*,\\
    b_{h,B} &= h-B (BA^* \Se^{-1} A B + I_X)^{-1} ((\Se\iota)^{-1}AB)^*\iota^*A\vh,
\end{aligned}
\end{equation}
and $W_B\colon  K^* \rightarrow X$ is bounded.

We now wish to derive an alternative expression for $W_B$. 

\begin{proposition}\label{prop:alternative}
Assume that the hypotheses of Proposition~\ref{prop:quad_min} hold true. The operator $B^2 A^*(\iota^*(AB^2A^*+\Se))^{-1}$ extends to a bounded linear operator from $K^*$ to $X$, which coincides with $W_B$. With an abuse of notation, we have
\begin{equation}\label{eq:WB2}
W_B=B^2 A^*(\iota^*(AB^2A^*+\Se))^{-1}.
\end{equation}
\end{proposition}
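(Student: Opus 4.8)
The plan is to identify the bounded operator $W_B$ of \eqref{eq:Wandb} with the a priori only densely defined operator $\wt W := B^2 A^*(\iota^*(AB^2A^*+\Se))^{-1}$ by showing that the two agree on a dense subspace of $K^*$; since $W_B$ is already bounded by Proposition~\ref{prop:quad_min}, this is exactly the assertion that $\wt W$ extends to $W_B$. First I would record that $T := AB^2A^* + \Se \colon Y\to Y$ is bounded, self-adjoint and injective, being the sum of the positive operator $AB^2A^*$ and the injective operator $\Se$ of Assumption~\ref{ass:e_rand}; hence $\operatorname{Im} T$ is dense in $Y$. Because $\iota$ is injective with dense range, $\iota^*\colon Y\to K^*$ is injective with dense range, so $\mathcal D := \iota^*(\operatorname{Im} T)$ is dense in $K^*$ and $\iota^* T$ is injective. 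Consequently, for $\eta = \iota^* T\psi \in \mathcal D$ the vector $\psi\in Y$ is uniquely determined and, by definition, $\wt W\eta = B^2 A^*\psi$.

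The computational heart is to evaluate $((\Se\iota)^{-1}AB)^*\eta$ for $\eta = \iota^* T\psi$. Unwinding the duality pairing, for every $v\in X$ one has $\langle ((\Se\iota)^{-1}AB)^*\eta, v\rangle_X = \lK \eta, (\Se\iota)^{-1}ABv \rK = \langle T\psi, \Se^{-1}ABv\rangle_Y$, where I use $\iota(\Se\iota)^{-1}ABv = \Se^{-1}ABv$, an element of $\iota(K)\subseteq Y$ thanks to Assumption~\ref{ass:SigmaAB} (which guarantees $ABv\in\operatorname{Im}(\Se\iota)$). Splitting $T\psi = AB^2A^*\psi + \Se\psi$ and using that $\Se$ and $B$ are self-adjoint while $AB$ is bounded, the $\Se\psi$ term collapses to $\langle BA^*\psi, v\rangle_X$ and the $AB^2A^*\psi$ term to $\langle BA^*\psi, BA^*\Se^{-1}ABv\rangle_X$; adding them gives $\langle BA^*\psi, (BA^*\Se^{-1}AB + I_X)v\rangle_X$. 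Since $BA^*\Se^{-1}AB + I_X$ is self-adjoint, this yields the key identity $((\Se\iota)^{-1}AB)^*\eta = (BA^*\Se^{-1}AB + I_X)BA^*\psi$.

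Finally I would substitute this into \eqref{eq:Wandb}: the factor $(BA^*\Se^{-1}AB + I_X)^{-1}$ cancels, leaving $W_B\eta = B\cdot BA^*\psi = B^2A^*\psi = \wt W\eta$. As $\eta$ ranges over the dense set $\mathcal D$, the bounded operator $W_B$ agrees with $\wt W$ there, which is precisely the claim \eqref{eq:WB2}. The main difficulty here is not the strategy but the rigor: every manipulation involves the unbounded operator $\Se^{-1}$ together with the deliberate non-identification of $K$ with $K^*$, so one must check that each intermediate object—in particular $\Se^{-1}ABv$ and $BA^*\Se^{-1}ABv$—lies in the correct space, which is exactly what Assumption~\ref{ass:SigmaAB} secures.
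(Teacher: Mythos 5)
Your proof is correct and follows essentially the same route as the paper: both arguments amount to verifying the intertwining identity $W_B\,\iota^*(AB^2A^*+\Se)=B^2A^*$, i.e.\ that $W_B$ agrees with $B^2A^*(\iota^*(AB^2A^*+\Se))^{-1}$ on the range of the injective, densely-ranged operator $\iota^*(AB^2A^*+\Se)$, and then conclude using the boundedness of $W_B$. The only difference is presentational: where the paper derives the key cancellation $((\Se\iota)^{-1}AB)^*\iota^*(AB^2A^*+\Se)=(BA^*\Se^{-1}AB+I_X)BA^*$ from the operator-level identity \eqref{eq:composition}, you obtain it by unwinding the duality pairing on vectors $\eta=\iota^*(AB^2A^*+\Se)\psi$, and you spell out the density of this set in $K^*$, a point the paper leaves implicit in this proof.
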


\begin{proof}
First, observe that 
  \[
      ((\Se\iota)^{-1}AB)^* \iota^*\Se =
      ((\Se\iota)^{-1}AB)^* (\Se\iota)^*
      = ((\Se\iota)(\Se\iota)^{-1}AB)^* 
      = (AB)^*=BA^*,
  \]
so that
 \begin{equation}\label{eq:composition}
  ((\Se\iota)^{-1}AB)^* \iota^*|_{\Im\Se}=BA^*\Se^{-1}.
   \end{equation}
   This identity will be used below and in the following.
   
In order to prove the result, since the operator $\iota^*(AB^2A^*+\Se)$ is injective, it is enough to show that $W_B$ satisfies
\[
W_B \iota^*(AB^2A^*+\Se)=B^2 A^*.
\]
Replacing the expression of $W_B$ given in \eqref{eq:Wandb}, we obtain
   \[
   B(BA^* \Se^{-1} A B + I_X)^{-1} ((\Se\iota)^{-1}AB)^* \iota^*(AB^2A^*+\Se)=B^2 A^*.
   \]
   Since $B$ satisfies Assumption~\ref{ass:SigmaAB}, we have $\Im (AB^2 A^*)\subseteq\Im(\Se\iota)\subseteq\Im\Se$. Thus, by \eqref{eq:composition}, the above identity is equivalent to
    \[
   B(BA^* \Se^{-1} A B + I_X)^{-1} BA^*\Se^{-1}(AB^2A^*+\Se)=B^2 A^*.
   \]
   In order to prove this identity, it is enough to show that
   \[
   (BA^* \Se^{-1} A B + I_X)^{-1} BA^*\Se^{-1}(AB^2A^*+\Se)=B A^*.
   \]
   Since $B A^*\Se^{-1}AB + I_X$ is invertible with bounded inverse, this identity is equivalent to
   \[
    BA^*\Se^{-1}(AB^2A^*+\Se)=(BA^* \Se^{-1} A B + I_X)B A^*.
   \]
   A quick visual inspection shows that this is always true, concluding the proof.
\end{proof}

\subsection{Proof of Theorem~\ref{prop:quadratic_target_inf2} and of \eqref{eq:minimum}}\label{sec:app3}

The proof of Theorem~\ref{prop:quadratic_target_inf2} is based on the following observation.
\begin{lemma}\label{lem:injectiveB}
Assume that the hypotheses of Theorem~\ref{prop:quadratic_target_inf2} hold true. Let $B_1,B_2\colon X\to X$ satisfy Assumption~\ref{ass:SigmaAB} and suppose that $B_1$ is injective. Then $W_{B_1}=W_{B_2}$ if and only if
\[
B_1^2=B_2^2\quad\text{in }(\ker A)^\perp.
\]
\end{lemma}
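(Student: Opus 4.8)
The plan is to prove both implications using the two equivalent expressions for $W_B$ established earlier: the factored form \eqref{eq:Wandb} and the alternative form \eqref{eq:WB2}. Throughout I write $M_i = B_i^2$ and let $P$ denote the orthogonal projection onto $(\ker A)^\perp$, so that $A=AP$ and $A^*=PA^*$.

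For the easy direction, suppose $B_1^2 = B_2^2$ on $(\ker A)^\perp$, that is $M_1 P = M_2 P$. Since $A^* = PA^*$, this gives at once $M_1 A^* = M_1 P A^* = M_2 P A^* = M_2 A^*$, and consequently $A M_1 A^* = A M_2 A^*$. Plugging these two identities into the alternative expression \eqref{eq:WB2}, namely $W_{B_i} = M_i A^*(\iota^*(A M_i A^* + \Se))^{-1}$, yields $W_{B_1} = W_{B_2}$ immediately.

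For the converse, set $W = W_{B_1} = W_{B_2}$ and $D = M_1 - M_2$. The first step is to turn the equality of the two operators into a single identity. Using the relation $W_{B_i}\,\iota^*(A M_i A^* + \Se) = M_i A^*$ (established in the proof of Proposition~\ref{prop:alternative}) for $i=1,2$ and subtracting, the $\Se$-terms cancel and I obtain
\[
(I - W\iota^* A)\,D A^* = 0 .
\]
Writing $T = I - W\iota^* A$ and $E = DA^*$, the goal becomes to show $E=0$: indeed $E=0$ means $Dv=0$ for every $v$ in the range of $A^*$, hence, by continuity of $D$ and density, on all of $\overline{\Im A^*}=(\ker A)^\perp$, which is exactly $B_1^2 = B_2^2$ there.

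The heart of the argument, and the step I expect to be the main obstacle, is proving that $T$ is injective, for then $TE=0$ forces $E=0$. Here I would exploit the injectivity of $B_1$ through an algebraic identity for $T$. Setting $Q = B_1 A^* \Se^{-1} A B_1$ and using that $\Im(AB_1)\subseteq \Im(\Se\iota)\subseteq \Im\Se$ together with the composition rule \eqref{eq:composition}, one computes $W\iota^* A B_1 = B_1(Q+I)^{-1}Q$, whence
\[
T B_1 = B_1 - W\iota^* A B_1 = B_1(Q+I)^{-1} .
\]
Since $Q+I$ is boundedly invertible and $B_1$ is injective, the right-hand side is injective, so $T$ is injective on $\Im B_1$. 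To upgrade this to injectivity on all of $X$, I would use the factored form \eqref{eq:Wandb}: if $Tx=0$ then $x = W\iota^* A x = B_1\big[(Q+I)^{-1}((\Se\iota)^{-1}AB_1)^*\iota^* A x\big]\in \Im B_1$, so that $\ker T\subseteq \Im B_1$; combined with injectivity of $T$ on $\Im B_1$ this gives $\ker T = \{0\}$. Applying this to $TE=0$ yields $E=DA^*=0$ and completes the proof.
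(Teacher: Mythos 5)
Your proof is correct, and its ``only if'' direction follows a genuinely different route from the paper's. The paper proves both implications at once through a chain of algebraic equivalences: it equates the factored form \eqref{eq:Wandb} of $W_{B_1}$ with the alternative form \eqref{eq:WB2} of $W_{B_2}$, multiplies on the right by $\iota^*(AB_2^2A^*+\Se)$, peels off the factors $B_1$ and $(B_1A^*\Se^{-1}AB_1+I_X)^{-1}$ on the left (this is where the injectivity of $B_1$ enters; note the equation itself forces $\Im(B_2^2A^*)\subseteq\Im B_1$, so $B_1^{-1}$ may be applied), and then uses \eqref{eq:composition} to collapse everything to $(B_1^2-B_2^2)A^*=0$. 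You instead treat the two operators symmetrically: you take the intertwining relation $W_{B_i}\,\iota^*(AB_i^2A^*+\Se)=B_i^2A^*$ from Proposition~\ref{prop:alternative} for both $i=1,2$, subtract so that the $\Se$-terms cancel, and reduce the whole problem to the injectivity of the residual operator $T=I-W\iota^*A$, which you obtain from the identity $TB_1=B_1(Q+I)^{-1}$, with $Q=B_1A^*\Se^{-1}AB_1$, together with the observation $\ker T\subseteq\Im B_1$ --- this is where the injectivity of $B_1$ enters in your argument, and both steps check out against \eqref{eq:Wandb}, \eqref{eq:composition} and Assumption~\ref{ass:SigmaAB}. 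The paper's computation is shorter and needs no auxiliary injectivity statement; your version buys a cleaner logical structure: the difference $B_1^2-B_2^2$ is isolated in a single subtraction, the hypothesis on $B_1$ is quarantined in one structural fact (that $I-W_{B_1}\iota^*A$ is injective), and that fact would be the natural ingredient to reuse if one wanted to extend the uniqueness statement to a broader class of affine estimators.
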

\begin{proof}
Using \eqref{eq:Wandb} for $W_{B_1}$ and \eqref{eq:WB2} for $W_{B_2}$, the condition $W_{B_1}=W_{B_2}$ reads
\[
B_1(B_1A^* \Se^{-1} A B_1 + I_X)^{-1} ((\Se\iota)^{-1}AB_1)^* = 
B_2^2 A^*(\iota^*(AB_2^2A^*+\Se))^{-1}.
\]
Since $B_1$ is injective, this is equivalent to
\[
((\Se\iota)^{-1}AB_1)^*\iota^*(AB_2^2A^*+\Se)=(B_1A^* \Se^{-1} A B_1 + I_X)B_1^{-1}B_2^2 A^*.
\]
Since $B_2$ satisfies Assumption~\ref{ass:SigmaAB}, we have $\Im (AB_2^2 A^*)\subseteq\Im(\Se\iota)\subseteq\Im\Se$. Thus, by \eqref{eq:composition},
\[
B_1 A^*\Se^{-1} (AB_2^2A^*+\Se)=(B_1A^* \Se^{-1} A B_1 + I_X)B_1^{-1}B_2^2 A^*.
\]
We readily derive
\[
B_1 A^*\Se^{-1} AB_2^2A^*+B_1 A^*=B_1A^* \Se^{-1} A B_2^2 A^* + B_1^{-1}B_2^2 A^*,
\]
yielding $(B_1^2 -B_2^2) A^*=0$. By continuity of $B_1$ and $B_2$, this is equivalent to having $B_1^2 -B_2^2=0$ in $\overline{\Im A^*}=(\ker A)^\perp$, as desired.
\end{proof}

\begin{proof}[Proof of Theorem~\ref{prop:quadratic_target_inf2}]

Recall that $R_{h,B}$ is given by \eqref{eq:affine3}.

\textit{Step 1: arbitrary affine estimators.}
We first consider the case of an arbitrary affine estimator $y\mapsto Wy+b$, where  $W\colon  K^* \rightarrow X$ is a bounded linear operator and $b \in X$. Thanks to the independence of $x$ and $\varepsilon$ and the fact that $\mathbb{E}\,x=\mu$ and $\mathbb{E}\,\epsilon=0$, the corresponding expected error can be expressed as
\[
\begin{split}
\mathbb{E}_{x,y} [\| Wy +b &- x\|_X^2] =  \mathbb{E}_{x,\epsilon}[\| (W (\iota^* A x +  \varepsilon) +b-x \|_X^2] \\
&=\mathbb{E}_{x,\epsilon}[\| (W \iota^* A - I_X)x + W \varepsilon +b \|_X^2] \\
&=  \mathbb{E}_x[\| (W \iota^* A - I_X)(x-\mu) \|_X^2] + \|  (W \iota^* A - I_X) \mu + b\|_X^2 + \mathbb{E}_\epsilon[\| W \varepsilon \|_X^2] \\
&=  \tr[ (W \iota^* A - I_X) \Sx (W \iota^* A - I_X)^* ] + \tr[W \iota^* \Se \iota W^* ] + \|  (W \iota^* A - I_X) \mu + b\|_X^2,
\end{split}
\]
where the last step  is a consequence of the definition of the covariance operators, e.g.
\[
\begin{split}
\mathbb{E}_\epsilon[\|W\varepsilon\|_X^2] & = \sum_{i} \mathbb{E}_\epsilon[\langle W \varepsilon, \varphi_i \rangle_X^2] \\
&= \sum_{i} \mathbb{E}_\epsilon[\lK \varepsilon, W^*\varphi_i \rK^2]\\
& = \sum_{i} \lY \Se \iota W^*\varphi_i, \iota W^*\varphi_i \rY \\
&= 
\sum_{i} \langle W \iota^* \Se \iota W^*\varphi_i, \varphi_i \rangle_X\\
&= \tr [W \iota^* \Se \iota W^*], 
\end{split}
\]
where $\{\varphi_i\}$ is an orthonormal basis of $X$ and the third identity follows from \eqref{eq:newcov}.

The minimization of the mean square error easily decouples in a minimization in $b$, yielding 
\begin{equation}\label{eq:b}
    b = (I_X-W\iota^* A)\mu,
\end{equation}
and in finding $W$ that minimizes
\[
J(W) = \tr\left[(W\iota^*A-I_X)\Sx(W\iota^*A-I_X)^*+W\iota^*\Se \iota W^*\right].
\]
It is worth observing that, under the introduced hypotheses, such a functional is well-defined. Indeed, since $\iota^*\Se\iota$ is trace-class (cfr.\ eq.\ \eqref{eq:iotaSe}) and $W$ is a bounded operator, the composition $W\iota^*\Se \iota W^*$ defines a trace-class operator, and analogously with the first term, since $\Sx$ is trace-class.

\textit{Step 2: the optimal $B$.}
Let us consider the minimization of $J$. Note that $J$ is convex and differentiable, hence its minimizer can be found by imposing the following first-order optimality condition

Fix an operator $V:K^*\to X$, by imposing that the Gateaux derivative of $J(W)$ along $V$ is zero, we get
\begin{equation}\label{eq:optimalV}
 \tr\left[ V\iota^*(A\Sx A^*+\Se)\iota W^* + W\iota^*(A\Sx A^*+\Se)\iota V^*- V\iota^*A\Sx-\Sx A^*\iota V^* \right] =0.
\end{equation}
Choose $V=w\otimes v:K^*\to X$, where $v\in K$ and $w\in X$, then 
\[ 
\langle \iota^*(A\Sx A^*+\Se)\iota W^* w,v\rangle_{K^*\times K} + \langle W\iota^*(A\Sx A^*+\Se)\iota v,w\rangle_{X} = 
\langle \iota^* A\Sx w,v\rangle_{K^*\times K} + \langle \Sx A^*  \iota v,w\rangle_{X},
\]
so that
\[
\langle W\iota^*(A\Sx A^*+\Se)\iota v,w\rangle_{X} = \langle \Sx A^*  \iota v,w\rangle_{X}.
\]
Since $v$ and $w$ are arbitrary, we get
\begin{equation}\label{eq:optimal}
  W (\iota^* A\Sx A^*\iota + \iota^*\Se \iota ) = \Sx A^* \iota.
\end{equation}
It is easy to show that, if $W$ satisfies~\eqref{eq:optimal}, equality~\eqref{eq:optimalV} holds true for all $V$.
Since $\iota(K)$ is dense in $Y$ and $W$, $A$, $\Sx$ and $\Se$ are bounded,~\eqref{eq:optimal} is also equivalent to
\begin{equation}\label{eq:Wunique}
W \iota^*( A\Sx A^* + \Se  ) = \Sx A^* .
\end{equation}
Observe that the operator $A\Sx A^* + \Se\colon Y\to Y$ is positive and injective, hence it has dense range. Further, $\iota$ is injective, and so $\iota^*\colon Y\to K^*$ has dense range. Thus,  $\iota^*( A\Sx A^* + \Se  )$ has dense range. This shows that there exists at most one bounded operator $W\colon K^*\to X$ satisfying \eqref{eq:Wunique}. Furthermore, Proposition~\ref{prop:alternative} gives that 
$W_{\Sx^{1/2}}$ satisfies \eqref{eq:Wunique}, so that $W_{\Sx^{1/2}}$ is the unique global minimizer of $J$.
Since $\Sx$ is injective, by Lemma~\ref{lem:injectiveB} we have that the $B$'s such that $W_B=W_{\Sx^{1/2}}$ are those satisfying $B^2=\Sx$ in $(\ker A)^\perp$, as desired.

\textit{Step 3: the optimal $h$.}
Let us consider \eqref{eq:b}. It is evident that $b$ is uniquely determined by $W$, and we know that $W=W_{\Sx^{1/2}}$. We now show that,  in the case $b=b_{\Sx^{1/2},h}$ and $W=W_{\Sx^{1/2}}$,  equation \eqref{eq:b} reduces to $h=\mu$. Indeed, we have
\begin{align*}
&&& h-\Sx^{1/2} (\Sx^{1/2}A^* \Se^{-1} A \Sx^{1/2} + I_X)^{-1} ((\Se\iota)^{-1}A\Sx^{1/2})^*\iota^*A\vh = (I_X-W_{\Sx^{1/2}}\iota^* A)\mu\\
&\iff && \Sx^{1/2} (\Sx^{1/2}A^* \Se^{-1} A \Sx^{1/2} + I_X)^{-1} ((\Se\iota)^{-1}A\Sx^{1/2})^*\iota^*A\vh = h-\mu+W_{\Sx^{1/2}}\iota^* A\mu \\
&\iff &&   ((\Se\iota)^{-1}A\Sx^{1/2})^*\iota^*A(\vh-\mu) = (\Sx^{1/2}A^* \Se^{-1} A \Sx^{1/2} + I_X)\Sx^{-1/2}(h-\mu) \\
&\iff &&   \Sx^{1/2} A^*\Se^{-1} A(\vh-\mu) = \Sx^{1/2}A^* \Se^{-1} A (h-\mu) + \Sx^{-1/2}(h-\mu) \\
&\iff &&    \Sx^{-1/2}(h-\mu)=0. \\
\end{align*}
Therefore the optimal value is $\hOp=\mu$.
\end{proof}

\begin{proof}[Proof of~\eqref{eq:minimum}]
We provide an expression for the minimum value of the expected loss, $L(\hOp,\BOp)$. We have
\[
L(\hOp,\BOp) = J(\WOp) = \tr\left[(\WOp \iota^* A-I_X)\Sx(\WOp\iota^*A-I_X)^* +\WOp \iota^* \Se \iota {\WOp}^* \right],
\]
where the optimal linear functional $\WOp$ satisfies \eqref{eq:Wunique}, namely,
%\[
%\WOp = \Sx^{1/2} (\Sx^{1/2}A^* \Se^{-1}A \Sx^{1/2} + I_X)^{-1} ((\Se\iota)^{-1}A\Sx^{1/2})^*.
%\]
%Recall that, from Assumption \ref{ass:SigmaAB}, it holds $\iota \iota^{-1}\Se^{-1}AB = \Se^{-1}AB$, therefore $(\iota^{-1}\Se^{-1}AB)^*\iota^* = (\Se^{-1}AB)^*$. Moreover, by an application of the Woodbury identity,
%\[
%\begin{aligned}
%    (\WOp \iota^*)^* &= \Se^{-1}A \Sx^{1/2}(\Sx^{1/2}A^*\Se^{-1}A\Sx^{1/2} + I_X)^{-1} \Sx^{1/2} \\
%    &= \Se^{-1}A\big( \Sx - \Sx A^*(\Se + A\Sx A^*)^{-1} A \Sx\big) \\
%    &= \Se^{-1} \big( I_Y - A \Sx A^*(\Se + A\Sx A^*)^{-1} \big) A \Sx = \Se^{-1}\Se(\Se + A \Sx A^*)^{-1}A\Sx,
%\end{aligned}
%\]
%from which we observe that 
$\WOp \iota^* = \Sx A^*(\Se + A\Sx A^*)^{-1}$. 
\[
\begin{aligned}
    J(\WOp) &= \tr\left[(\WOp \iota^*) (A\Sx A^* + \Se) \iota(\WOp)^* - \Sx A^*\iota (\WOp)^* - \WOp\iota^*A\Sx + \Sx \right]\\
    &= \tr\left[ \Sx A^* \iota(\WOp)^* 
    - \Sx A^*\iota (\WOp)^* - \WOp\iota^*A\Sx 
    + \Sx \right] = \\
    & = \tr\left[ 
    \Sx - \big( \Sx^{1/2}(\Sx^{1/2}A^*\Se^{-1}A\Sx^{1/2} + I_X)^{-1} ((\Se\iota)^{-1}A\Sx^{1/2} )^*\iota^*A\Sx  \big)
    \right] \\
    & = \tr\left[ \Sx^{1/2}\left(
    I_X - \big( (\Sx^{1/2}A^*\Se^{-1}A\Sx^{1/2} + I_X)^{-1} \Sx^{1/2} A^*\Se^{-1}A\Sx^{1/2}  \big) \right)\Sx^{1/2}
    \right]\\
    & = \tr\big( \Sx^{1/2}(\Sx^{1/2}A^*\Se^{-1}A\Sx^{1/2} + I_X)^{-1} \Sx^{1/2} \big),
\end{aligned}
\]
where the second line is a consequence of ~\eqref{eq:Wunique} the third line is due to~\eqref{eq:Wandb} and  the forth line holds true by Assumption~\ref{ass:SigmaAB} with $B=\Sx^{1/2}$.
 
%  \ga{[I'd remove this, e.g. the meaning of $\Se^{-1}A$ is not so clear and would require some  thoughts...]}
%  \ms{A more compact (equivalent) formula is
%  \begin{equation}
% L(\BOp,\hOp) = \tr\left( (A^*\Se^{-1}A + \Sx^{-1})^{-1} \right)
% \end{equation}
%  }
 
\end{proof}

\subsection{Proof of Theorem \ref{thm:gener_sup}}
\label{app:proof_supervised}

In order to prove Theorem \ref{thm:gener_sup}, we adapt the classical result on empirical risk minimization to the present discussion, in particular we follow the simple approach in \cite{cucker2002mathematical}. We postpone to a future work the use of more refined techniques 
%[34,37]. 
\cite{bartlett2002rademacher,koltchinskii2011oracle}.
We consider the parameter space $\Theta \subset X \times \mathcal{L}(X,X)$ as in \eqref{eq:Theta} and assume in particular that it satisfies \eqref{eq:2.8new}. We recall that every $B \in \Theta_2$ can be written as $j \spec{B} j^*$, being $\spec{B} \in \operatorname{HS}(H^*,H)$; moreover, since $j = j_2 \circ j_1$, we can also denote it as $ B= j_2 \specc{B} j_2^*$, where $\specc{B}\colon X \to X$, $\specc{B} = j_1 \spec{B} j_1^*$. Notice that, using that  $j_1$ and $j_2$ are injective and have dense range, given $B$, then $\spec{B}$ and $\specc{B}$ are uniquely determined. We can therefore define the following norms on $\Theta$:
    \[
    \begin{aligned}
    \| \theta \|_{*} & = \| (\vh,B)\|_{*} = \max \left\{ \| \vh \|_X, \| \specc{B} \|_{\mathcal{L}(X,X)} \right\}, \\    
    \| \theta \|_{**} & = \| (\vh,B)\|_{**} = \max \left\{ \| \spec{\vh} \|_\vH,  \| \spec{B} \|_{\operatorname{HS}(\vH^*,\vH)} \right\},
    \end{aligned}
    \]
where $\spec{\vh}=j^{-1}(\vh)$. Notice that, according to \eqref{eq:Theta}, the set $\Theta$ can be seen as a closed subset of the ball of radius $\varrho_1$ with respect to $\|\cdot \|_{**}$. Nevertheless, the first result we prove does not require that $\Theta$ is chosen as in \eqref{eq:Theta}, nor that the functional $R_\theta = R_{h,B}$ is as in \eqref{eq:affine}.

The following result is a restatement of Proposition~4 in \cite{cucker2002mathematical}.
\begin{lemma}
Fix  a compact subset $\Theta$ of $X \times \{j_2\specc{B}j_2^*:\specc{B} \colon X\to X\text{ bounded}\}$, endowed with the norm $\| \cdot \|_*$, and a family of functions $R_\theta \colon K^* \rightarrow X$ labelled by $\theta\in \Theta$ satisfying, for a.e.\ $(x,y)\in X \times K^*$:
\begin{enumerate}[a)]
    \item $\| R_\theta(y)-x\|_X \leq M_1 \ $ for every $\theta \in \Theta$;
    \item $\| R_{\theta_1}(y) - R_{\theta_2}(y)\|_X \leq M_2 \| \theta_1 - \theta_2\|_{*}$, for every $\theta_1,\theta_2 \in \Theta$.
\end{enumerate}
Then, with probability $1$ there exist minimizers of  $L$ and $\wh{L}$ over $\Theta$
\[
\thetaOp=\operatornamewithlimits{argmin}_{\theta\in\Theta} L(\theta),  \qquad \thetaS=\operatornamewithlimits{argmin}_{\theta\in\Theta} \wh{L}(\theta), 
\]
and,  for all $\eta > 0$,
\[
\Prob_{\vzv \sim \rho^m} \left[ |L(\thetaS) - L(\thetaOp)| \leq \eta \right] \geq 1 - 2\mathcal{N}\left( \Theta,\frac{\eta}{16M_1 M_2} \right) e^{-\frac{m\eta^2}{8M_1^4}},
\]
where $\mathcal{N}(\Theta,r)$ denotes the \textit{covering number} of $\Theta$, i.e., the minimum number of balls of radius $r$  (in norm $\|\cdot\|_{*}$) whose union contains $\Theta$.
\label{lem:CuckSmale}
\end{lemma}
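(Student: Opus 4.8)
The plan is to follow the classical covering-number route of \cite{cucker2002mathematical}, in three stages: establish existence of the minimizers together with a Lipschitz bound on the loss, reduce the excess error to a uniform deviation, and then discretize $\Theta$ and apply a pointwise concentration inequality. First I would introduce the pointwise loss $\ell_\theta(x,y) = \|R_\theta(y)-x\|_X^2$, so that $L(\theta)=\E_{(x,y)\sim\rho}[\ell_\theta(x,y)]$ and $\wh{L}(\theta)=\frac1m\sum_{j=1}^m \ell_\theta(x_j,y_j)$. Using the elementary identity $|a^2-b^2|=|a-b|(a+b)$ with $a=\|R_{\theta_1}(y)-x\|_X$ and $b=\|R_{\theta_2}(y)-x\|_X$, assumption a) gives $a+b\le 2M_1$, while the reverse triangle inequality combined with b) gives $|a-b|\le \|R_{\theta_1}(y)-R_{\theta_2}(y)\|_X\le M_2\|\theta_1-\theta_2\|_*$. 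Hence $|\ell_{\theta_1}(x,y)-\ell_{\theta_2}(x,y)|\le 2M_1M_2\|\theta_1-\theta_2\|_*$ for a.e.\ $(x,y)$, and averaging (over $\rho$, resp.\ over the sample) shows that both $L$ and $\wh{L}$ are $2M_1M_2$-Lipschitz on $(\Theta,\|\cdot\|_*)$, the latter almost surely. Since $\Theta$ is compact, these continuous functions attain their minima, which yields $\thetaOp$ and $\thetaS$ with probability one (the qualifier accounts for the finitely many samples all landing in the full-measure set on which a) and b) hold).

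Next I would reduce the excess error to a uniform deviation. Splitting
\[
L(\thetaS)-L(\thetaOp)=[L(\thetaS)-\wh{L}(\thetaS)]+[\wh{L}(\thetaS)-\wh{L}(\thetaOp)]+[\wh{L}(\thetaOp)-L(\thetaOp)],
\]
and using that $\wh{L}(\thetaS)\le \wh{L}(\thetaOp)$ by definition of $\thetaS$, the middle bracket is non-positive, so
\[
0\le L(\thetaS)-L(\thetaOp)\le 2\sup_{\theta\in\Theta}|L(\theta)-\wh{L}(\theta)|.
\]
It therefore suffices to show that $\sup_{\theta\in\Theta}|L(\theta)-\wh{L}(\theta)|\le \eta/2$ on an event of the stated probability.

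Finally, I would discretize $\Theta$. Setting $r=\eta/(16M_1M_2)$, cover $\Theta$ by $N=\mathcal{N}(\Theta,r)$ balls of radius $r$ (in $\|\cdot\|_*$) with centres $\theta_1,\dots,\theta_N$. For arbitrary $\theta$, picking a centre $\theta_i$ with $\|\theta-\theta_i\|_*\le r$ and invoking the $2M_1M_2$-Lipschitz bounds for $L$ and $\wh{L}$ gives $|L(\theta)-\wh{L}(\theta)|\le 4M_1M_2 r+\max_i|L(\theta_i)-\wh{L}(\theta_i)|=\eta/4+\max_i|L(\theta_i)-\wh{L}(\theta_i)|$. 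For each fixed centre, $\wh{L}(\theta_i)$ is an average of i.i.d.\ variables $\ell_{\theta_i}(x_j,y_j)\in[0,M_1^2]$ by a), so Hoeffding's inequality yields $\Prob[|L(\theta_i)-\wh{L}(\theta_i)|>\eta/4]\le 2e^{-m\eta^2/(8M_1^4)}$. A union bound over the $N$ centres then bounds the probability that $\max_i|L(\theta_i)-\wh{L}(\theta_i)|>\eta/4$, hence that $\sup_\theta|L-\wh{L}|>\eta/2$, by $2\mathcal{N}(\Theta,\eta/(16M_1M_2))\,e^{-m\eta^2/(8M_1^4)}$; passing to the complementary event and combining with the reduction above yields the claim.

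The computations are essentially routine, so I do not expect a genuine obstacle; the only points requiring care are the passage from the Lipschitz continuity of $R_\theta$ to that of the \emph{quadratic} loss (this produces the factor $2M_1M_2$ and thereby fixes the covering radius $\eta/(16M_1M_2)$), and the bookkeeping of constants so that the Hoeffding exponent comes out exactly as $m\eta^2/(8M_1^4)$ from the range $[0,M_1^2]$ of the loss. Compactness of $\Theta$ in $\|\cdot\|_*$ is what guarantees finiteness of $\mathcal{N}(\Theta,r)$ and hence a non-trivial bound; the explicit decay rate in $m$ is then obtained separately by estimating these covering numbers under the polynomial singular-value decay \eqref{eq:poly}.
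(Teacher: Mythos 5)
Your proof is correct and follows essentially the same route as the paper: Lipschitz continuity of the quadratic loss (giving the factor $2M_1M_2$), existence of minimizers by compactness, reduction of the excess risk to $2\sup_{\theta\in\Theta}|L(\theta)-\wh{L}(\theta)|$ via the empirical-risk-minimization property, and then covering plus Hoeffding plus a union bound. The only difference is bookkeeping: the paper covers at radius $\eta/(8M_1M_2)$ for deviation level $\eta$ and rescales $\eta\mapsto\eta/2$ at the end, whereas you fix the radius $\eta/(16M_1M_2)$ from the start; the two yield the identical final bound.
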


\begin{proof}
For $\rho$-almost all $(x,y)\in X\times K^*$
\[
\begin{aligned}
\left| \| R_{\theta_1}(y) - x\|_X^2 - \| R_{\theta_2}(y) - x\|_X^2 \right| & = \left| \langle R_{\theta_1}(y) - R_{\theta_2}(y), R_{\theta_1}(y)-x + R_{\theta_2}(y)- x \rangle \right| \\
 & \leq 2M_1 M_2 \| \theta_1 - \theta_2 \|_{*}.
\end{aligned}
\]
%\ga{[shouldn't this be $2L\sqrt{M}$? if so, it needs to be corrected everywhere]} \lr{ now $\sqrt{M}$ is $M_1$ in the hp}
By integrating with respect to the probability distribution $\rho$ or the empirical measure $\wh{\rho}$, the above bound holds for  $L,\wh{L}$. Indeed,
\begin{equation}\label{eq:lip1}
\begin{split}
 |L(\theta_1)-L(\theta_2)| &= \left|\E[\|R_{\theta_1}(y) - x\|_X^2] - \E[\|R_{\theta_2}(y) - x\|_X^2]\right| \\
 & \leq \E\left[|\|R_{\theta_1}(y) - x\|_X^2 - \|R_{\theta_2}(y) - x\|_X^2 |\right] \leq 2M_1 M_2 \| \theta_1-\theta_2\|_{*},
 \end{split}
 \end{equation}
 and, with probability $1$,
\begin{equation}\label{eq:lip2}
 \begin{split}
 |\wh{L}(\theta_1)-\wh{L}(\theta_2)| &=  \left| \frac{1}{m} \sum_{j = 1}^m \|R_{\theta_1}(y_j) - x_j\|_X^2 -  \frac{1}{m} \sum_{j = 1}^m \|R_{\theta_2}(y_j) - x_j\|_X^2 \right| \\
 & \leq \frac{1}{m} \sum_{j = 1}^m  \left| \|R_{\theta_1}(y_j) - x_j\|_X^2 - \|R_{\theta_2}(y_j) - x_j\|_X^2 \right| \\
 &\leq 2M_1 M_2 \| \theta_1-\theta_2\|_{*}.
 \end{split}
\end{equation}
Since both $L$ and $\wh{L}$ are Lipschitz continuous and $\Theta$ is compact, the corresponding minimizers $\thetaOp$ and $\thetaS$  exist almost surely.

Next, we  notice that the event $ \{ |L(\thetaS) - L(\thetaOp)| \leq \eta \}$ is a superset of the event $\left\{ \displaystyle \sup_{\theta \in \Theta} |\wh{L}(\theta) - L(\theta)| \leq \eta/2 \right\}$. Indeed, 
\[
\sup_{\theta \in \Theta} |\wh{L}(\theta) - L(\theta)| \leq \frac{\eta}{2} \quad \Rightarrow \quad
L(\thetaS) - \wh{L}(\thetaS) \leq  \frac{\eta}{2} \quad \text{and} \quad \wh{L}(\thetaOp) - L(\thetaOp) \leq \frac{\eta}{2},
\]
and ultimately it also holds that
\[
0\leq L(\thetaS) - L(\thetaOp) =  
\left(L(\thetaS) - \wh{L}(\thetaS)\right)+ 
\left(\wh{L}(\thetaS) - \wh{L}(\thetaOp)\right) + 
\left(\wh{L}(\thetaOp) - L(\thetaOp)\right) \leq \eta,
\]
where we also used the fact that the central difference is negative by  definition of $\thetaS$. Thus,
\[
\Prob_{\vzv \sim \rho^m} \left[ |L(\thetaS) - L(\thetaOp)| \leq \eta \right] \geq \Prob_{\vzv \sim \rho^m} \left[ \sup_{\theta \in \Theta} |\wh{L}(\theta) - L(\theta)| \leq \frac{\eta}{2} \right].
\]

We now provide a lower bound for the latter term. 
In view of \eqref{eq:lip1} and \eqref{eq:lip2}, by using the reverse triangle inequality, for every $\theta_1,\theta_2 \in \Theta$,
\[
 \left| |\wh{L}(\theta_1) - L(\theta_1)| - |\wh{L}(\theta_2) - L(\theta_2)|  \right| \leq 4M_1 M_2 \| \theta_1 - \theta_2 \|_*.
\]
Let now $N=\mathcal{N}\left( \Theta,\frac{\eta}{8M_1 M_2} \right)$ and consider a discrete set $\theta_1, \ldots, \theta_N$ such that the balls $B_k$ centered at $\theta_k$ with radius $\frac{\eta}{8M_1 M_2}$ cover the entire $\Theta$. In each ball $B_k$, for every $\theta \in B_k$ it holds
\[
 \left| |\wh{L}(\theta) - L(\theta)| - |\wh{L}(\theta_k) - L(\theta_k)|  \right| \leq 4M_1 M_2 \| \theta - \theta_k \|_* \leq \frac{\eta}{2}.
\]
Therefore, the event $|\wh{L}(\theta) - L(\theta)| > \eta$ is a subset of $|\wh{L}(\theta_k) - L(\theta_k)| > \frac{\eta}{2}$, and a bound (in probability) of this term can be provided by standard concentration results. Indeed, $\wh{L}(\theta_k)$ is the sample average of $m$ realization of the random variable $\|R_{\theta_k}(y) - x\|_X^2$, whose expectation is $L(\theta_k)$. Moreover, such random variable is bounded by $M_1^2$ by assumption, and therefore via Hoeffding's inequality
\[
\begin{aligned}
 \Prob_{\vzv \sim \rho^m}\left[{\sup_{\theta \in B_k}} |\wh{L}(\theta) - L(\theta)| > \eta \right] & \leq \Prob_{\vzv \sim \rho^m}\left[ |\wh{L}(\theta_k) - L(\theta_k)| > \frac{\eta}{2} \right] \leq 2 e^{-\frac{m\eta^2}{2M_1^4}}.
 \end{aligned}
 \]
 Notice that this inequality holds uniformly in $k$. Finally, since $\Theta$ is covered by the union of the balls $B_1, \ldots, B_N$, with $N = \mathcal{N}\left( \Theta,\frac{\eta}{8M_1 M_2} \right)$, we finally obtain
 \begin{equation*}
 \begin{split}
  \Prob_{\vzv \sim \rho^m} \left[ \sup_{\theta \in \Theta} |\wh{L}(\theta) - L(\theta)| \leq \eta \right]  &= 1 - \Prob_{\vzv \sim \rho^m} \left[ \sup_{\theta \in \Theta} |\wh{L}(\theta) - L(\theta)| > \eta \right] \\
  & \geq
  1- \sum_{k=1}^N \Prob_{\vzv \sim \rho^m} \left[ \sup_{\theta \in B_k} |\wh{L}(\theta) - L(\theta)| > \eta \right] \\
  &\geq 1 - 2N e^{-\frac{m\eta^2}{2M_1^4}}. \qedhere
 \end{split}
 \end{equation*}
\end{proof}

Lemma \ref{lem:CuckSmale} provides a very general result: in order to apply it to our current framework, we have to first show that the functional $R_\theta$ defined as in \eqref{eq:affine} satisfies the assumptions $a)$ and $b)$ in the statement. This is the subject of the following result.

\begin{lemma}
Under the assumptions of Section~\ref{sec:supervised}, let $\Theta$ be as in \eqref{eq:Theta}.
%Let the joint probability distribution $\rho$ satisfy \eqref{eq:bound_supp}, let $\Theta$ be defined as \eqref{eq:Theta} and assume that condition \eqref{eq:2.8new} holds \ms{[maybe %easier to just ask that assumptions a) and b) of Section 4.1 hold?]}. 
then the family of functions $R_\theta\colon K^* \rightarrow X$, defined by \eqref{eq:affine},  satisfies the assumptions of Lemma \ref{lem:CuckSmale}.
\label{lem:quadratic_reg}
\end{lemma}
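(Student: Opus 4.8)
The plan is to verify the two hypotheses a) and b) of Lemma~\ref{lem:CuckSmale} for the affine maps $R_\theta = R_{h,B}$, exploiting the fact that, by \eqref{eq:affine3}, we may write $R_{h,B}(y) = W_B(y-\iota^*Ah) + h$ with $W_B$ given equivalently by \eqref{eq:Wandb} or \eqref{eq:WB2}. The whole argument reduces to establishing \emph{uniform} operator bounds over $\Theta$, and for this the factorization $B = j_2\specc{B}j_2^*$ together with \eqref{eq:2.8new} is essential: it guarantees, via the closed graph theorem exactly as in the proof of Proposition~\ref{prop:quad_min}, that the fixed operator $T := (\Se\iota)^{-1}Aj_2 \colon X \to K$ is bounded, so that $(\Se\iota)^{-1}AB = T\specc{B}j_2^*$ and hence $\|((\Se\iota)^{-1}AB)^*\|_{K^*\to X} \le \|T\|\,\|j_2\|\,\|\specc{B}\|_{\mathcal{L}(X,X)}$. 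Since $\|\specc{B}\|_{\mathcal{L}(X,X)} \le \|j_1\|^2\varrho_1$ for every $B \in \Theta_2$ (using that $\specc{B}=j_1\spec{B}j_1^*$ and that the Hilbert--Schmidt norm dominates the operator norm), all the relevant quantities are controlled uniformly by $\varrho_1$.

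For hypothesis a), I would first use the boundedness of the support \eqref{eq:bound_supp} to get $\|x\|_X, \|y\|_{K^*} \le \varrho_2$ almost surely, together with $\|h\|_X \le \|j\|\varrho_1$ for $h \in \Theta_1$. Writing $\|R_{h,B}(y)-x\|_X \le \|W_B\|_{K^*\to X}(\|y\|_{K^*} + \|\iota^*A\|\,\|h\|_X) + \|h\|_X + \|x\|_X$, it then suffices to bound $\|W_B\|_{K^*\to X}$ uniformly. Using \eqref{eq:Wandb}, the middle resolvent satisfies $\|(BA^*\Se^{-1}AB+I_X)^{-1}\|_{X\to X}\le 1$ (it is a positive perturbation of the identity), while $\|B\|_{\mathcal{L}(X,X)}$ and $\|((\Se\iota)^{-1}AB)^*\|_{K^*\to X}$ are controlled by the previous paragraph; combining these uniform bounds with the bounds on $\|x\|_X$ and $\|h\|_X$ yields the constant $M_1$.

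For hypothesis b), I would telescope through the intermediate point $(h_2,B_1)$:
\[
R_{\theta_1}(y)-R_{\theta_2}(y) = (I_X - W_{B_1}\iota^*A)(h_1-h_2) + (W_{B_1}-W_{B_2})(y-\iota^*Ah_2).
\]
The first summand is Lipschitz in $h$ with the uniformly bounded constant $\|I_X-W_{B_1}\iota^*A\|$, controlled by $\|h_1-h_2\|_X \le \|\theta_1-\theta_2\|_*$. For the second, since $\|y-\iota^*Ah_2\|_{K^*}$ is bounded on the support, everything comes down to estimating $\|W_{B_1}-W_{B_2}\|_{K^*\to X}$ by $\|\specc{B_1}-\specc{B_2}\|_{\mathcal{L}(X,X)} \le \|\theta_1-\theta_2\|_*$. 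Expressing $W_B$ through the fixed operators $T$, $j_2$, $\iota^*Aj_2$ and the quadratic-in-$\specc{B}$ operator $P(B) := BA^*\Se^{-1}AB$ (writing $P(B)=j_2\specc{B}^*T^*\iota^*Aj_2\specc{B}j_2^*$ via \eqref{eq:composition}), I would split $W_{B_1}-W_{B_2}$ into telescoping differences, one for each occurrence of $B$, handling the resolvent factor with the identity $(P(B_1)+I_X)^{-1}-(P(B_2)+I_X)^{-1} = (P(B_1)+I_X)^{-1}(P(B_2)-P(B_1))(P(B_2)+I_X)^{-1}$ and noting that $P(B_1)-P(B_2)$ is itself Lipschitz in $\specc{B}$ with uniform constants.

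The main obstacle is precisely this last Lipschitz estimate on $B \mapsto W_B$: because $W_B$ depends on $B$ rationally through the resolvent of the quadratic term $P(B)$, one cannot bound differences termwise but must combine the resolvent identity with the uniform operator bounds from the first paragraph, keeping careful track of which norm ($\mathcal{L}(X,X)$ versus the Hilbert--Schmidt norm) dominates each factor so that the final estimate is expressed in terms of $\|\cdot\|_*$. Once these uniform constants are assembled, the constant $M_2$ is obtained and both hypotheses of Lemma~\ref{lem:CuckSmale} hold.
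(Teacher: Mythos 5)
Your proposal is correct and follows essentially the same route as the paper's proof: the uniform bound $\varrho_3 = \|(\Se\iota)^{-1}Aj_2\|$ obtained via the closed graph theorem (the paper's estimate \eqref{eq:technical_2}), the bound $M_1$ from \eqref{eq:bound_supp} and the resolvent bound $\|(BA^*\Se^{-1}AB+I_X)^{-1}\|\le 1$, and for the Lipschitz property the same decoupling of the $h$- and $B$-perturbations combined with the resolvent identity $C_1-C_2 = C_1(P(B_2)-P(B_1))C_2$ applied to telescoped differences. The only differences are cosmetic (telescoping through the intermediate point $(h_2,B_1)$ in a single formula, and making explicit the constant $\|j_1\|^2\varrho_1$ that the paper absorbs by normalizing $\|j_1\|,\|j_2\|\le 1$).
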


% \begin{lemma}
% Let the joint probability distribution $\rho$ satisfy \eqref{eq:bound_supp}. Moreover, assume that $H$ satisfies
% \begin{equation}
%     B^{-1}|_{j(H)}: X \rightarrow X \quad \text{ and } \quad (\Se^{-1}A)|_{\operatorname{Im}(B)}: X \rightarrow K \quad \text{ are uniformly bounded in $\Theta$}.
% \label{eq:technical_1}
% \end{equation}
% Then, the functional $R_\theta: K^* \rightarrow X$ defined as in \eqref{eq:affine} satisfies the assumptions of Lemma \ref{lem:CuckSmale}.
% \label{lem:quadratic_reg}
% \end{lemma}

\begin{proof} Without loss of generality we assume that $\|j_1\|_{\mathcal{L}(H,X)}\leq 1$ and $\|j_2\|_{\mathcal{L}(X,X)}\leq 1$. 
We first notice that, thanks to \eqref{eq:2.8new}, for any $B_1,B_2 \in \Theta$,
\[
\begin{split}
 \| ((\Se \iota)^{-1}AB_1)^* - ((\Se \iota)^{-1}AB_2)^*\|_{\mathcal{L}(K^*,X)} &\leq  \| (\Se \iota)^{-1}A (B_1-B_2)\|_{\mathcal{L}(X,K)} \\
& = \| (\Se \iota)^{-1}Aj_2 (\specc{B_1}-\specc{B_2})j_2^* \|_{\mathcal{L}(X,K)} 
\\ &\leq \| (\Se \iota)^{-1}Aj_2 \|_{\mathcal{L}(X,K)} \|\specc{B_1}-\specc{B_2}\|_{\mathcal{L}(X,X)}.
\end{split}
\]
Denote by $\varrho_3 = \| (\Se \iota)^{-1}Aj_2 \|_{\mathcal{L}(X,K)}$. Note that, thanks to \eqref{eq:2.8new}, arguing as in the proof of Proposition~\ref{prop:quad_min} we have that $(\Se \iota)^{-1}Aj_2\colon X\to K$ is bounded. Then, by
%thanks to \eqref{eq:bound_supp} \ms{[where do we use this?]} and 
\eqref{eq:Theta}, for every $B_1,B_2 \in \Theta_2$,
\begin{equation}
\begin{aligned}
\| ((\Se \iota)^{-1}AB_1)^* - ((\Se \iota)^{-1}AB_2)^* \|_{\mathcal{L}(X,K)} &\leq \varrho_3 \|\specc{B_1}-\specc{B_2}\|_{\mathcal{L}(X,X)}, \\
\| ((\Se \iota)^{-1}AB_1)^* \|_{\mathcal{L}(X,K)} &\leq \varrho_1 \varrho_3.
\end{aligned}
    \label{eq:technical_2}
\end{equation}

Assumption \textit{a)} in Lemma \ref{lem:CuckSmale} requires that $\| R_\theta(y) -x \|_X \leq M_1$ for a.e. $(x,y)\in X\times K^*$ and for all $\theta$. Notice that, by the expression of $R_\theta = R_{h,B}$ in \eqref{eq:affine},
\[
\begin{aligned}
\| R_\theta(y)-x\|_X  &\leq \| h \|_X + \| B \|_{\mathcal{L}(X,X)} 
\| (BA^* \Se^{-1} A B + I_X)^{-1}\|_{\mathcal{L}(X,X)} \|((\Se\iota)^{-1}AB)^* \|_{\mathcal{L}(K^*,X)}\\
&\qquad \cdot (\| y \|_{K^*} + \| \iota^*A \|_{\mathcal{L}(X,K^*)} \| h \|_X)+\|x\|_X \\
& \leq \varrho_1 +  \varrho_1^2 \varrho_3 (\varrho_2 + \| \iota^*A \|_{\mathcal{L}(X,K^*)} \varrho_1)+\rho_2 =: M_1 ,
%  \| R_\theta(y)\|_X & \leq  \| B \|_{\mathcal{L}(X,X)} \|(B^*A^* \Se^{-1} A B + I)^{-1} \|_{\mathcal{L}(X,X)}  \| B^*A^*\Se^{-1} (\iota^{-1})^* y \|_X + \| B^{-1}\vh \|_X  \\
%  & \leq \varrho_1^2 \varrho_2 \varrho_3 + \varrho_1 \varrho_3, 
\end{aligned}
\]
where we have also used \eqref{eq:Theta}, \eqref{eq:bound_supp}, \eqref{eq:technical_2} and the fact that  the norm of  $(BA^* \Se^{-1} A B + I_X)^{-1}$ is less than or equal to $1$.

Assumption $b)$ requires instead that $\| R_{\theta_1}(y) - R_{\theta_2}(y) \|_X \leq M_2 \|\theta_1 - \theta_2 \|_*$. According to the definition of $\| \cdot \|_{*}$, we can decouple the perturbation of $\theta$ and study separately the perturbation of $\vh$ and of $B$. We observe that
\[
R_{\vh_1,B}(y) - R_{\vh_2,B}(y) =  (h_1 - h_2) - B(BA^*\Se^{-1} AB+ I_X)^{-1} ((\Se \iota)^{-1}AB)^* \iota^* A(\vh_1 - \vh_2),
\]
hence again by \eqref{eq:technical_2}, \eqref{eq:bound_supp} and \eqref{eq:Theta} we get
\[
\| R_{\vh_1,B}(y) - R_{\vh_2,B}(y) \| \leq (1 +  \varrho_1^2\varrho_3 \| \iota^*A \|_{\mathcal{L}(X,K^*)} ) \| \vh_1 - \vh_2 \|_X. 
\]
The treatment of the perturbations of $B$ is slightly more delicate. Let $C_i = (B_iA^*\Se^{-1}AB_i + I_X)^{-1}$. Then we have
\begin{align*}
&R_{\vh,B_1}(y) - R_{\vh,B_2}(y)\\ 
&\qquad =(B_1 C_1 ((\Se \iota)^{-1}AB_1)^* - B_2 C_2 ((\Se \iota)^{-1}AB_2)^*) (y -\iota^* A h) \\
&\qquad   = (B_1 - B_2) C_1 ((\Se \iota)^{-1}AB_1)^*(y -\iota^* A h) + B_2 (C_1 - C_2) ((\Se \iota)^{-1}AB_1)^*(y -\iota^* A h) \\
&\qquad  \quad + B_2 C_2 (((\Se \iota)^{-1}AB_1)^*-((\Se \iota)^{-1}AB_2)^*)(y -\iota^* A h)
\end{align*}
In the latter summation, by means of \eqref{eq:technical_2}, \eqref{eq:bound_supp} and \eqref{eq:Theta} we easily get that the first and the third terms are both bounded by $\varrho_1 \varrho_3 (\varrho_2 + \| \iota^*A \|_{\mathcal{L}(X,K^*)} \varrho_1) \|\specc{B_1} - \specc{B_2}\|_{\mathcal{L}(X,X)}$. The second term can be reformulated taking into account that
\[
\begin{aligned}
    C_1 - C_2 &= (I_X + B_1 A^*\Se^{-1}AB_1)^{-1} -(I_X + B_2 A^*\Se^{-1}A B_2)^{-1} \\
    &= (I_X + B_1 A^*\Se^{-1}AB_1)^{-1}(B_2 A^*\Se^{-1}A B_2 - B_1 A^*\Se^{-1}A B_1) (I_X + B_2 A^*\Se^{-1}A B_2)^{-1},
\end{aligned}
\]
and its norm can be bounded by  $2 \varrho_1^3 \varrho_3^2 \| \iota^*A \|_{\mathcal{L}(X,K^*)} (\varrho_2 + \| \iota^*A \|_{\mathcal{L}(X,K^*)} \varrho_1) \|\specc{B_1} - \specc{B_2}\|_{\mathcal{L}(X,X)}$ using similar arguments.
\end{proof}

Now that the assumptions $a)$ and $b)$ of Lemma \ref{lem:CuckSmale} are guaranteed, we have to show the compactness of the parameter class $\Theta$. The following lemma only assumes that $\Theta$ is defined as in \eqref{eq:Theta}, by means of a Hilbert space $H$ and a compact, dense-range operator $j\colon H \rightarrow X$.

\begin{lemma}
The set $\Theta$ defined as in \eqref{eq:Theta} is a compact subset of $X \times \{j_2\specc{B}j_2^*:\specc{B} \colon X\to X\text{ bounded}\}$ with respect to the topology induced by the norm $\| \cdot \|_*$.
\label{lem:comp}
\end{lemma}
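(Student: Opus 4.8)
The plan is to show that $\Theta$ is sequentially compact for the metric induced by $\|\cdot\|_*$, which is equivalent to compactness since we work in a metric space. I parametrize each $\theta=(h,B)\in\Theta$ by the unique pair $(\spec h,\spec B)\in H\times\operatorname{HS}(H^*,H)$ with $h=j\spec h$, $B=j\spec B j^*$, $\|\spec h\|_H\le\varrho_1$, $\|\spec B\|_{\operatorname{HS}(H^*,H)}\le\varrho_1$ and $B$ positive; uniqueness holds because $j,j_1,j_2$ are injective with dense range. Given a sequence $\theta_n=(h_n,B_n)\in\Theta$, I use that closed balls of the Hilbert spaces $H$ and $\operatorname{HS}(H^*,H)$ are weakly sequentially compact, so after extracting a subsequence I may assume $\spec h_n\rightharpoonup\spec h$ and $\spec B_n\rightharpoonup\spec B$, where $\|\spec h\|_H\le\varrho_1$ and $\|\spec B\|_{\operatorname{HS}(H^*,H)}\le\varrho_1$ by weak lower semicontinuity of the norms. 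Setting $h=j\spec h$, $B=j\spec B j^*$ and $\theta=(h,B)$, the goal is to prove $\|\theta_n-\theta\|_*\to0$ and $\theta\in\Theta$.

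The $h$-component is handled directly by compactness of $j\colon H\to X$, which upgrades the weak convergence $\spec h_n\rightharpoonup\spec h$ to norm convergence, i.e.\ $\|h_n-h\|_X=\|j(\spec h_n-\spec h)\|_X\to0$.

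The heart of the matter, and the step I expect to be the main obstacle, is the $B$-component, namely showing that the map $\spec B\mapsto\specc B=j_1\spec Bj_1^*$ takes the weakly convergent, norm-bounded sequence $\spec B_n$ to a sequence converging in $\mathcal{L}(X,X)$, so that $\|\specc B_n-\specc B\|_{\mathcal{L}(X,X)}\to0$. The point is that weak-to-norm continuity here is false for a merely bounded $j_1$ (e.g.\ $j_1=I$), so the compactness of $j_1$ must be used in an essential way. I would exploit the singular value decomposition $j_1 h=\sum_k s_k(j_1)\langle h,\psi_k\rangle_H g_k$ with $\{\psi_k\}$, $\{g_k\}$ orthonormal and $s_k(j_1)\to0$. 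Writing $D_n=\spec B_n-\spec B$, so $D_n\rightharpoonup0$ in $\operatorname{HS}(H^*,H)$ and $\|D_n\|_{\operatorname{HS}(H^*,H)}\le2\varrho_1$, a direct computation gives $j_1^*g_k=s_k(j_1)\psi_k^*$, hence the operator $j_1 D_n j_1^*$ has matrix entries $s_k(j_1)s_l(j_1)\langle D_n\psi_k^*,\psi_l\rangle_H$ in the orthonormal system $\{g_k\}$. I then estimate $\|j_1 D_n j_1^*\|_{\mathcal{L}(X,X)}\le\|j_1 D_n j_1^*\|_{\operatorname{HS}}=\big(\sum_{k,l}s_k(j_1)^2 s_l(j_1)^2|\langle D_n\psi_k^*,\psi_l\rangle_H|^2\big)^{1/2}$ and split the sum at a truncation level $N$. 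For $\max(k,l)>N$ at least one factor is at most $s_{N+1}(j_1)$, so, using $\sum_{k,l}|\langle D_n\psi_k^*,\psi_l\rangle_H|^2\le\|D_n\|_{\operatorname{HS}(H^*,H)}^2$, the tail is bounded by $\|j_1\|^2 s_{N+1}(j_1)^2\|D_n\|_{\operatorname{HS}(H^*,H)}^2$, which is small uniformly in $n$ for $N$ large; the remaining head is a \emph{finite} sum whose entries $\langle D_n\psi_k^*,\psi_l\rangle_H=\langle D_n,\psi_l\otimes\psi_k^*\rangle_{\operatorname{HS}}$ each tend to $0$ because $D_n\rightharpoonup0$. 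Letting first $N$ and then $n$ grow yields $\|\specc B_n-\specc B\|_{\mathcal{L}(X,X)}\to0$.

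It remains to check that the limit lies in $\Theta$. From the operator-norm convergence $\specc B_n\to\specc B$ and the boundedness of $j_2$ we obtain $B_n=j_2\specc B_n j_2^*\to j_2\specc B j_2^*=B$ in $\mathcal{L}(X,X)$; since each $B_n$ is positive and the cone of positive operators is closed for the operator norm, $B$ is positive, and together with the norm bounds on $\spec h,\spec B$ this gives $\theta\in\Theta$. Combining the two components, $\|\theta_n-\theta\|_*=\max\{\|h_n-h\|_X,\|\specc B_n-\specc B\|_{\mathcal{L}(X,X)}\}\to0$, proving that $\Theta$ is sequentially compact, hence compact.
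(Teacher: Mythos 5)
Your proof is correct, but it realizes the key compactness step differently from the paper. The paper's proof is structural: it observes that $\Theta_1$ is the image of a closed ball under the compact map $j$, and that $\specc{\Theta}_2$ is the image of a closed ball under the map $\spec{B}\mapsto j_1\spec{B}j_1^*$, which, under the identifications $\operatorname{HS}(H^*,H)\simeq H\otimes H$ and $\operatorname{HS}(X,X)\simeq X\otimes X$, is exactly the tensor product $j_1\otimes j_1$ and hence compact because $j_1$ is; compactness of both sets then follows from the general fact that compact operators map closed balls of a Hilbert space onto compact sets, and positivity is handled, as you do, by closedness of the positive cone. Your argument is a sequential unpacking of the same mechanism: weak sequential compactness of Hilbert balls plus weak lower semicontinuity replaces ``images of balls under compact maps are compact,'' and your SVD truncation estimate for $\|j_1 D_n j_1^*\|_{\operatorname{HS}}$ is precisely a hands-on proof that $j_1\otimes j_1$ is compact (weak-to-norm continuous on bounded sets), which the paper gets for free from the tensor-product structure. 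What each buys: your route is self-contained and elementary, uses only $s_k(j_1)\to 0$ rather than any named theorem about tensor products, and produces the limit point inside $\Theta$ automatically via lower semicontinuity; the paper's route is shorter, and its tensor-product identification is not throwaway machinery --- it is reused in Lemma \ref{lem:cover2}, where the singular values of $j_1\otimes j_1$ must be estimated to bound the covering numbers of $\Theta_2$, so the structural viewpoint does double duty. Both proofs, like the paper's, actually establish convergence (resp.\ compactness) in the Hilbert--Schmidt norm on $\specc{B}$, which is stronger than what the norm $\|\cdot\|_*$ requires.
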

\begin{proof}
We first show that $\Theta_1\times \Theta_2$ is compact. Set
\begin{equation}\label{eq:thetaspecc}
 \specc{\Theta}_2=\{  j_1 \spec{B} j_1^*\ \colon  \ \spec{B} \in \operatorname{HS}(H^*,H), \| \spec{B} \|_{\operatorname{HS}(H^*,H)} \leq \varrho_1  \}
\end{equation}
so that $\Theta_2=\{j_2 \specc{B} j_2^*\  \colon \ \specc{B}\in\specc{\Theta}_2\}$. The definition of the norm  $\|\cdot\|_*$ implies that $\Theta_1\times \Theta_2$ is compact 
with respect to the topology induced by the norm $\| \cdot \|_*$ if and only if $\Theta_1\times\specc{\Theta}_2$ is compact as subset of $X\times \operatorname{HS}(X,X)$ endowed with the product topology. Hence, it is enough to show that $\Theta_1$ and $\specc{\Theta}_2$ are compact in $X$ and $\operatorname{HS}(X,X)$, respectively. 
By definition, since $j$ is compact and $\Theta_1$ is the image of the closed  ball of radius $\rho_1$ in $\hh$, then $\Theta_1$ is compact. 

%We now prove that $\Theta_2$ is compact with respect to the topology induced by the norm $\|\cdot\|_*$. 
%
%Introduce now \ms{[maybe better to call $\mathcal H$ with another name, since we use it again in section A.5 below?]}
%\[
%\mathcal{H} = \{ B \in \mathcal{L}(X,X)\colon B = j_2 \specc{B} j_2^*, \ \specc{B} \in \operatorname{HS}(X,X), \| \specc{B} \|_{\operatorname{HS}(X,X)} \leq \ms{\|j_1\|_{\mathcal L %(H,X)}^2} \varrho_1 \},
%\]
%so that we can interpret $\Theta_2$ from \eqref{eq:Theta} as %\ms{[my ignorance: I guess that every time we have compact embedding we have that $\|j_1\|_{\mathcal L (H,X)} \leq 1$? %Otherwise it doesn't work]}
%\[
%\Theta_2 = \{ B \in \mathcal{H}\colon \specc{B} = j_1 \spec{B} j_1^*, \ \spec{B} \in \operatorname{HS}(H^*,H), \| \spec{B} \|_{\operatorname{HS}(H^*,H)} \leq \varrho_1 \}.
%\]
%We can identify $\mathcal{H}$ with (a subset of) $\operatorname{HS}(X,X)$ by identifying $B$ with the respective $\specc{B}$; analogously, we identify $\Theta_2$ with (a subset of) %$\operatorname{HS}(H^*,H)$. Notice that, if $\spec{B}$ corresponds to an element of $\Theta_2$, then $j_1 \spec{B}j_1^*$ corresponds to an element of $\mathcal{H}$ \ms{[not super %clear]}. 

%In addition, 
In order to prove that $\specc{\Theta}_2$ is compact, we  identify $\operatorname{HS}(\hh^*,\hh)$ and $\operatorname{HS}(X,X)$ with $\hh\otimes\hh$ and $X\otimes X$, respectively,
so that for all $v,w\in \hh$, $v\otimes w:\hh^*\to\hh$ is the rank one operator
\[
(v\otimes w)(z)=\langle z,w\rangle_{\hh^*,\hh}\, v, \qquad  z \in H^* .
\]
With this identification, since
\[ j_1( v\otimes w)j_1^*= (j_1 v)\otimes (j_1 w),\]
the map $\spec{B} \mapsto j_1 \spec{B} j_1^*$  is given by 
\[j_1\otimes j_1 : \hh\otimes\hh\to X\otimes X,\]
which is  compact, since $j_1$ is so. As above, $\specc{\Theta}_2$ is the image of the closed  ball of radius $\rho_1$ in $\hh\otimes\hh$, so that it is a compact subset of $\operatorname{HS}(X,X)$.  

The compactness of $\Theta$ follows from the fact that  the subset  of positive operators $\specc{B}\colon X\to X$ is closed in $\operatorname{HS}(X,X)$ and $B=j_2 \specc{B}j_2^*$ is positive if and only if $\specc{B}$ is positive.

%\ern{We have to prove that  $\Theta_2^+=\Theta_2\cap P$, where $P$ is the set of Hilbert-Schmidt positive operators, is compact with respect to the topology induced by $j_1$, which is different from the topology induced by $HS(X,X)$. My comment states that  $\Theta_2^+=j_1(\specc{\Theta}_2\cup P)j_1^*$, so that we have to prove that $\specc{\Theta}_2\cup P$ is compact and its is enough to show that $P$ is a closed subset of $HS(X,S)$.}

\end{proof}

To conclude the proof of Theorem \ref{thm:gener_sup}, we need to provide an explicit expression for the covering numbers of the set $\Theta$ in the $\| \cdot \|_{*}$ norm. This is possible, e.g.\ by assuming the polynomial decay of the singular values of $j_1$ as in \eqref{eq:poly}, by means of some tools that are presented in the next section.

\subsection{Entropy numbers, singular values and covering numbers}
\label{app:entropy}

Let $\mathcal{H}$ and $\mathcal{X}$ be real Hilbert spaces and let $\mathcal{B}$ denote the unit closed ball in $\mathcal{H}$. We use instead the notation $B(v,\eps)$ to denote the closed
ball in $\mathcal{X}$ with center $v$ and radius $\eps$. For any compact operator $T\colon \mathcal{H} \rightarrow \mathcal{X}$ we can define the following quantities.
\begin{enumerate}
\item {\bf Entropy numbers:} for each 
$k\in\N$, $k\geq 1$, 
  \[
 \eps_k(T)= \inf\{\eps>0 \mid \exists v_1,\ldots,v_{k}\in\mathcal X \text{
     such that }
   \cup^k_{i=1} B(v_i,\eps)\supseteq T (\mathcal{B}) \};
\]  
% \gray{\item {\bf  (dyadic) entropy numbers:} for each  $k\in\N$
%   \[
%  e_k(T)= \eps_{2^{k-1}}(T) ;
% \]
% \item {\bf approximation numbers:} for each  $k\in\N$, $k\geq 1$, 
% \[
%  a_k(S)= \inf \{  \|T-R\| \mid R:\hh\to X \text{ such that }
%   \operatorname{rank}(R)<k\};
% \]
% }
\item {\bf  Singular values:}  $s_k(T)=\lambda_k(|T|)$, where $\lambda_k(|T|)$ is the $k$-th non-zero eigenvalue of $|T|$, which are counted with their multiplicity and ordered in a non-increasing way. If $|T|$ has less than $K$ non-zero eigenvalues, then $s_k(T) =0$ for $k\geq K$.
\item {\bf Covering numbers} of $T$: the covering numbers of the set $T(\mathcal{B})$; namely, for $r>0$, 
\[
\mathcal{N}_r(T) = \mathcal{N}(T(\mathcal{B}),r) = \inf \{ k \in\N_*\mid  \exists v_1,\ldots,v_{k}\in\mathcal{X}\text{ such that } \cup^{k}_{i=1} B(v_i,r)\supseteq T (\mathcal{B})\} .
\]
\end{enumerate}
Properties of covering and entropy numbers have recently been used in the study of  instability in inverse problems \cite{koch2020instability}.
%[36].
We have the following results (see  \cite{cast90}):
\begin{equation}
\label{eq:4b}
\eps_k(T)\leq r \qquad\Longleftrightarrow\qquad \mathcal N_r(T)\leq k ;
\end{equation}
and for all $k\in\N$, $k\geq 1$
  \begin{equation}
      \sup_{ 1\leq \ell< \infty} \left(  k^{-1/\ell} \,
    \left(\Pi_{i=1}^{\ell}{s_i}(T)\right)^{1/\ell} \right)
  \leq \eps_k(T) \leq 14 \sup_{ 1\leq \ell< \infty} \left(  k^{-1/\ell}\, 
  \left(\Pi_{i=1}^{\ell }{s_i}(T) \right)^{1/\ell} \right).
  \label{eq:5}
\end{equation}

We now use these properties to quantify the covering numbers $\mathcal{N}(\Theta,r)$ appearing in Lemma \ref{lem:CuckSmale}. We assume, for simplicity, that $\varrho_1=1$. For $\varrho_1 \neq 1$, we can rescale the covering numbers by the formula $\mathcal{N}(\varrho B,r) = \mathcal{N}(B,r/\varrho)$, where $\varrho B = \{\varrho b: b \in B\}$. By the definition of $\Theta$, it is evident that $\mathcal{N}(\Theta,r) \leq \mathcal{N}(\Theta_1,r) \mathcal{N}(\Theta_2,r)$. 
 The following two lemmas take care of estimating the covering numbers of $\Theta_1$ and $\Theta_2$, respectively. 
\begin{lemma}
Under Assumption~\eqref{eq:poly} we have
\begin{align}\label{eq:6}
   \ln (\mathcal N(\Theta_1,r)) \leq C r^{-\frac{1}{s}},\qquad r>0,
\end{align}
\label{lem:cover1}
where $C>0$ is independent of $r$. 
\end{lemma}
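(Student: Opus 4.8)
The goal is to bound the covering number $\mathcal N(\Theta_1,r)$, where $\Theta_1 = j(\mathcal B_H)$ is the image under $j$ of the closed unit ball of $H$ (taking $\varrho_1=1$). Since $\Theta_1 = j(\mathcal B_H)$ is precisely the set appearing in the definition of the covering numbers of the operator $j$, we have the identification $\mathcal N(\Theta_1,r) = \mathcal N_r(j)$ in the notation of the preceding subsection. The plan is therefore to translate the hypothesis \eqref{eq:poly} on the singular values of $j_1$ into a bound on the singular values of $j$, then invoke the entropy-number machinery \eqref{eq:4b}--\eqref{eq:5} to pass from singular values to covering numbers.

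\textbf{Step 1: reduce to singular values of $j$.} Since $j=j_2\circ j_1$ with $j_2\colon X\to X$ bounded (indeed $\|j_2\|_{\mathcal L(X,X)}\le 1$ after the normalization made in the proof of Lemma~\ref{lem:quadratic_reg}), the singular values satisfy $s_k(j)=s_k(j_2 j_1)\le \|j_2\|_{\mathcal L(X,X)}\, s_k(j_1)\lesssim k^{-s}$, using the standard multiplicativity inequality $s_k(ST)\le \|S\|\,s_k(T)$ for compact operators. Thus the polynomial decay \eqref{eq:poly} is inherited by $j$ itself.

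\textbf{Step 2: estimate entropy numbers via \eqref{eq:5}.} I would plug the bound $s_i(j)\lesssim i^{-s}$ into the upper estimate in \eqref{eq:5}. For fixed $\ell$, the geometric mean $\bigl(\Pi_{i=1}^{\ell} s_i(j)\bigr)^{1/\ell}\lesssim \bigl(\Pi_{i=1}^{\ell} i^{-s}\bigr)^{1/\ell}=(\ell!)^{-s/\ell}$, and by Stirling $(\ell!)^{1/\ell}\asymp \ell/e$, so this geometric mean is $\lesssim \ell^{-s}$. Hence
\[
\eps_k(j)\lesssim \sup_{\ell\ge 1}\, k^{-1/\ell}\,\ell^{-s}.
\]
The supremum over $\ell$ is the main computational point: optimizing $k^{-1/\ell}\ell^{-s}$ in $\ell$ (taking logs gives $-\tfrac{\ln k}{\ell}-s\ln\ell$, whose derivative in $\ell$ vanishes near $\ell\asymp \ln k$) yields a bound of the form $\eps_k(j)\lesssim (\ln k)^{-s}$ for large $k$; more simply, choosing $\ell\asymp k^{1/(2s+1)}$ or a comparable power gives the cleaner polynomial bound $\eps_k(j)\lesssim k^{-s}$ up to logarithmic factors. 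I expect this optimization to be the only delicate part, and I would carry it out so as to obtain $\eps_k(j)\lesssim k^{-s}$ (absorbing logarithmic losses into a slightly smaller exponent if needed).

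\textbf{Step 3: invert via \eqref{eq:4b}.} Finally I would convert the entropy-number bound into the claimed covering-number bound using the equivalence \eqref{eq:4b}: the estimate $\eps_k(j)\lesssim k^{-s}$ means that for $r$ of order $k^{-s}$ one has $\eps_k(j)\le r$, hence $\mathcal N_r(j)\le k\lesssim r^{-1/s}$. Taking logarithms gives $\ln\mathcal N(\Theta_1,r)=\ln\mathcal N_r(j)\lesssim r^{-1/s}$, which is exactly \eqref{eq:6}. The constant $C$ is independent of $r$ since all the implicit constants above depend only on $s$ and on the constant in \eqref{eq:poly}.
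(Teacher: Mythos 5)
Your overall strategy (reduce to singular values, pass to entropy numbers via \eqref{eq:5}, invert via \eqref{eq:4b}) is the same as the paper's, and Step 1 together with the geometric-mean computation in Step 2 is fine. But the core of Step 2 contains a genuine error. In the convention of \eqref{eq:5}, where $\eps_k$ refers to coverings by $k$ balls (not by $2^{k-1}$ balls, as in the dyadic convention used in Carl-type inequalities), the supremum $\sup_{\ell\ge 1} k^{-1/\ell}\ell^{-s}$ is of order $(\ln k)^{-s}$, attained near $\ell\asymp \ln k/s$ --- exactly the value you compute in your parenthetical remark. You cannot improve this to $k^{-s}$: evaluating at a particular $\ell$ (such as $\ell\asymp k^{1/(2s+1)}$) gives a \emph{lower} bound on the supremum, not an upper bound, so it cannot be used to bound $\eps_k(j)$ at all. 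Moreover, the claimed estimate $\eps_k(j)\lesssim k^{-s}$ is simply false when $s_k(j)\asymp k^{-s}$: with this convention the entropy numbers of such an operator decay only logarithmically in $k$. This is precisely why the lemma bounds $\ln\mathcal N(\Theta_1,r)$ rather than $\mathcal N(\Theta_1,r)$ itself: the covering numbers are exponentially large, $\mathcal N_r(j)\asymp e^{cr^{-1/s}}$, not polynomially large, $\mathcal N_r(j)\lesssim r^{-1/s}$ as your Step 3 would conclude. (Your final sentence, deducing $\ln \mathcal N_r \lesssim r^{-1/s}$ ``by taking logarithms'' of a polynomial bound, only works because a logarithm is trivially dominated by any power; the intermediate polynomial bound it rests on is false.)

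The fix is to keep the bound you already had in hand and then discarded: $\eps_k(j)\lesssim (\ln k)^{-s}$. Inverting via \eqref{eq:4b}, one has $\eps_k(j)\le r$ as soon as $(\ln k)^{-s}\lesssim r$, i.e.\ $k\ge e^{Cr^{-1/s}}$, hence $\mathcal N_r(j)\lesssim e^{Cr^{-1/s}}$ and $\ln\mathcal N(\Theta_1,r)\lesssim r^{-1/s}$, which is the claim. This is exactly the paper's argument; the only cosmetic difference is that the paper works with $j_1$ through the inequality $\mathcal N_r(j)\le\mathcal N_{r/\|j_2\|_{\mathcal{L}(X,X)}}(j_1)$, whereas you transfer the singular-value decay to $j$ directly, and both reductions are valid.
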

\begin{proof}
Observe that  $\mathcal{N}(\Theta_1,r) = \mathcal{N}_r(j) \leq  \mathcal{N}_{\frac{r}{\|j_2\|_{\mathcal{L}(X,X)}}}(j_1)$.
Condition \eqref{eq:poly} yields
\[
\Pi_{i=1}^{\ell }s_i(j_1)  \lesssim (\ell!)^{-s} \lesssim \ell^{-s\ell} e^{s\ell},
\]
where the last bound is a consequence of the fact that  $(\ell)!\geq e \ell^{\ell} e^{-\ell}$.
Estimate~\eqref{eq:5} implies that
\[
\eps_k(j_1) \lesssim \sup_{ 1\leq \ell< \infty} \left(  k^{-1/\ell}\, 
  \left( \ell^{-s \ell} e^{s\ell} \right)^{1/\ell} \right) =  \sup_{ 1\leq \ell< \infty} \left(k^{-1/\ell} \ell^{-s}e^s \right).
\]
Let $t=1/\ell$, since the function $e^{-t \ln k}t^{s}$ takes its maximum at $t=s/\ln k$, then
\[
\eps_k(j_1) \lesssim (\ln k)^{-s},
\]
where the constant in $\lesssim$ depends on $s$. Eq.~\eqref{eq:4b} yields $\mathcal{N}_{(\ln k)^{-s}}(j_1) \lesssim k$ and ultimately the thesis.
% \[\ln (\mathcal N(\Theta_1,r)) \lesssim r^{-\frac{1}{s}}.\]
\end{proof}

\begin{lemma}
Under assumption \eqref{eq:poly}, for every  $s'\in (0,s)$ we have
\begin{align}\label{eq:7}
   \ln (\mathcal N(\Theta_2,r)) \leq C r^{-\frac{1}{s'}},\qquad r>0,
\end{align}
where $C>0$ is independent of $r$.
\label{lem:cover2}
\end{lemma}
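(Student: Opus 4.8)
The plan is to reduce the covering of $\Theta_2$ in the norm $\|\cdot\|_*$ to the covering of the image of a unit ball under the tensor-product operator $j_1\otimes j_1$, and then to estimate the entropy numbers of $j_1\otimes j_1$ by reproducing the argument of Lemma~\ref{lem:cover1} with a slightly degraded exponent $s'$. First I would recall that every $B\in\Theta_2$ is identified bijectively with $\specc{B}=j_1\spec{B}j_1^*$, and that on the $B$-component the norm $\|\cdot\|_*$ is the operator norm $\|\specc{B}\|_{\mathcal{L}(X,X)}$, so that $\mathcal{N}(\Theta_2,r)$ is the covering number of $\specc\Theta_2$ in $\mathcal{L}(X,X)$. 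As in the proof of Lemma~\ref{lem:comp}, identifying $\operatorname{HS}(H^*,H)\simeq H\otimes H$ and $\operatorname{HS}(X,X)\simeq X\otimes X$, the set $\specc\Theta_2$ in \eqref{eq:thetaspecc} (with $\varrho_1=1$) is exactly the image $(j_1\otimes j_1)(\mathcal{B})$ of the closed unit ball $\mathcal{B}$ of $H\otimes H$. Since the operator norm is dominated by the Hilbert--Schmidt norm on $X\otimes X\simeq\operatorname{HS}(X,X)$, any $r$-cover in $\operatorname{HS}$ norm is an $r$-cover in operator norm, whence $\mathcal{N}(\Theta_2,r)\le\mathcal{N}_r(j_1\otimes j_1)$, the covering number of $j_1\otimes j_1$ regarded as a compact operator between the Hilbert spaces $H\otimes H$ and $X\otimes X$.

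Next I would estimate the sorted singular values of $j_1\otimes j_1$. They are exactly the products $\{s_k(j_1)\,s_l(j_1)\}_{k,l\ge 1}$, and \eqref{eq:poly} gives $s_k(j_1)s_l(j_1)\lesssim(kl)^{-s}$. The number of pairs $(k,l)$ with $kl\le M$ is $\sum_{k\le M}\lfloor M/k\rfloor\le M(1+\ln M)$, so at most $M(1+\ln M)$ singular values exceed $cM^{-s}$; choosing $M\simeq n/\ln n$ then yields, for every $s'\in(0,s)$,
\[
s_n(j_1\otimes j_1)\lesssim n^{-s}(\ln n)^s\lesssim n^{-s'},
\]
with implicit constant depending on $s'$ and blowing up as $s'\to s$. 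This is the step I expect to be the crux: the divisor-counting bound $M(1+\ln M)$ is precisely what prevents us from keeping the exponent $s$ and forces the (arbitrarily small) loss to $s'<s$.

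Finally, with the decay $s_n(j_1\otimes j_1)\lesssim n^{-s'}$ in hand, the operator $j_1\otimes j_1$ satisfies the hypothesis used in Lemma~\ref{lem:cover1} with $s$ replaced by $s'$. Reproducing that argument verbatim — bounding $\prod_{i=1}^\ell s_i(j_1\otimes j_1)\lesssim(\ell!)^{-s'}\lesssim\ell^{-s'\ell}e^{s'\ell}$, inserting this into \eqref{eq:5} to obtain $\eps_k(j_1\otimes j_1)\lesssim(\ln k)^{-s'}$, and converting via \eqref{eq:4b} — gives $\ln\mathcal{N}_r(j_1\otimes j_1)\lesssim r^{-1/s'}$. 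Combined with the reduction of the first paragraph, this establishes \eqref{eq:7} and completes the proof.
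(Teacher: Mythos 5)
Your proposal is correct and follows essentially the same route as the paper's proof: reduce the covering of $\Theta_2$ to the covering numbers of the tensor operator $j_1\otimes j_1$, establish the singular value decay $s_n(j_1\otimes j_1)\lesssim n^{-s'}$ by a divisor-counting argument (your count of pairs $kl\le M$ is the same estimate as the paper's counting function $g(t)$, and both incur the same logarithmic loss forcing $s'<s$), and then rerun the argument of Lemma~\ref{lem:cover1} with exponent $s'$. Your explicit observation that the operator norm is dominated by the Hilbert--Schmidt norm, giving $\mathcal{N}(\Theta_2,r)\le\mathcal{N}_r(j_1\otimes j_1)$, is a slightly more careful justification of the reduction than the paper's stated equality, but the substance is identical.
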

\begin{proof}
Observe that $\mathcal{N}(\Theta_2,r) =\mathcal{N}(\specc{\Theta}_2,r)=\mathcal{N}_r(j_1 \otimes j_1)$, 
since $j_1 \otimes j_1$ represents the (compact) embedding of $\operatorname{HS}(H^*,H)$ into $\operatorname{HS}(X,X)$, see the proof of Lemma \ref{lem:comp} and~\eqref{eq:thetaspecc}.

We bound the singular values of $j_1\otimes j_1$.
Let $f\colon(0,+\infty)\to \N$ be defined by
\[
f(t)= \#\{ (k_1,k_2)\in\N_*\times \N_* : s_{k_1,k_2}(j_1 \otimes j_1) = s_{k_1}(j_1) s_{k_2}(j_1)\geq t\}.
\]
Then, for all $k\in \N_*$ we have
\[
s_k (j_1 \otimes j_1) = \sup\{ t \in (0,+\infty): f(t)\geq k\}.
\]
By the polynomial decay of the singular values \eqref{eq:poly} 
\[ f(t) \leq\#\{ (k_1,k_2)\in\N_*\times \N_* :  (k_1 k_2)^{-s}\geq C t\}= g(t),  \]
where $C$ is a suitable constant. Then, 
\[
s_k (j_1 \otimes j_1) \leq \sup\{ t \in (0,+\infty): g(t)\geq k\}.
\]
We now estimate $g(t)$. Let $\tau=(Ct)^{-1/s}$ and $N=[\tau]$ be the integer part of $\tau$. Fix $0<\eps<1$ 
\begin{align*}
    g(t) & =\# \{ (k_1,k_2)\in\N_*\times \N_* :  k_1 k_2 \leq (Ct)^{-1/s}=\tau\}\\
    & = \sum_{k_1=1}^{N} \left[\frac{\tau}{k_1}\right] \leq \sum_{k_1=1}^{N}\frac{\tau}{k_1} \leq \tau + \int_1^{N} \frac{\tau}{x}dx \\
    & \leq \tau (1+ \ln(\tau))\leq \frac{1}{\eps} \tau^{1+\eps} \lesssim \frac{1}{\eps} t^{-(1+\eps)/s}
\end{align*}
 where  $(1+\ln x) \leq \eps^{-1} x^\eps$ provided that $0<\eps<1$. Set $s/2\leq s'=s/(1+\eps)<s$, then 
 \[
 g(t) \lesssim \frac{s}{s-s'} t^{-\frac{1}{s'}}
 \]
so that 
\begin{equation}\label{eq:12}
    s_k(j_1 \otimes j_1) \lesssim  k^{-s'}.
\end{equation}
Clearly, the above bound holds true also when  $0<s'<s/2$. The proof follows by repeating the argument of the proof of Lemma~\ref{lem:cover1}.
\end{proof}

We are now able to prove the main result of section~\ref{sec:supervised}.
\begin{proof}[Proof of Theorem \ref{thm:gener_sup}]
By Lemma \ref{lem:cover1} and \ref{lem:cover2}, we conclude that
\[
\ln(\mathcal N(\Theta,r)) \leq \ln(\mathcal N(\Theta_1,r)) + \ln(\mathcal N(\Theta_2,r))   
\lesssim \ r^{-\frac{1}{s}} +  r^{-\frac{1}{s'}} \lesssim  r^{-\frac{1}{s'}}.
\]
Substituting this result in Lemma \ref{lem:CuckSmale} allows to conclude that
\[
\Prob_{\vzv \sim \rho^m} \left[ |L(\wh{\theta_{\vzv}}) - L(\theta^*)| \leq \eta \right] \geq 1 - e^{\tilde{c}_1 \eta^{-1/s'} - \tilde{c}_2  m\eta^2} = 1- e^{-\tau}.
\]
We can express $\eta$ as a function of $m$ and $\tau$ when $\eta<1$ by the following estimate:
\[
\tau \geq  \tilde{c}_2  m\eta^2 - \tilde{c}_1 \eta^{-1/s'} \quad \Rightarrow \quad \tilde{c}_2 m\eta^{2+\frac{1}{s'}} \leq \tilde{c}_1 + \tau \eta^{1/s'} \leq \tilde{c}_1 +\tau,
\]
and therefore (with constants $c_1,c_2$ independent of $m,\tau,\eta$)
\[
|L(\wh{\theta_{\vzv}}) - L(\theta^*)|  \leq\eta\leq \left( \frac{\tilde{c}_1 + \tau}{\tilde{c_2} m} \right)^{\frac{1}{2+1/s'}} \leq \left( \frac{ c_1 + c_2 \sqrt{\tau}}{\sqrt{m}} \right)^{1 - \frac{1}{2s'+1}},
\]
with probability larger than or equal to $1-e^{-\tau}$.
\end{proof}

\subsection{Proof of Theorem \ref{thm:gener_unsup}} 
\label{app:proof_unsupervised}

In the unsupervised setting, the  regularizer  $R_{\hU,\BU}$ is given by 
\[
R_{\hU,\BU}(y) = \hU+ \BU^2 A^* (\iota^*(A \BU^2 A^*+ \Sigma_\eps))^{-1} (y - \iota^*A\hU),\qquad y\in K^*,
\]
where 
\[
\hU = \hat{\mu} = \frac{1}{m} \sum_{j = 1}^m x_j, \qquad \BU^2= \widehat{\Sx} = \frac{1}{m} \sum_{j=1}^m (x_j - \hat{\mu})\otimes (x_j - \hat{\mu}).
\]
Hence, in order to analyze the statistical properties of $R_{\hU,\BU}$, we first  provide  two concentration inequalities for  $\wh{\mu}$ and $\wh{\Sigma}_x$, which are known since $x$ is a sub-Gaussian random vector in $X$. We include the proofs for the sake of completeness.
\begin{lemma}\label{lem:samp_avg}
Let $x$ be a $\kappa$ sub-Gaussian vector as in~\eqref{eq:subGaussian}. Fix $\tau>0$, then, with probability exceeding $1- 2 e^{-\tau}$, 
  \begin{equation}
    \label{eq:samp_avg}
    \|\widehat{\mu} -\mu\| \leq c\kappa \left( \sqrt{\dfrac{\tr(\Sx)}{m}} + 2\sqrt{\dfrac{ \tau \|\Sx\|}{m}} \right),
  \end{equation}
where $c>0$ is a universal constant.
\end{lemma}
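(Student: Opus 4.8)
The plan is to write $\widehat\mu-\mu=\frac1m\sum_{j=1}^m w_j=:Z$, where $w_j=x_j-\mu$ are i.i.d., centered, $\kappa$-sub-Gaussian random vectors with covariance $\Sx$, and to control $\norm{Z}$ by splitting it into its expectation and a fluctuation about the mean. For the expectation, independence and centering give $\E\norm{Z}^2=\frac1{m^2}\sum_j\E\norm{w_j}^2=\frac1m\E\norm{w}^2=\frac1m\tr(\Sx)$, so by Jensen $\E\norm{Z}\le(\tr(\Sx)/m)^{1/2}$. Since \eqref{eq:subGaussian} with $p=2$ forces $\kappa\ge 1/\sqrt2$, this already accounts for the first term $c\kappa\sqrt{\tr(\Sx)/m}$ once constants are absorbed.

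The core of the argument is the concentration of $\norm{Z}$ about its mean. Writing $\norm{Z}=\sup_{\norm{v}\le 1}\langle Z,v\rangle$, I would observe that for each fixed $v$ the scalar $\langle Z,v\rangle=\frac1m\sum_j\langle w_j,v\rangle$ is an average of $m$ i.i.d. centered sub-Gaussian variables; by \eqref{eq:subGaussian} each summand $\langle w_j,v\rangle$ is centered sub-Gaussian with parameter $\lesssim\kappa\langle\Sx v,v\rangle^{1/2}\le\kappa\norm{\Sx}^{1/2}\norm{v}$, so the increments $\langle Z,v-v'\rangle$ are sub-Gaussian with parameter $\lesssim\kappa\norm{\Sx}^{1/2}\norm{v-v'}/\sqrt m$. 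Thus $v\mapsto\langle Z,v\rangle$ is a sub-Gaussian process on the unit ball whose index metric has diameter $D\lesssim\kappa\norm{\Sx}^{1/2}/\sqrt m$. A concentration inequality for the supremum of a sub-Gaussian process about its mean (equivalently, concentration of the convex, $m^{-1/2}$-Lipschitz map $(w_1,\dots,w_m)\mapsto\norm{Z}$ of the sub-Gaussian vector $(w_1,\dots,w_m)$; see \cite{vershynin2018high}) then yields $\Prob[\norm{Z}\ge\E\norm{Z}+s]\le\exp(-c\,ms^2/(\kappa^2\norm{\Sx}))$. Choosing $s$ so that the right-hand side equals $e^{-\tau}$ gives $s=C\kappa(\norm{\Sx}\tau/m)^{1/2}$, i.e. the second term; combining with the mean bound and allowing for a two-sided statement produces \eqref{eq:samp_avg} with probability at least $1-2e^{-\tau}$.

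The step I expect to be delicate is the last one: justifying that the fluctuation of $\norm{Z}$ is governed by the operator norm $\norm{\Sx}$ (the diameter of the ball in the process metric) rather than by the full complexity $\tr(\Sx)$ that appears in the mean. This dimension-free separation is exactly what makes the bound meaningful in infinite dimensions, and it is where one must invoke the right tool --- a Talagrand-type convex concentration inequality for sub-Gaussian vectors, or a generic-chaining deviation bound for the supremum --- rather than a crude union bound or a plain moment estimate, the latter of which would wrongly place $\tr(\Sx)$ in the deviation term as well. Once this inequality is in place, the remaining computations --- evaluating the diameter, solving for $s$, and collecting the universal constants into $c$ --- are routine.
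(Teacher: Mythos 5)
Your high-level architecture coincides with the paper's: both proofs reduce to the observation that $Z=\widehat{\mu}-\mu$ is a sub-Gaussian vector (with constant $\lesssim\kappa$, by rotational invariance and independence) whose covariance is $\Sx/m$, bound the mean term by $\sqrt{\tr(\Sx)/m}$, and aim to control deviations at the scale $\sqrt{\norm{\Sx}/m}$. Your mean bound and your $1/\sqrt{m}$ rescaling of the sub-Gaussian parameter are correct. The gap is exactly in the step you yourself flag as delicate. The inequality you display,
\[
\Prob\bigl[\norm{Z}\ge\E\norm{Z}+s\bigr]\le\exp\bigl(-c\,ms^2/(\kappa^2\norm{\Sx})\bigr),
\]
i.e.\ concentration of $\norm{Z}$ about its \emph{mean} with fluctuations at the diameter scale, is \emph{false} for vectors that are merely sub-Gaussian in the sense of \eqref{eq:subGaussian}. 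Mean-centered concentration of this type is a Borell--TIS (Gaussian) or convex-concentration (e.g.\ bounded independent coordinates, via Talagrand's convex distance inequality) phenomenon; sub-Gaussian marginals alone do not imply any dimension-free concentration of Lipschitz functions, convex or not. Concretely, take $x=BY$ (so $\mu=0$) with $Y\sim\mathcal{N}(0,I_n)$ and $B\in\{0,\sqrt{2}\}$ a fair coin independent of $Y$: then \eqref{eq:subGaussian} holds with a universal $\kappa$ and $\Sx=I_n$, but conditionally on the coins $Z\sim\mathcal{N}\bigl(0,\tfrac{2k}{m^2}I_n\bigr)$ with $k\sim\operatorname{Bin}(m,1/2)$, so $\norm{Z}\approx\tfrac{\sqrt{2k}}{m}\sqrt{n}$ and $\norm{Z}-\E\norm{Z}$ fluctuates at the scale $\sqrt{n}/m$, not at the scale $\sqrt{\norm{\Sx}/m}=1/\sqrt{m}$. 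As soon as $n\gg m$, your displayed inequality fails, and so does the bound you assemble from it, $\norm{Z}\le\sqrt{\tr(\Sx)/m}+C\kappa\sqrt{\tau\norm{\Sx}/m}$ (note it keeps the constant $1$ in front of the trace term): it is violated with probability close to $1/2$, not $e^{-\tau}$.

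This is why the lemma carries the universal constant $c\kappa$ in front of \emph{both} terms, and why the paper's proof is structured the way it is: it invokes the generic-chaining deviation bound for processes with sub-Gaussian increments (Theorem 8.5.5 together with Exercise 8.6.5 in \cite{vershynin2018high}), which has the weaker form $\sup_{\norm{v}\le 1}|\langle Z,v\rangle|\le c\kappa\,(W+t\operatorname{diam})$ with probability at least $1-2e^{-t^2}$, where the width term $W\le\sqrt{\tr(\Sx)/m}$ is itself multiplied by $c\kappa>1$. That multiplicative slack on the main term is not cosmetic: it is precisely what absorbs the mean-scale fluctuations exhibited by the counterexample above, which genuinely occur for non-Gaussian sub-Gaussian vectors. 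So of the two tools you name, only the generic-chaining deviation bound is actually available, and it yields not concentration about the mean but only the estimate ``constant times width, plus $t$ times diameter''; replacing your displayed inequality by that estimate, applied to $Z$ (whose sub-Gaussian constant and covariance you have already computed correctly), turns your outline into the paper's proof.
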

\begin{proof}
Define  $\xi=x-\mu$ where $\mu=\mathbb E[x]$. It is easy to show that $\xi$ is a zero mean $\kappa$ sub-Gaussian vector and its covariance matrix 
is $\Sx = \mathbb{E}[\xi \otimes \xi]$.
The assumption that $\xi$ is a sub-Gaussian random variable \eqref{eq:subGaussian} can be equivalently expressed by requiring that
\[
\| \langle \xi,v\rangle_X \|_{\psi_2} \leq \kappa \| \langle \xi,v\rangle_X \|_2, 
\]
where $\| \langle \xi,v\rangle_X\|_{\psi_2} = \sup_{p\geq 2} \frac{\| \langle \xi,v\rangle_X \|_p}{\sqrt{p}}.$
For each $v$ in the unit ball $B_1$ of $X$, we set $\xi_v=\langle \xi,v\rangle_X$ and we regard $(\xi_v)_{v\in B_1}$ as a random process on $B_1$, viewed as metric space with respect to the metric $\operatorname{d}(v,w) = \| \xi_v - \xi_w \|_2$. Since $B_1=-B_1$, then 
\[
\| \xi \|_X = \sup_{v \in B_1} |\langle \xi,v \rangle| = \sup_{v \in B_1} \langle \xi,v \rangle 
\]
and a standard result of random processes -- see  Exercise~8.6.5 and Theorem~8.5.5 in \cite{vershynin2018high} --  gives that
\[
\Prob \left( \sup_{v \in B_1} |\xi_v| \leq c \kappa (W(B_1)+t\operatorname{diam}(B_1))\right) \geq 1-2 e^{-t^2},\qquad t>0,
\]
where $c$ is a universal constant,  $W$ is the width of the process 
\[
W(B_1) = \E[\sup_{v \in B_1} \langle \xi,v\rangle_X] = \E[\sup_{v \in B_1} |\langle \xi,v\rangle_X|]=\E[\| \xi \|_X],
\]
and $\operatorname{diam}(B_1)$ is the diameter of $B_1$ with respect to the metric $\operatorname{d}(v,w)$
\[
\begin{aligned}
  \operatorname{diam}(B_1) &= \sup_{v,w \in B_1}\operatorname{d}(v,w) = \sup_{v,w \in B_1} \| \langle \xi,v\rangle -  \langle \xi,w\rangle\|_2 = \sup_{v,w \in B_1} \E[\langle \xi,v-w\rangle^2 ]^{1/2} \\
  &=  \sup_{v,w \in B_1}\left( \langle \Sigma_\xi(v-w),v-w \rangle \right)^{1/2} \leq 2 \| \Sigma_\xi \|_{\mathcal{L}(X,X)}^{1/2}.
\end{aligned}
\]
H\"older's inequality implies that 
\[
\E[\| \xi \|_X] \leq \E[\| \xi \|_X^2]^{1/2} = (\tr(\Sx))^{1/2},
\]
so that we get 
\begin{equation}\label{eq:xi}
\Prob \left( \| \xi \|_X \leq c \kappa \left(\sqrt{\tr \Sx }+2t \sqrt{\| \Sx \| }\right) \right) \geq 1-2e^{-t^2}.
\end{equation}

Define $\xi_1=x_1-\mu$, \ldots, $\xi_m=x_m-\mu$, which are  i.i.d. as $\xi$. We claim that there exists an absolute constant $d$ such that $\wh{\xi}= \frac{1}{m} \sum_{j=1}^m\xi_j $ is $d\kappa$ sub-Gaussian. Indeed,
\[
\begin{aligned}
\| \langle \sum_{j  = 1}^m \xi_j,v \rangle \|_{\psi_2}^2 &=  \|  \sum_{j  = 1}^m \langle \xi_j,v \rangle \|_{\psi_2}^2   \\
& \leq d^2 \sum_{j  = 1}^m \| \langle \xi_j,v \rangle \|_{\psi_2}^2 \leq d^2 \kappa^2 \sum_{j  = 1}^m \| \langle \xi_j,v \rangle \|_{2}^2 = d^2 \kappa^2 \| \sum_{j  = 1}^m \langle  \xi_j,v \rangle \|_{2}^2, 
\end{aligned}
\]
where the first inequality is due to the rotational invariance property of sub-Gaussian real random variables  \cite[Proposition~2.6.1]{vershynin2018high}, and the last equality is a consequence of the independence of the variables $\xi_j$. Thus $\wh{\xi} = \wh{\mu} - \mu$ is $d\kappa$ sub-Gaussian. Notice that the covariance of $\wh{\xi}$ is given by
\[
\E[\wh{\xi}\otimes \wh{\xi}] = \frac{1}{m^2} \sum_{i,j} \E[\xi_i \otimes \xi_j] = \frac{1}{m} \sum_{j}  \E[\xi_j \otimes \xi_j] = \frac{1}{m} \Sx.
\]
Setting $\tau = t^2$, by applying \eqref{eq:xi} to $\wh\xi$ we can finally deduce that
\[
\Prob \left( \| \wh{\xi} \|_X \leq cd \kappa\left(\sqrt{\frac{\tr \Sx}{m}}+2\sqrt{\frac{\tau \|\Sx\| }{m}}\right) \right) \geq 1-2e^{-\tau},
\]
which provides the claimed bound by redefining the universal constant $c$. 
\end{proof}
The following lemma is a restatement of a fundamental result in \cite{kolo17}. We include in the statement also the previous inequality. 
\begin{lemma}\label{lem:samp_cov}
Let $x$ be a $\kappa$ sub-Gaussian vector as in \eqref{eq:subGaussian}. Fix $\tau>1$, then, with probability exceeding $1- 3 e^{-\tau}$, 
  \begin{align}
\label{eq:samp_cov}
\| \widehat{\Sx} -\Sx\| & \leq  c\kappa^2 \|\Sx\| \max\left\{ 
\sqrt{\dfrac{\tr\Sx}{m \|\Sx\| }} ,
\dfrac{ \tr\Sx}{m \|\Sx\| }, \sqrt{\dfrac{\tau}{m}}, \dfrac{\tau}{m} \right\}, \\
    \|\widehat{\mu} -\mu\| & \leq c\kappa \left( \sqrt{\dfrac{\tr(\Sx)}{m}} + 2\sqrt{\dfrac{ \tau \|\Sx\|}{m}} \right)\label{eq:samp_avg1},
 \end{align}
where $c$ is a universal constant. 
\end{lemma}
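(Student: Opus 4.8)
The plan is to derive the two displayed bounds separately and then merge them with a union bound, since the lemma is essentially an assembly of a result we already have with a cited concentration inequality. The second estimate \eqref{eq:samp_avg1} is literally \eqref{eq:samp_avg} from Lemma~\ref{lem:samp_avg}, which has already been established; it holds on an event of probability at least $1-2e^{-\tau}$, so nothing new is needed for that part. The entire substance therefore lies in the covariance bound \eqref{eq:samp_cov}.

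For \eqref{eq:samp_cov} I would invoke directly the concentration inequality for sample covariance operators of sub-Gaussian vectors proved in \cite{kolo17}. That result controls $\|\widehat{\Sx}-\Sx\|$ in terms of the \emph{effective rank} $\mathbf{r}(\Sx)=\tr(\Sx)/\|\Sx\|$, giving, with probability at least $1-e^{-\tau}$,
\[
\|\widehat{\Sx}-\Sx\| \lesssim \kappa^2\|\Sx\|\max\left\{\sqrt{\frac{\mathbf{r}(\Sx)}{m}},\,\frac{\mathbf{r}(\Sx)}{m},\,\sqrt{\frac{\tau}{m}},\,\frac{\tau}{m}\right\}.
\]
Substituting $\mathbf{r}(\Sx)=\tr(\Sx)/\|\Sx\|$ turns the first two entries of the maximum into $\sqrt{\tr(\Sx)/(m\|\Sx\|)}$ and $\tr(\Sx)/(m\|\Sx\|)$, reproducing verbatim the four terms and the prefactor $\kappa^2\|\Sx\|$ appearing in \eqref{eq:samp_cov}. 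The restriction $\tau>1$ in the statement is exactly what the cited estimate needs so that the linear terms $\tau/m$ dominate appropriately.

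To finish, I would combine the two events: the mean bound holds with probability at least $1-2e^{-\tau}$ and the covariance bound with probability at least $1-e^{-\tau}$, so a union bound shows that both hold simultaneously on an event of probability at least $1-3e^{-\tau}$, which is precisely the factor $3$ claimed. The only step requiring genuine care—and the main (purely bookkeeping) obstacle—is reconciling conventions: I must check that a $\kappa$-sub-Gaussian vector in the sense of \eqref{eq:subGaussian} is admissible in the framework of \cite{kolo17}, possibly after absorbing absolute constants into $\kappa$, so that the universal constant $c$ and the factor $\kappa^2$ come out as stated. Beyond matching this normalization and tracking the probability budget, there is no new inequality to prove.
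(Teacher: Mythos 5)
There is a genuine gap, and it sits exactly where you declared the proof to be pure bookkeeping. The estimator $\widehat{\Sx}$ defined in \eqref{eq:samp_avg_cov} is centered at the \emph{empirical} mean, $\widehat{\Sx} = \frac{1}{m}\sum_{j}(x_j-\widehat{\mu})\otimes(x_j-\widehat{\mu})$, whereas the concentration inequality of \cite{kolo17} that you invoke (Theorem 9 there, as cited in the paper) applies to the sample covariance built from the \emph{true} mean, i.e.\ to $\widehat{\Sigma}_{\xi} = \frac{1}{m}\sum_{j}(x_j-\mu)\otimes(x_j-\mu)$, equivalently to i.i.d.\ mean-zero vectors $\xi_j = x_j-\mu$. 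These are different operators: $\mathbb{E}[\widehat{\Sx}]\neq\Sx$ (there is a bias of order $\Sx/m$), and, more to the point, the summands $x_j-\widehat{\mu}$ are not independent since $\widehat{\mu}$ depends on the whole sample, so the cited theorem simply does not apply to $\widehat{\Sx}$ as you use it. The convention-matching issue you flagged as the main obstacle is indeed minor; the centering discrepancy is not, and your proposal never confronts it.

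The missing step, which is the actual content of the paper's proof, is the algebraic identity
\[
\widehat{\Sx} \;=\; \widehat{\Sigma}_{\xi} \;-\; (\widehat{\mu}-\mu)\otimes(\widehat{\mu}-\mu),
\]
obtained by expanding $x_j-\widehat{\mu}=(x_j-\mu)+(\mu-\widehat{\mu})$ and using $\frac{1}{m}\sum_j(x_j-\mu)=\widehat{\mu}-\mu$. This gives
\[
\|\widehat{\Sx}-\Sx\| \;\le\; \|\widehat{\Sigma}_{\xi}-\Sx\| \;+\; \|\widehat{\mu}-\mu\|^2,
\]
where the first term is controlled by \cite{kolo17} on an event of probability $1-e^{-\tau}$ and the second by Lemma~\ref{lem:samp_avg} (squared) on an event of probability $1-2e^{-\tau}$; the intersection has probability exceeding $1-3e^{-\tau}$, so your final probability budget survives, although note that the covariance bound \eqref{eq:samp_cov} itself needs \emph{both} events, not just the first. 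One must then check that the extra terms produced by $\|\widehat{\mu}-\mu\|^2$, namely $c\kappa^2\bigl(\tr(\Sx)/m + 4\sqrt{\tau\|\Sx\|\tr(\Sx)}/m + 4\tau\|\Sx\|/m\bigr)$, are absorbed into the four-term maximum of \eqref{eq:samp_cov} after enlarging the universal constant: the cross term is handled via $\sqrt{\tau\|\Sx\|\tr(\Sx)}/m \le \frac{1}{2}\bigl(\tr(\Sx)/m+\tau\|\Sx\|/m\bigr)$, and the remaining two terms equal $\|\Sx\|$ times $\tr(\Sx)/(m\|\Sx\|)$ and $\tau/m$ respectively, which already appear in the maximum. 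Without this decomposition and absorption argument, the statement is not proved.
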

\begin{proof}
We first introduce the operator
\[ 
\widehat{\Sigma}_{\xi} = \frac{1}{m} \sum_{j=1}^m \xi_j\otimes \xi_j = \frac{1}{m} \sum_{j=1}^m  (x_j - \mu) \otimes (x_j - \mu).
\]
Since $\mathbb E[\widehat{\Sigma}_{\xi} ]=\Sx$, Theorem~9 of~\cite{kolo17} gives that
\begin{equation}\label{eq:covariance}
    \| \widehat{\Sigma}_\xi -\Sx\| \leq  c' \|\Sx\| \max\left\{\sqrt{\dfrac{\tr \Sx}{m \|\Sx\| }} ,\dfrac{\tr \Sx}{m \|\Sx\|}, \sqrt{\dfrac{\tau}{m}}, \dfrac{\tau}{m} \right\},
\end{equation}
with probability greater than $1- e^{-\tau}$. As usual, it holds that
\[
\begin{aligned}
\widehat{\Sx} &= \frac{1}{m} \sum_{j=1}^m  (x_j - \widehat{\mu}) \otimes (x_j - \widehat{\mu})  = \frac{1}{m} \sum_{j=1}^m  (x_j - \mu + \mu - \widehat{\mu}) \otimes (x_j - \mu + \mu - \widehat{\mu})\\
& = \frac{1}{m} \sum_{j=1}^m  (x_j - \mu) \otimes (x_j - \mu) +  (\mu - \widehat{\mu}) \otimes \left( \frac{1}{m} \sum_{j=1}^m  (x_j - \mu ) \right) \\
&\quad +\left( \frac{1}{m} \sum_{j=1}^m  (x_j - \mu ) \right)\otimes (\mu - \widehat{\mu}) + (\widehat{\mu} - \mu) \otimes (\widehat{\mu} - \mu) \\
& = \widehat{\Sigma}_{\xi} -  (\widehat{\mu} -\mu) \otimes (\widehat{\mu} -\mu). 
\end{aligned}
\]
As a consequence,
\[
\| \widehat{\Sx} - \Sx \| \leq \| \widehat{\Sigma}_{\xi} - \Sx \| + \| \widehat{\mu} -\mu\|^2.
\]
By~\eqref{eq:covariance} and \eqref{eq:samp_avg}, with probability exceeding $1-3 e^{-\tau}$,  we have both~\eqref{eq:samp_avg1} and 
\begin{align*}
 \| \widehat{\Sx} - \Sx \| &\leq  c' \|\Sx\| \max\left\{ 
\sqrt{\dfrac{\tr \Sx}{m \|\Sx\| }} ,
\dfrac{ \tr \Sx}{m \|\Sx\| }, \sqrt{\dfrac{\tau}{m}}, \dfrac{\tau}{m} \right\}\\
&\quad +  c\kappa^2\left( \dfrac{\tr(\Sx)}{m} + 4\dfrac{\sqrt{\tau \|\Sx\|\tr(\Sx)}}{m} + 4\dfrac{ \tau \|\Sx\|}{m} \right),
\end{align*}
which provides the claimed bounds by redefining the constants $c$. 
\end{proof}

The following lemma shows that the excess risk $L(\hU,\BU) - L(\hOp,\BOp)$ is bounded by $\|\wh{\Sx}-\Sx\|$ and $\|\wh{\mu}-\mu\|$. Note that Lemma~\ref{lem:quadratic_reg} would provide a bound in terms of $\|\wh{\Sx}^{\frac12}-\Sx^{\frac12}\|$. Since the square root is a monotone increasing function, it holds true that
\[
\|\wh{\Sx}^{\frac12}-\Sx^{\frac12}\|\leq \sqrt{\|\wh{\Sx}-\Sx\|},
\]
see Theorem X.1.1 of %[35], 
\cite{bhatia2013matrix},
which would provide a worse bound. 
\begin{lemma}
\label{lem:final_unsup}
  Assume that $A\Sigma_x A^* +  \Sigma_\eps \colon Y \to Y$ has a bounded inverse, that the operator 
  \[A^*(\iota^*(A\Sigma_x A^* +  \Sigma_\eps))^{-1}: \iota^*(Y)\subseteq K^*\to X\]
  extends to a bounded operator from $K^*$ to $X$, and that
  \begin{equation}
    \label{eq:31}
   \|(A\Sigma_x A^*+\Sigma_\eps)^{-1} A(\wh{\Sigma}_x-\Sigma_x)A^*\|\leq 1/2.
  \end{equation}
  Then
    \begin{equation}\label{eq:33}
    |L(\hU,\BU) - L(\hOp,\BOp)| = \operatorname{O}\left(\|\wh{\Sx} -\Sx \|\right)+ \operatorname{O}\left(\|\wh{\mu}-\mu\|\right),
\end{equation}
where the constant in $\operatorname{O}$ only depends  on $A,\Sigma_x$, $\Sigma_\eps$, $\iota$ and $\mu$.   
\end{lemma}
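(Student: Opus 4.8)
The plan is to treat both $R_{\hU,\BU}$ and $R_{\hOp,\BOp}$ as bounded affine estimators $y\mapsto Wy+b$ and to exploit the explicit formula for the expected loss of such an estimator derived in Step~1 of the proof of Theorem~\ref{prop:quadratic_target_inf2}, namely
\[
L(W,b) = \tr[(W\iota^*A - I_X)\Sx(W\iota^*A - I_X)^*] + \tr[W(\iota^*\Se\iota)W^*] + \norm{(W\iota^*A - I_X)\mu + b}_X^2 .
\]
The optimal estimator corresponds to $W=\WOp$ and $b=b^\star=(I_X-\WOp\iota^*A)\mu$ (for which the last term vanishes), while by Proposition~\ref{prop:alternative} the unsupervised one corresponds to $\wh W := \wh{\Sx} A^*(\iota^*(A\wh{\Sx} A^* + \Se))^{-1}$ and $\wh b := (I_X - \wh W\iota^*A)\wh\mu$. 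Since $L$ is quadratic in $(W,b)$ with the trace-class coefficients $\Sx$ and $\iota^*\Se\iota$, the excess risk will be controlled once I bound $\wh W - \WOp$ in the operator norm of $\mathcal{L}(K^*,X)$ and $\wh b - b^\star$ in $X$, each in terms of $\norm{\wh{\Sx}-\Sx}$ and $\norm{\wh\mu-\mu}$.

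The core estimate, and the main obstacle, is to show that $\norm{\wh W - \WOp}_{\mathcal{L}(K^*,X)} = \operatorname{O}(\norm{\wh{\Sx} - \Sx})$, including the fact that $\wh W$ extends to a bounded operator on all of $K^*$ (a priori it is defined only on the dense subspace $\iota^*(Y)$). Writing $G = A\Sx A^* + \Se$, $\Delta = \wh{\Sx} - \Sx$ and $Q_0 = A^*(\iota^*G)^{-1}$, which is bounded on $K^*$ by hypothesis, I would first use the factorization $\wh G = G(I_Y + G^{-1}A\Delta A^*)$ together with assumption~\eqref{eq:31} and a Neumann series to obtain $\norm{\wh G^{-1}}\le 2\norm{G^{-1}}$ on $Y$. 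The resolvent identity then yields, on $\iota^*(Y)$,
\[
A^*(\iota^*\wh G)^{-1} - Q_0 = -A^*\wh G^{-1}A\,\Delta\,Q_0,
\]
whose right-hand side is a bounded operator $K^*\to X$; this simultaneously furnishes the bounded extension of $A^*(\iota^*\wh G)^{-1}$ and the bound $\operatorname{O}(\norm\Delta)$ on the difference. Writing $\wh W - \WOp = \Delta\,A^*(\iota^*\wh G)^{-1} + \Sx\bigl(A^*(\iota^*\wh G)^{-1} - Q_0\bigr)$ then gives the claimed $\mathcal{L}(K^*,X)$ estimate and, en route, shows that $\norm{\wh W}_{\mathcal{L}(K^*,X)}$ stays bounded, so that $L(\hU,\BU)$ is finite and the loss formula applies.

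With this in hand I would bound $\wh b - b^\star$ and then the three terms of $L(\wh W,\wh b) - L(\WOp,b^\star)$. For the bias, the identity $\wh b - b^\star = (I_X - \WOp\iota^*A)(\wh\mu - \mu) - (\wh W - \WOp)\iota^*A\wh\mu$ gives $\norm{\wh b - b^\star}_X = \operatorname{O}(\norm{\wh\mu-\mu}) + \operatorname{O}(\norm{\wh{\Sx} - \Sx})$. For the two trace terms, expanding each as a difference $\tr[\wh F\Sx\wh F^*] - \tr[F^\star\Sx F^{\star*}]$ with $\wh F - F^\star = (\wh W - \WOp)\iota^*A$ and estimating via $|\tr[CD]|\le\norm{C}_{\mathcal L}\norm{D}_{\mathrm{tr}}$ shows both are $\operatorname{O}(\norm{\wh W - \WOp}) = \operatorname{O}(\norm{\wh{\Sx} - \Sx})$; this is precisely where the trace-class nature of $\Sx$ and of $\iota^*\Se\iota$, together with the $\mathcal{L}(K^*,X)$ bound on $\wh W - \WOp$, are used. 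Finally, the last term simplifies cleanly: since $b^\star$ annihilates it and $\wh b = (I_X - \wh W\iota^*A)\wh\mu$, one finds $(\wh W\iota^*A - I_X)\mu + \wh b = (I_X - \wh W\iota^*A)(\wh\mu - \mu)$, so this term equals $\norm{(I_X - \wh W\iota^*A)(\wh\mu - \mu)}_X^2 = \operatorname{O}(\norm{\wh\mu - \mu}^2)$. Summing the three contributions yields \eqref{eq:33}, with constants depending only on $A,\Sx,\Se,\iota,\mu$. The delicate point throughout is the displayed resolvent estimate: controlling $\wh G^{-1}$ uniformly and, crucially, establishing boundedness of the extension to $K^*$ rather than merely to $Y$.
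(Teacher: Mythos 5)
Your proposal is correct, and its technical core coincides with the paper's proof: both hinge on (i) the factorization $A\wh{\Sx}A^*+\Se=(A\Sx A^*+\Se)\bigl(I+(A\Sx A^*+\Se)^{-1}A(\wh{\Sx}-\Sx)A^*\bigr)$ together with \eqref{eq:31} and a Neumann series to invert $A\wh{\Sx}A^*+\Se$; (ii) a resolvent identity on $\iota^*(Y)$ combined with the assumed bounded extension of $A^*(\iota^*(A\Sx A^*+\Se))^{-1}$, yielding $\|\wh{W}-\WOp\|_{\mathcal{L}(K^*,X)}=\operatorname{O}(\|\wh{\Sx}-\Sx\|)$ and, en route, the bounded extension of $\wh{W}$ to all of $K^*$; and (iii) the algebraic identity collapsing the mean contribution to $(I-\wh{W}\iota^*A)(\wh{\mu}-\mu)$. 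Where you genuinely diverge is the outer reduction of the excess risk to these operator bounds. The paper never expands $L$ as a functional of $(W,b)$: it uses the pointwise estimate $\|\xU-x\|^2-\|\xOp-x\|^2\le\|\xU-\xOp\|^2+2\|\xU-\xOp\|\,\|\xOp-x\|$ and H\"older, so that the excess risk is $\operatorname{O}\bigl((\E[\|\xU-\xOp\|^2])^{1/2}\bigr)$, and then evaluates $\E[\|\xU-\xOp\|^2]$ as a quadratic form in the differences $\wh{W}-\wt{W}$, $\wh{b}-\wt{b}$. You instead substitute both estimators into the explicit loss formula from Step 1 of the proof of Theorem~\ref{prop:quadratic_target_inf2} and bound the three differences term by term via $|\tr[CD]|\le\|C\|\,\|D\|_{\mathrm{tr}}$, using the trace-class property of $\Sx$ and $\iota^*\Se\iota$. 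The two reductions cost about the same and give the same constants; the paper's is slightly shorter, while yours is more transparent about the error structure---it shows the mean mismatch enters only as $\operatorname{O}(\|\wh{\mu}-\mu\|^2)$, and, had you additionally invoked the first-order optimality of $(\WOp,b^\star)$ (so that the cross terms vanish by orthogonality), it would reveal that the excess risk is in fact quadratic in both perturbations, a sharpening neither proof needs for \eqref{eq:33}. One shared, harmless caveat: like the paper, you absorb $\|\wh{\Sx}-\Sx\|^2$ into $\operatorname{O}(\|\wh{\Sx}-\Sx\|)$, which is legitimate only once $\|\wh{\Sx}-\Sx\|$ is bounded, as is ensured where the lemma is applied in the proof of Theorem~\ref{thm:gener_unsup}.
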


\begin{proof}
Let
\[
\xU=R_{\hU,\BU}(y), \qquad \xOp= R_{\hOp,\BOp}(y),
\]
so that
\[
L(\hU,\BU) - L(\hOp,\BOp)= \E\left[ \|  \xU - x \|^2\right] - \E\left[ \| \xOp - x \|^2\right] .
\]
Since $\xOp$ minimizes the mean square error, clearly 
$L(\hU,\BU) - L(\hOp,\BOp)\geq 0$ . We now
prove the upper bound.   Since
\begin{alignat*}{1}
    \|  \xU- x \|^2 - \|\xOp- x \| ^2  & = \|  \xU- \xOp \| ^2 +
    2 \langle \xU- \xOp , \xOp- x \rangle\\
    & \leq \|  \xU- \xOp \| ^2 + 2 \|\xU- \xOp\|  \|  \xOp- x\|,
  \end{alignat*}
then, by H\"older inequality,
\begin{alignat}{1}
      \E\left[ \| \xU- x \|^2\right] - \E\left[ \| \xOp- x
        \|^2\right] & \leq \E\left[ \|\xOp- \xU \|^2\right] +
      2\sqrt{ \E\left[ \|\xU- \xOp \| ^2\right] \E\left[
          \|\xOp-x \|^2\right] } \nonumber \\
      & =\operatorname{O}\left( \sqrt{ \E\left[ \|\xU- \xOp \| ^2\right] }\right),\label{eq:21}
    \end{alignat}
 where the constant in $\operatorname{O}$ only depends on $A,\Sigma_x$ and $\Sigma_\eps$.
 By~\eqref{eq:R2} and the definition of $\xU$
\[
\begin{aligned}
	\xOp &= \Sigma_x A^* (\iota^*(A\Sigma_x A^*+ \Sigma_\eps))^{-1} (y - \iota^*A\mu) +
  \mu = \wt{W} y +\wt{b} \\
\xU & =  \wh{\Sigma}_x A^* (\iota^*(A\wh{\Sigma}_x A^* + \Sigma_\eps))^{-1} (y -
 \iota^* A\wh{\mu}) + \wh{\mu} = \wh{W} y +\wh{b},
	\end{aligned}
      \]
where $\wt{W}$ and $\wh{W}$ are given in \eqref{eq:WB2}, for $B^2=\Sx$ and $B^2 = \wh{\Sx}$, respectively, and $\wt{b}, \wh{b}$ in \eqref{eq:b}.
As a consequence,
    \[
\xU-\xOp = (\wh{W}- \wt{W}) y + \wh{b}- \wt{b} = W y + b  =
W\iota^* A(x-\mu)+W \eps + W \iota^*A\mu + b,
\]
where
\[
  W  = \wh{W}- \wt{W},  \qquad  b   = \wh{b}- \wt{b}.%  = \ms{(I-\wh{W}\iota^* A)(\wh{\mu}-\mu) + W\iota^* A \mu} 
\]
Hence, 
taking into account that $x-\mu$ and $\eps$ are zero mean random
variables, we obtain
\begin{alignat}{1}
  \E\left[\|\xU- \xOp \| ^2 \right]^{\frac12} & = \left(\tr{\left[ W(\iota^*A\Sigma_x
      A^*\iota+\iota^*\Sigma_\eps \iota)W^* \right]}  + \| W \iota^*A \mu + b\|^2 |\right)
^{\frac12} \nonumber \\
& \leq \tr{\left[ W(\iota^*A\Sigma_x A^*\iota+\iota^*\Sigma_\eps
    \iota)W^* \right]}^{\frac12} + \| W\iota^*A \mu + b\|  \nonumber
\\
&\leq  (\tr{\left( \iota^*A\Sigma_x
    A^*\iota+\iota^*\Sigma_\eps\iota\right)})^{\frac12} \, \|W\| +
\| W\iota^*A \mu + b\| . \label{eq:13}
\end{alignat}
 We now bound  the norm of $W$ where
\[
W= \wh{\Sigma}_x A^* (\iota^*(A\wh{\Sigma}_x
      A^* + \Sigma_\eps))^{-1}-  \Sigma_x A^* (\iota^*(A\Sigma_x A^*+
      \Sigma_\eps))^{-1}.
  \]
A delicate issue is that $(\iota^*(A\wh{\Sigma}_x
      A^* + \Sigma_\eps))^{-1}$ and $(\iota^*(A\Sigma_x A^*+ \Sigma_\eps))^{-1}$
do not have bounded inverses, see  the remark after
Theorem~\ref{prop:quadratic_target_inf2}.  We first prove that  $A\wh{\Sigma}_x
A^*+\Sigma_\eps$ has a bounded inverse. Indeed, let
$\Delta=A(\wh{\Sigma}_x-\Sigma_x)A^*$, then
\begin{align*}
  A\wh{\Sigma}_x
A^*+\Sigma_\eps & = (A\Sigma_x A^*+\Sigma_\eps) \left( I +  (A\Sigma_x
  A^*+\Sigma_\eps)^{-1} \Delta\right).
\end{align*}
By assumption~\eqref{eq:31}, $\|(A\Sigma_x
  A^*+\Sigma_\eps)^{-1} \Delta\|\leq 1/2<1$, so that using Neumann series
  we have that $A\wh{\Sigma}_xA^*+\Sigma_\eps$ is invertible and
    \begin{align*}
&(A\wh{\Sigma}_x A^*+\Sigma_\eps)^{-1} - (A\Sigma_x
      A^*+\Sigma_\eps)^{-1}\\
      &\quad =  \left( I +  (A\Sigma_x
  A^*+\Sigma_\eps)^{-1} \Delta\right)^{-1} (A\Sigma_x A^*+\Sigma_\eps)^{-1} -(A\Sigma_x A^*+\Sigma_\eps)^{-1}\\
   &\quad = \left( \left( I +  (A\Sigma_x
  A^*+\Sigma_\eps)^{-1} \Delta\right)^{-1}- I  \right)    (A\Sigma_x
     A^*+\Sigma_\eps)^{-1} \\
&\quad  = \left( I +  (A\Sigma_x
  A^*+\Sigma_\eps)^{-1}  \Delta\right)^{-1} (A\Sigma_x A^*+\Sigma_\eps)^{-1} \Delta   (A\Sigma_x  A^*+\Sigma_\eps)^{-1}  .     
    \end{align*}
 Then, on  $\iota^*(Y)\subseteq K^*$
    \begin{multline*}
      (i^* (A\wh{\Sigma}_x A^*+\Sigma_\eps))^{-1}  - (i^*(A\Sigma_x
A^*+\Sigma_\eps))^{-1} \\
= \left( I +  (A\Sigma_x
  A^*+\Sigma_\eps)^{-1}  \Delta\right)^{-1} (A\Sigma_x
A^*+\Sigma_\eps)^{-1}A( \wh{\Sigma}_x-\Sigma_x)A^*  (i^* (A\Sigma_x
     A^*+\Sigma_\eps))^{-1}  .
    \end{multline*}
The density of $\iota^*(Y)\subset K^*$ and the assumption that  $A^*  (i^* (A\Sigma_x
     A^*+\Sigma_\eps))^{-1} $ extends to a bounded operator from $K^*$ to $X$ implies
     that
     \begin{align*}
      &\|  (i^* (A\wh{\Sigma}_x A^*+\Sigma_\eps))^{-1}  - (i^*(A\Sigma_x
      A^*+\Sigma_\eps))^{-1}\| \\
&\quad \leq\| \left( I +  (A\Sigma_x
  A^*+\Sigma_\eps)^{-1}  \Delta\right)^{-1}\| \,    
        \| (A\Sigma_x
        A^*+\Sigma_\eps)^{-1}A\|   \| \wh{\Sigma}_x-\Sigma_x\|
        \|   A^*  (i^* (A\Sigma_x A^*+\Sigma_\eps))^{-1}  \| \\
&\quad   \leq  2    \| (A\Sigma_x
        A^*+\Sigma_\eps)^{-1}A\|   \| \wh{\Sigma}_x-\Sigma_x\|
        \|   A^*  (i^* (A\Sigma_x A^*+\Sigma_\eps))^{-1}  \|,
     \end{align*}
     where we used~\eqref{eq:31} to bound $\| \left( I +  (A\Sigma_x
  A^*+\Sigma_\eps)^{-1}  \Delta\right)^{-1}\| $ with $2$, so that
     \begin{equation}
       \label{eq:3}
   \|  (i^* (A\wh{\Sigma}_x A^*+\Sigma_\eps))^{-1}  - (i^*(A\Sigma_x
   A^*+\Sigma_\eps))^{-1}\|     \lesssim  \| \wh{\Sigma}_x-\Sigma_x\|,
     \end{equation}
where the constant in $\lesssim$ only depends on $A,\Sigma_x$ and $\Sigma_\eps$.     
     Since
     \begin{align*}
      W & = \wh{\Sigma}_x A^* \left( (\iota^*(A\wh{\Sigma}_x
      A^* + \Sigma_\eps))^{-1}-  (\iota^*(A\Sigma_x A^*+
      \Sigma_\eps))^{-1} \right) \\
      & \quad + (\wh{\Sigma}_x -\Sigma_x )A^*  (\iota^*(A\Sigma_x A^*+
      \Sigma_\eps))^{-1},
     \end{align*}
eq.~\eqref{eq:3} and the fact that $\| \wh{\Sigma}_x\|\leq
\|\wh{\Sigma}_x-\Sigma_x\|+\|\Sigma_x\|$ both imply 
     \begin{equation}
       \label{eq:4}
       \|W\|\lesssim \| \wh{\Sigma}_x -\Sigma_x\| + \| \wh{\Sigma}_x
       -\Sigma_x\|^2=\operatorname{O}\left( \| \wh{\Sigma}_x -\Sigma_x\|\right),
     \end{equation}     
where the constants in $\lesssim$ and $\operatorname{O}$ only depend on $A,\Sigma_x$ and $\Sigma_\eps$.       
We now observe that
\[
b= (\wh{\mu} - \mu)  - ( \wh{W}\iota^*A \wh{\mu} - \wt{W}\iota^*A \mu) =  (\wh{\mu}
- \mu) - \wh{W}\iota^*A( \wh{\mu} - \mu) - W\iota^*A \mu,
 \]
 so that
 \[
W\iota^*A \mu + b = (I - \wh{W}\iota^*A) (\wh{\mu} - \mu),
\]
and
\[
\| W\iota^*A \mu + b\| \leq \| (I - \wh{W}\iota^*A)\|\,  \|\wh{\mu} - \mu\|\leq
(\| (I - \wt{W}\iota^*A)\| + \|W\iota^*A\|)\, \|\wh{\mu} - \mu\| .
\]
Eq.~\eqref{eq:4} implies that
\begin{equation}
  \label{eq:29}
  \| W\iota^*A \mu + b\| \lesssim  \|\wh{\mu} - \mu\| +  \|\wh{\mu} - \mu\|
  \|\wh{\Sigma}_x-\Sigma_x \|= \operatorname{O}\left(\|\wh{\mu} - \mu\| \right),
\end{equation}
where the constants in $\lesssim$ and $\operatorname{O}$  only depend on $A,\Sigma_x$, $\Sigma_\eps$ and $\mu$.
Eqs.~\eqref{eq:21} and~\eqref{eq:13} with~\eqref{eq:29} give~\eqref{eq:33}.
\end{proof}
We are now able to prove the main result of section~\ref{sec:unsup}.
\begin{proof}[Proof of Theorem \ref{thm:gener_unsup} ]
Since the map $C \mapsto (A\Sigma_x A^*+\Sigma_\eps)^{-1} ACA^*$ is continuous from $\mathcal L(X,X)$ into $\mathcal L(Y,Y)$, there exists $\delta>0$ such that 
\[\|(A\Sigma_x A^*+\Sigma_\eps)^{-1} AC A^*\|\leq 1/2 \qquad \forall C\in \mathcal L(X,X)\quad\|C\|\leq \delta.\]
Set $m_0\in\N$ such that
\[
c \kappa^2\|\Sx\| \max \left\{ 
\sqrt{\dfrac{\tr\Sx}{m_0 \|\Sx\| }} ,
\dfrac{ \tr\Sx}{m_0 \|\Sx\| }, \sqrt{\dfrac{\tau}{m_0}}, \dfrac{\tau}{m_0} \right\} \leq \delta,
\]
where $c$ is the constant in Lemma~\ref{lem:samp_cov}.  
Eq.~\eqref{eq:samp_cov} implies that for all $m\geq m_0$ condition~\eqref{eq:31} is satisfies with probability exceeding $1-4 e^{-\tau}$. 
Possibly redefining $m_0$, by~\eqref{eq:samp_cov} and~\eqref{eq:samp_avg1} we can assume that on the same event
\begin{equation}\label{eq:50}
 \max\{\|\widehat{\mu} -\mu\|, \| \widehat{\Sx} -\Sx\|\}\leq \min\{1,\frac{c_1+c_2\sqrt{\tau}}{\sqrt{m}}\},   
\end{equation}
where $c_1$ and $c_2$ are suitable constants independent of $m$ and $\tau$.
Hence, eq.~\eqref{eq:33} implies that on the same event 
\[
|L(\hU,\BU) - L(\hOp,\BOp)| = \operatorname{O}\left(\|\wh{\Sx} -\Sx \|\right)+  \operatorname{O}\left(\|\wh{\mu}-\mu\|\right)\leq C\frac{c_1+c_2\tau}{\sqrt{m}},
\]
where the last inequality is a consequence of \eqref{eq:50}. Eq.~\eqref{eq:generalization_sup2} is now clear.
\end{proof}
\subsection{Numerical results: further details}
\label{app:numerics}

\subsubsection{Experimental setup}
In Section \ref{sec:numerics}, we set $X=L^2(\mathbb{T}^1)$, being $\mathbb{T}^1$ the one-dimensional torus. For any $N>0$, we can introduce the partition $\{I_{N,i}\}_{i=1}^N$ of the interval $(0,1)$, being $\displaystyle I_{N,i} = \left( \frac{i-1}{N}, \frac{i}{N}\right)$ and define the 1D-pixel basis $\{\varphi_{N,i}\}_{i=1}^N$ as follows:
\[
\varphi_{N,i}(t) = \sqrt{N} \chi_{N,i}(t), \qquad  \chi_{N,i}(t) = \left\{ \begin{aligned} 1 \quad & t \in I_{N,i}, \\ 0 \quad & \text{otherwise.} \end{aligned} \right.
\]
The functions $\{\varphi_{N,i}\}_{i=1}^N$ form an orthogonal set, and we define $X_N$ as the linear space generated by them. Each element $u \in X_N$ can be uniquely represented by a vector $\discr{u} \in \R^N$ as follows:
\[
\discr{u}_i = \frac{1}{|I_{N,i}|} \int_{I_{N,i}} u = \sqrt{N} \langle u, \varphi_{N,i} \rangle_X, \qquad u(t) = \sum_{i=1}^N  \langle u, \varphi_{N,i} \rangle_X \varphi_{N,i}(t) = \sum_{i=1}^N \frac{1}{\sqrt{N}} \discr{u}_i \varphi_{N,i}(t).
\]
As a consequence, for $u \in X_N$, we can compute $\| u \|_X^2 = \sum_{i=1}^N \langle u,\varphi_{N,i}\rangle^2 = \frac{1}{N} \sum_{i=1}^N \discr{u}_i^2$.
The representation of a linear operator $B \colon X_N\rightarrow X_N$ can be done via a matrix $\discr{B} \in \R^{N\times N}$ as follows:
\[
\discr{B}_{i,j} = \langle B \varphi_{N,j}, \varphi_{N,i} \rangle_X;  \qquad v = B u \ \iff \ \discr{v} = \discr{B} \discr{u}.
\]
In order to generate a discrete version of the random process $\varepsilon$ and of the random variable $x$, we first generate the vectors $\discr{\nu}_x$, $\discr{\nu}_\varepsilon$ such that each component $[\discr{\nu}_x]_i$ and $[\discr{\nu}_\varepsilon]_i$ is independently distributed with mean $0$ and covariance $1$. In the proposed tests, we either draw from a Gaussian distribution $\mathcal{N}(0,1)$ or a uniform distribution $\operatorname{Unif}(-\sqrt{3},\sqrt{3})$, taking advantage of the Matlab commands \texttt{randn} and \texttt{rand}. Then in order to approximate the white noise process $\varepsilon$, with zero mean and covariance operator $\Se = \sigma^2 I$, we introduce $\discr{\varepsilon}$ such that
\[
\E [ \discr{\varepsilon}_i \discr{\varepsilon}_j ] = \E [\sqrt{N}\langle \varepsilon ,\varphi_{N,i} \rangle \sqrt{N}\langle \varepsilon ,\varphi_{N,j} \rangle ] = \sigma^2 N \delta_{ij},
\]
thus resulting in $\discr{\varepsilon} = \sigma \sqrt{N} \discr{\nu}_\varepsilon$. As an alternative, we also consider a random process whose components with respect to the Haar wavelet basis are randomly sampled as a white noise, i.e., $\discr{\varepsilon} = \sigma \sqrt{N} \discr{W}^T \discr{\nu}_\varepsilon$, where $\discr{W}$ is the discrete Haar wavelet transform and $\discr{W}^T$ its transpose.

The random variable $\discr{x}$ is instead computed as $\discr{x} = \discr{\mu} + \sqrt{N}\discr{\Sigma}_x^{1/2} \discr{\nu}_x$, being
\[
\discr{\mu}_i = \sqrt{N} \langle \mu,\varphi_{N,i}\rangle_X, \qquad [\discr{\Sigma}^{1/2}_x]_{i,j} =  \langle \Sigma_x^{1/2}\varphi_{N,j}, \varphi_{N,i} \rangle_X.
\]
In the experiments, we picked $\mu(t) = 1-|2t-1|$ and $\Sx^{1/2}$ s.t. 
\[
\Sx^{1/2} u(t) = \int_{\mathbb{T}^1} k_{\Sx}(t')u(t-t')dt, \quad k_{\Sx}(t) = 1-\operatorname{exp}(-(c/t)^4) \chi_{(-c,c)}(t),
\]
being $c=0.2$. Finally, we selected $\sigma = 0.05$. In Figure \ref{fig:3}, we show some signals from the training sample, both in dimension $N=64$ and $N=256$.
\begin{figure}
\centering
\begin{tabular}{ccc}
\includegraphics[width=0.32\columnwidth,trim={2cm 0cm 1cm 0cm}]{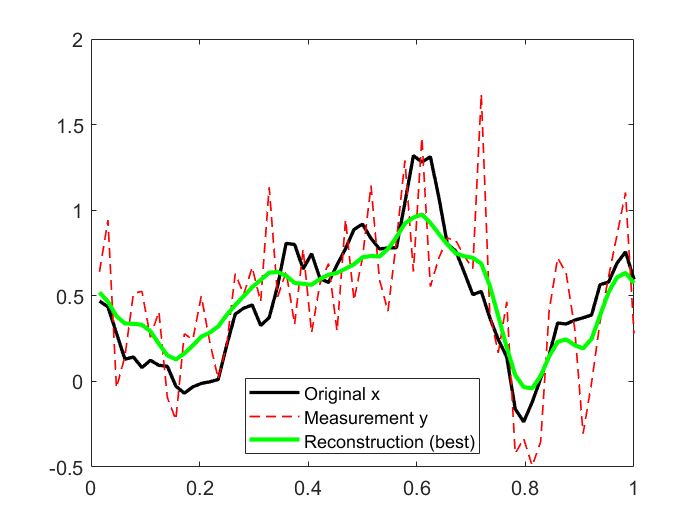}
& \includegraphics[width=0.32\columnwidth,trim={2cm 0cm 1cm 0cm}]{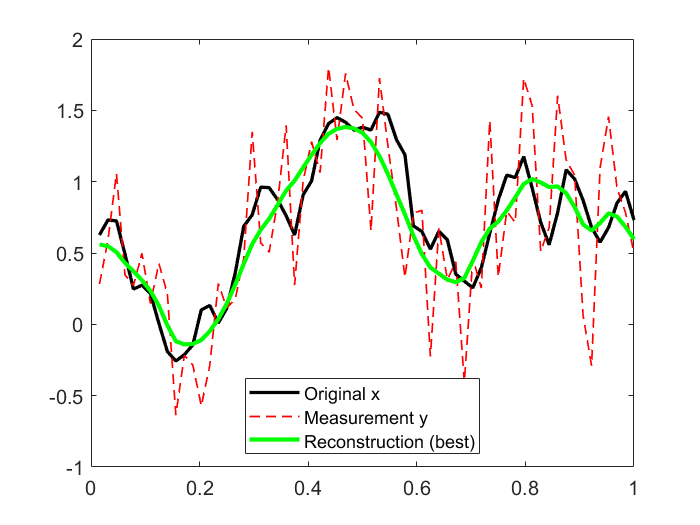} 
& \includegraphics[width=0.32\columnwidth,trim={2cm 0cm 1cm 0cm}]{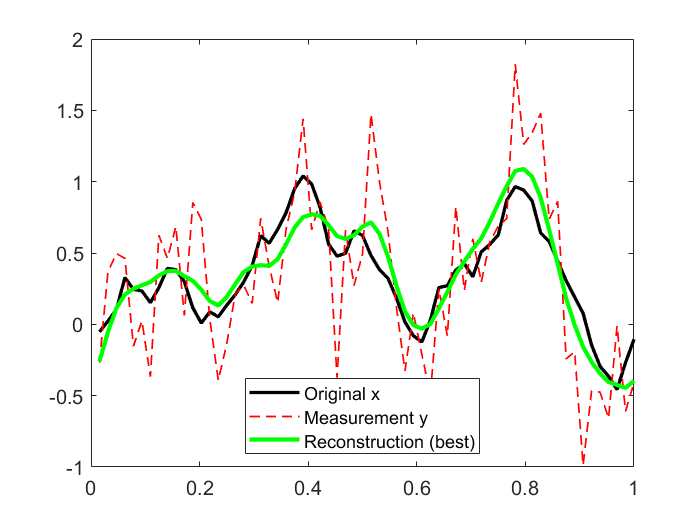} \\
(a) & (b) & (c) \\
\includegraphics[width=0.32\columnwidth,trim={2.0cm 0cm 1.cm 0cm}]{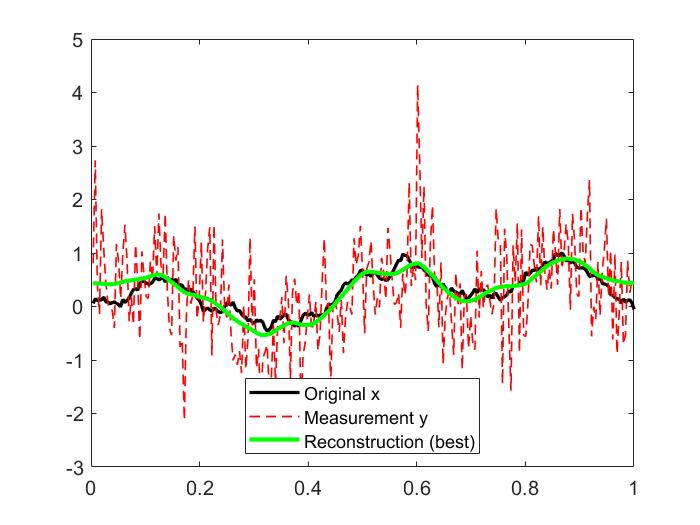}
& \includegraphics[width=0.32\columnwidth,trim={2.0cm 0cm 1.0cm 0cm}]{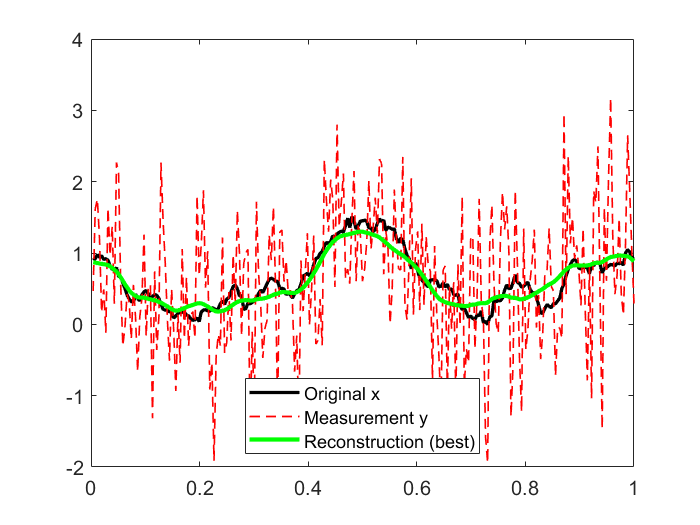} 
& \includegraphics[width=0.32\columnwidth,trim={2.0cm 0cm 1.0cm 0cm}]{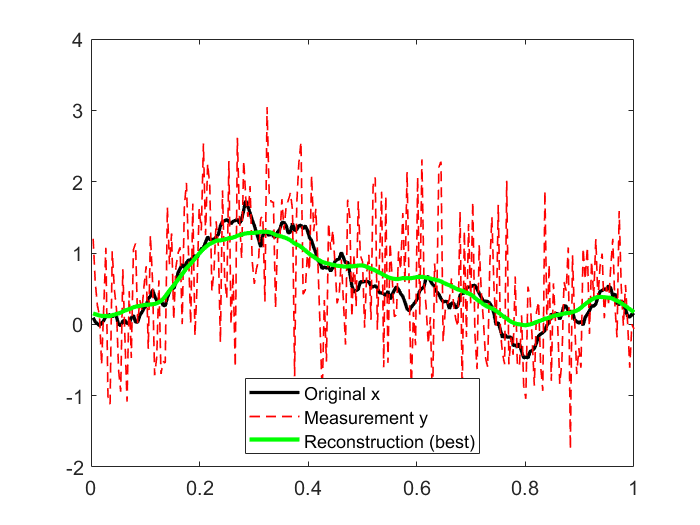} \\
(d) & (e) & (f) 
\end{tabular}
\caption{Signals drawn from the joint distribution in the case where both $x$ and $\varepsilon$ are Gaussian. (a),(b),(c): $N=64$, (d),(e),(f): $N=256$. We show in black the original signal $x$, in red the noisy datum $y = x+\varepsilon$ and in green the reconstruction $R_{\thetaOp}(y)$ associated with the optimal regularizer.
}
\label{fig:3}
\end{figure}

\subsubsection{Implementation aspects}
As expressed in Section \ref{sec:numerics}, it is possible to compute the mean squared error $L$ associated with the optimal parameter $\thetaOp = (\hOp,\BOp)$ and the learned parameters $\thetaS = (\hS, \BS)$ , $\thetaU = (\hU, \BU)$ with an explicit formula. Indeed, since the employed data are synthetically generated, we can take advantage of the knowledge of $\mu, \Sx,\Se$. We simulate the computations in Section \ref{sec:optimal} and in Appendix \ref{sec:app3}, in a finite-dimensional context: here, since for any $N$ the operator $\Se$ is invertible, Assumption \ref{ass:SigmaAB} is satisfied with $K=Y$. The expression of the regularizer in \eqref{eq:affine} then reads as
\[
R_{h,B}(y) = W y + b,
\]
% \[
% R_{\discr{h},\discr{B}}(\discr{y}) = \discr{B} (\discr{B}\discr{A}^* \discr{\Se}^{-1} \discr{A} \discr{B} + \discr{I})^{-1} (\discr{B} \discr{A^*}\discr{\Se}^{-1} \discr{y} + \discr{B}^{-1}\discr{\vh}) = \discr{W} \discr{y} + \discr{b},
% \]
being $W = B^2A^*(\Se + A B^2 A^*)^{-1}$ and $b = (I_X-WA)h$.
%being $\discr{W} = \discr{B}^2\discr{A}^*(\discr{\Se} + \discr{A}\discr{B}^2\discr{A}^*)^{-1}$ and $\discr{b} = (\discr{I} - \discr{W}\discr{A})\discr{h}$. 
Moreover, as in Appendix \ref{sec:app3}, we can compute
\[
L(h,B) = \tr[ (WA - I_X) \Sx (WA - I_X)^* ] + \tr[W \Se W^*] + \|  (WA-I_X) \mu + b\|_X^2.
\]
% \[
% L(\discr{B},\discr{h}) = \tr[ (\discr{W}^* \discr{A} - \discr{I}) \discr{\Sx} (\discr{W}^* \discr{A} - \discr{I})^* ] + \tr[\discr{W} \discr{\Se} \discr{W}^* ] + \|  (\discr{W}^*\discr{A} - \discr{I}) \discr{\mu} + \discr{b}\|_X^2.
% \]
By this formula, it is possible to compute the mean squared error associated to any parameter $\theta$, and in particular for 
$(\hOp,\BOp) = (\mu,\Sx^{1/2})$ and $(\hU,\BU) = (\wh{\mu},\wh{\Sx}^{1/2})$.
% $(\discr{\BOp},\discr{\hOp}) = (\discr{\Sx}^{1/2},\discr{\mu})$ and $(\discr{\BU},\discr{\hU}) = (\wh{\discr{\Sx}}^{1/2},\wh{\discr{\mu}})$. 
\par

In order to detect the empirical risk minimizer $(\hS,\BS)$, and in particular to compute the quantity $L(\hS,\BS)$, we rely on the same strategy adopted for the minimization of $L$. Therefore, we first look for the affine functional $Wy+b$ which minimizes the empirical risk, defined as
\[
\wh{L}_{b,W} = \frac{1}{m} \sum_{j=1}^m \| Wy_j + b - x_j \|_X^2;
\]
then, if the optimal $b$ and $W$ can be written as $W = B^2 A^*(\Se + A B^2 A^*)^{-1}$ and $b = (I_X-WA)h$, the pair $(h,B)$ is a minimizer of $\wh{L}(h,B)$. Thanks to the empirical mean and covariance matrices
\[
\wh{y} = \frac{1}{m}\sum_{j=1}^m y_j, \qquad \wh{\Sigma}_y = \frac{1}{m} \sum_{j=1}^m (y_j-\wh{y}) \otimes (y_j - \wh{y}), \qquad \wh{\Sigma}_{yx} = \frac{1}{m} \sum_{j=1}^m (y_j-\wh{y}) \otimes (x_j - \wh{\mu}),
\]
it is also possible to provide a more explicit formula for $\wh{L}_{b,W}$. Indeed,
\[   
\begin{aligned}
\wh{L}_{b,W} &= \frac{1}{m} \sum_{j=1}^m \left( \| W(y_j - \wh{y})\|_X^2 + \| x_j-\wh{\mu}\|_X^2 - 2 \langle W(y_j - \wh{y}),x_j - \wh{\mu} \rangle_X + \| W \wh{y} - \wh{\mu} + b \|_X^2 \right) \\
&= \tr[W \wh{\Sigma}_yW^*] + \tr[\wh{\Sigma}_x] -  2\tr[W\wh{\Sigma}_{yx}] + \| W \wh{y} - \wh{\mu} + b \|_X^2,
\end{aligned}
\]
where we have used that $\sum_j (y_j-\wh y)=0$ and $\sum_j (x_j-\wh \mu)=0$.
Thus, the minimizer of $\wh{L}_{b,W}$ is the affine operator associated with $W = \wh{\Sigma}_{xy}\wh{\Sigma}_{y}^{-1}$ and $b = \wh{\mu}- W\wh{y}$. Unfortunately, such $W$ does not yield the optimal parameter $\BS$: indeed, $W$ cannot be written in the form $W = B^2 A^*(\Se + A B^2 A^*)^{-1}$, but rather $W = M A^*(\Se + AM A^*)^{-1}$, where the resulting $M$
%, namely $\MS = \wh{\Sigma_{xy}}^{-1}\wh{\Sigma_{y}}A^*\Se^{-1} - A^*\Se^{-1}A$ 
is not symmetric. We overcome such issue by considering the symmetric part of $M$, which we denote by $M'$. Indeed, despite the operator $W'$ associated with $M'$ is possibly different from the minimizer of $\wh{L}_{b,W}$ among the functionals of the form $W = B^2 A^*(\Se + AB^2 A^*)^{-1}$, numerical evidence shows that the values of $L$ evaluated in $W'$ and $W$ are very close%\gray{(between 1\% and 8\%, reducing with $m$)} \ms{[I think the next sentence is already a good justification]}
. Since the former is an upper bound of $L(\hS,\BS)$ and the latter a lower bound, we conclude that the expected loss $L$ evaluated in $W'$ provides a sufficiently tight upper estimate of the value of $L(\hS,\BS)$, without explicitly requiring the computation of $\BS$ and $\hS$.

As a final remark, we show how the generalization bounds in probability obtained in Theorems \ref{thm:gener_sup} and \ref{thm:gener_unsup} can be reformulated in expectation. Let us first consider the unsupervised case, in which 
\[
\Prob_{\vzv \sim \rho^m} \left[ |L(\hU,\BU) - L(\hOp,\BOp)| \leq c_3 \frac{1}{\sqrt{m}} + c_4 \frac{\sqrt{\tau}}{\sqrt{m}} \right] \geq 1 - e^{-\tau}.
\]
Inverting $\eta = c_3 \frac{1}{\sqrt{m}} + c_4 \frac{\sqrt{\tau}}{\sqrt{m}}$ in terms of $\tau$ we get
\[
\Prob_{\vzv \sim \rho^m} \left[ |L(\hU,\BU) - L(\hOp,\BOp)| \leq \eta \right] \geq 1 - \wt{c_1}e^{-\wt{c_2}m\eta^2}.
\]
This can be translated into a bound in expectation by means of the following identity:
\[
\mathbb{E}_{\vzv \sim \rho^m} \left[ |L(\hU,\BU) - L(\hOp,\BOp)|\right] = \int_0^\infty \Prob_{\vzv \sim \rho^m} \left[ |L(\hU,\BU) - L(\hOp,\BOp)| > \eta \right]d\eta 
%\leq \int_0^\infty \wt{c_1}e^{-\wt{c_2}m\eta^2}d\eta 
\lesssim \frac{1}{\sqrt{m}}.
\]

In a similar way, in the supervised case we have (see the Appendix \ref{app:entropy})
\[
\Prob_{\vzv \sim \rho^m} \left[ |L(\wh{\theta_{\vzv}}) - L(\theta^*)| \leq \eta \right] \geq 1 - e^{c_1 \eta^{-1/s'} - c_2  m\eta^2}.
\]
Notice that, when $\eta \rightarrow 0$, such bound could be meaningless, as the term $e^{c_1 \eta^{-1/s'}}$ blows up. We therefore substitute it with the following estimate:
\[
\Prob_{\vzv \sim \rho^m} \left[ |L(\wh{\theta_{\vzv}}) - L(\theta^*)| \leq \eta \right] \geq 1 - \min\{1,e^{c_1 \eta^{-1/s'} - c_2  m\eta^2}\}.
\]
As a consequence, 
\[
\begin{aligned}
\mathbb{E}_{\vzv \sim \rho^m} \left[ |L(\hU,\BU) - L(\hOp,\BOp)|\right] &= \int_0^\infty \Prob_{\vzv \sim \rho^m} \left[ |L(\hU,\BU) - L(\hOp,\BOp)| > \eta \right]d\eta \\
&\leq \int_0^\infty \min\{1,e^{c_1 \eta^{-1/s'} - c_2  m\eta^2}\} d\eta.
\end{aligned}
\]
Notice that $1 \leq e^{c_1 \eta^{-1/s'} - c_2  m\eta^2}$ when $c_1 \eta^{-1/s'} \geq c_2m \eta^2$, namely when $\eta \leq \wh{\eta}(m)=\left(\frac{c_1}{c_2 m}\right)^{\frac{1}{2+1/s'}}$. Thus,
\[
\begin{aligned}
\mathbb{E}_{\vzv \sim \rho^m} &\left[ |L(\hU,\BU) - L(\hOp,\BOp)|\right]  \leq  \wh{\eta}(m) + \int_{\wh{\eta}(m)}^\infty e^{c_1 \eta^{-1/s'} - c_2  m\eta^2} d\eta \\
%\lesssim \left(\frac{1}{m}\right)^{\frac{1}{2+1/s'}} + \frac{1}{\sqrt{m}}, \\
& \qquad \leq  \wh{\eta}(m) + \int_{\wh{\eta}}^\infty e^{c_1 \wh{\eta}^{-1/s'} - c_2  m\eta^2} d\eta  = \left[\begin{aligned}
& \quad c_1 \wh{\eta}^{-1/s'} - c_2  m\eta^2 = -\beta^2 \\
%\eta = \frac{\sqrt{c_1 \wh{\eta}^{-1/s'} + \beta^2}}{\sqrt{c_2 m}} \quad
&d\eta = \frac{1}{\sqrt{c_2 m}}\frac{\beta}{\sqrt{c_1 \wh{\eta}^{-1/s'} + \beta^2}}d\beta\leq\frac{1}{\sqrt{c_2 m}}d\beta
\end{aligned} \right] \\
& \qquad \leq  \wh{\eta}(m) + \frac{1}{\sqrt{c_2 m}} \int_{0}^\infty e^{-\beta^2} d\beta  \lesssim \left(\frac{1}{m}\right)^{\frac{1}{2+1/s'}} + \frac{1}{\sqrt{m}},
\end{aligned}
\]
 and the leading order is $\left(\frac{1}{m}\right)^{\frac{1}{2+1/s'}}$, which can be rewritten as $\left(\frac{1}{\sqrt{m}}\right)^{1-\frac{1}{2s'+1}}$, and converges to $\frac{1}{\sqrt{m}}$ for large values of $s'$ (namely, of $s$).

Finally, in Table \ref{tab:1} we report the numerical values of the excess risk $|L(\hU,\BU)-L(\hOp,\BOp)|$ and $|L(\hS,\BS)-L(\hOp,\BOp)|$ associated with all the studied cases.
\begin{table}%
\begin{tabular}{l|ccccccc}
& \multicolumn{7}{c}{Sample size, $m$} \\
\multicolumn{1}{c|}{Model (a)}  & $3000$  & $6463$ & $13925$ & $30000$ & $64633$ & $139248$ & $300000$\\
\hline
$N=64$, unsup. &  $0.00174$  &  $0.00119$  &  $0.00076$  &  $0.00056$  &  $0.00038$  &  $0.00025$  &  $0.00018$ \\
$N=64$,  sup.  &  $0.00398$  &  $0.00233$  &  $0.00156$  &  $0.00111$  &  $0.00067$  &  $0.00044$  &  $0.00031$ \\
$N=256$, unsup. & $0.00195$  &  $0.00131$  &  $0.00086$  &  $0.00063$  &  $0.00040$  &  $0.00029$  &  $0.00020$ \\ 
$N=256$,  sup. &  $0.01369$  &  $0.00485$  &  $0.00246$  &  $0.00134$  &  $0.00092$  &  $0.00052$  &  $0.00037$ 
\end{tabular}
\\
\begin{tabular}{l|ccccccc}
%& \multicolumn{7}{c}{Sample size, $m$} \\
%\multicolumn{1}{c|}{Case (b)} & $3000$  & $6463$ & $13925$ & $30000$ & $64633$ & $139248$ & $300000$ \\

\multicolumn{1}{c|}{Model (b)}& & & & & & & \\
\hline
$N=64$, unsup. &  $0.00177$  &  $0.00129$  &  $0.00082$  &  $0.00053$  &  $0.00034$  &  $0.00023$  &  $0.00019$ \\
$N=64$,   sup. &  $0.00380$  &  $0.00236$  &  $0.00158$  &  $0.00103$  &  $0.00058$  &  $0.00044$  &  $0.00032$ \\
$N=256$, unsup. & $0.00199$  &  $0.00126$  &  $0.00094$  &  $0.00056$  &  $0.00044$  &  $0.00028$  &  $0.00019$ \\ 
$N=256$,  sup. &  $0.01449$  &  $0.00487$  &  $0.00250$  &  $0.00142$  &  $0.00086$  &  $0.00058$  &  $0.00035$
\end{tabular}
\begin{tabular}{l|ccccccc}
%& \multicolumn{7}{c}{Sample size, $m$} \\
%\multicolumn{1}{c|}{Case (c)} & $3000$  & $6463$ & $13925$ & $30000$ & $64633$ & $139248$ & $300000$ \\
\multicolumn{1}{c|}{Model (c)} & & & & & & & \\
\hline
$N=64$,  unsup.&  $0.00188$  &  $0.00125$  &  $0.00083$  &  $0.00056$  &  $0.00039$  &  $0.00027$  &  $0.00018$ \\
$N=64$,    sup. &  $0.00407$  &  $0.00240$  &  $0.00154$  &  $0.00105$  &  $0.00069$  &  $0.00044$  &  $0.00028$ \\
$N=256$, unsup.&  $0.00190$  &  $0.00134$  &  $0.00088$  &  $0.00057$  &  $0.00040$  &  $0.00028$  &  $0.00019$ \\
$N=256$,   sup. &  $0.01434$  &  $0.00503$  &  $0.00248$  &  $0.00135$  &  $0.00089$  &  $0.00052$  &  $0.00036$
\end{tabular}
\vspace{5mm}
\caption{Tabulated values of the excess risks associated with Figure \ref{fig:2}, computed at two discretization levels and in three different statistical setups: Gaussian variable $x$ and (a) uniform white noise $\varepsilon$, (b) Gaussian white noise $\varepsilon$, and (c) white noise $\varepsilon$ uniformly distributed w.r.t. the Haar wavelet transform.}
\label{tab:1}
\end{table}

\subsection{An ill-posed inverse problem: deconvolution of 1D signals}
\label{app:deconv}
We provide a numerical verification of the estimates of Theorems~\ref{thm:gener_sup} and \ref{thm:gener_unsup} for a 1D deconvolution problem, extending the experiments of section \ref{sec:numerics} to the case of an ill-posed operator $A$. We consider again $X = Y = L^2(\T^1)$, and introduce the convolution operator $(Ax)(t) = (k \ast x)(t) = \int_{\T} k(t -\tau) x(\tau) d\tau$. This operation can be used to describe the blurring of one-dimensional signals, the function $k$ being the convolutional filter, or point spread function. In our experiments, we consider $k(t) = \upchi_{[-L,L]}(t)$, the indicator function of the interval $[-L,L]$, and set $L=0.02$. Such a kernel $k$ can be referred to as the average filter. When discretizing the interval $\T^1$ with $N$ 1D-pixels, the operator $A$ reduces to a discrete (periodic) convolution with a constant vector $\mathbf{k}$, whose number of entries is $LN$. Both at a continuous and at a discrete level, the deconvolution problem is known to be ill-posed (see, e.g., \cite{mueller2012linear}), and the smallest singular value of the discretized operator vanishes as $N$ grows. Nevertheless, we expect to observe the same generalization bounds as in the denoising case.
\par
We replicate the same experiments as in section \ref{sec:numerics}, assuming that $x$ is a random Gaussian variable (with mean $\mu$ and covariance $\Sx$ as reported in section \ref{sec:numerics}) and $\varepsilon$ is white uniform noise with covariance $\Se = \sigma^2 I$. We fix a noise level of $2.5\%$ by setting $\sigma$ equal to the $2.5\%$ of the peak value of the average signal. The results of the numerical experiments are reported in Figure \ref{fig:4}. We observe that in both scenarios the decay of the excess risk is of the order $1/\sqrt{m}$, and the unsupervised technique still provides (slightly) better results, which in particular are not affected by the increased ill-posedness of the operator at a refined scale.
\begin{figure}
\centering
\begin{tabular}{ccc}
\includegraphics[width=0.32\columnwidth,trim={1cm 0cm 1cm 0cm}]{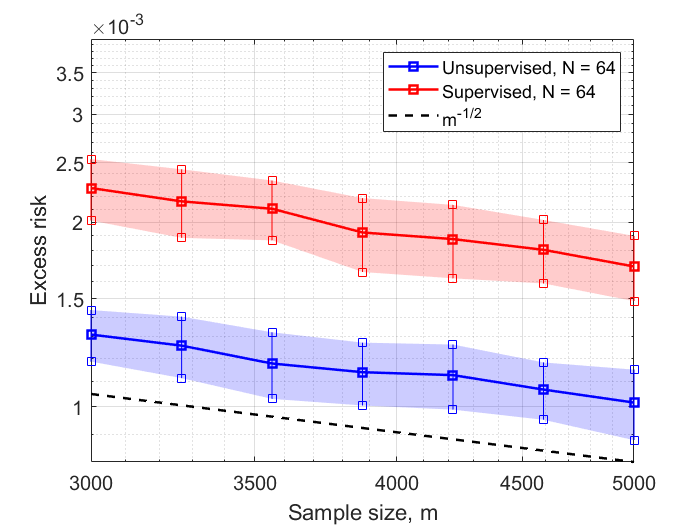}
& \includegraphics[width=0.32\columnwidth,trim={1cm 0cm 1cm 0cm}]{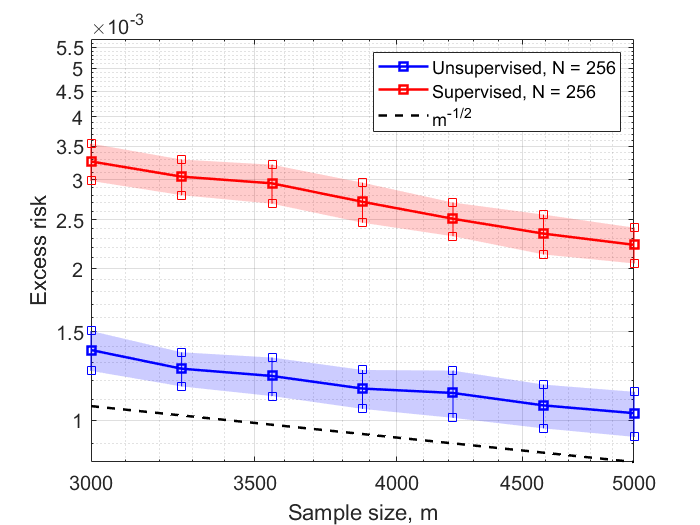} 
& \includegraphics[width=0.32\columnwidth,trim={1cm 0cm 1cm 0cm}]{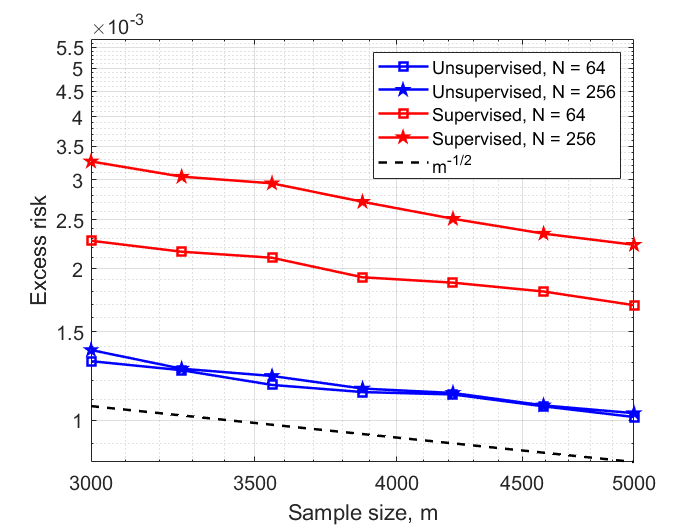} \\
(a) & (b) & (c)
\end{tabular}
\caption{Decay of the excess risks \textcolor{red}{$|L(\thetaS)-L(\thetaOp)|$} and \textcolor{blue}{$|L(\thetaU)-L(\thetaOp)|$} (with standard deviation error bars) with two different discretization sizes, $N=64$ (a) and $N=256$ (b), and comparison (c).}%
\label{fig:4}%
\end{figure}

% \section*{References for the additional material}
% \begin{enumerate}[{[}1{]}]
% \setcounter{enumi}{33}
%     \item P.~L. Bartlett and S.~Mendelson.
% \newblock Rademacher and gaussian complexities: Risk bounds and structural
%   results.
% \newblock \emph{Journal of Machine Learning Research}, 3\penalty0
%   (Nov):\penalty0 463--482, 2002.
%     \item R.~Bhatia.
% \newblock \emph{Matrix analysis}, vol. 169.
% \newblock Springer Science \& Business Media, 2013.
%     \item H.~Koch, A.~R{\"u}land, and M.~Salo.
% \newblock On instability mechanisms for inverse problems.
% \newblock \emph{arXiv preprint arXiv:2012.01855}, 2020.
%     \item V.~Koltchinskii.
% \newblock \emph{Oracle Inequalities in Empirical Risk Minimization and Sparse
%   Recovery Problems: Ecole d'Et{\'e} de Probabilit{\'e}s de Saint-Flour
%   XXXVIII-2008}, volume 2033.
% \newblock Springer Science \& Business Media, 2011.

% \end{enumerate}

\end{document}